\documentclass{article}

\PassOptionsToPackage{numbers, compress}{natbib}

 \usepackage[preprint]{neurips_2026}

\usepackage{amsmath}
\usepackage{amssymb}
\usepackage{amsfonts}
\usepackage{amsthm}
\usepackage{mathtools}
\usepackage{bm}
\usepackage{bbm}
\usepackage{dsfont}
\usepackage{cases}
\usepackage{nicefrac}
\usepackage{bbding}
\usepackage{pifont}

\usepackage[utf8]{inputenc} %
\usepackage[T1]{fontenc}    %
\usepackage{textcomp}

\usepackage{etoc}  %

\usepackage{graphicx}
\usepackage{wrapfig}
\usepackage{float}
\usepackage{caption}
\usepackage{subcaption}     %
\usepackage{array}
\usepackage{multirow}
\usepackage{multicol}
\usepackage{booktabs}
\usepackage{makecell}
\usepackage{colortbl}

\usepackage{xr-hyper}
\usepackage{hyperref}       %
\usepackage[capitalise]{cleveref}
\crefname{equation}{}{}
\crefname{figure}{Fig.}{Figs.}
\crefname{section}{Sec.}{Secs.}
\crefname{appendix}{App.}{Apps.}
\crefname{table}{Tab.}{Tabs.}
\usepackage{url}

\usepackage{tikz}
\usetikzlibrary{positioning, calc, matrix} 

\usepackage[makeroom]{cancel}

\hypersetup{
  final,
  colorlinks,
  linkcolor=ourred,
  citecolor=ourblue,
  urlcolor=url,
}

\usepackage{algorithm}
\usepackage{algpseudocode}
\usepackage[page,toc,titletoc,title]{appendix}
\usepackage{microtype}
\usepackage{xargs}
\usepackage{xspace}
\usepackage[normalem]{ulem}
\usepackage{soul}
\usepackage{fancyvrb}

\usepackage{lipsum}
\usepackage{blindtext}

\usepackage{xcolor}
\definecolor{ourblue}{rgb}{0.368,0.507,0.71}
\definecolor{ourorange}{rgb}{0.881,0.611,0.142}
\definecolor{ourgreen}{rgb}{0.56,0.692,0.195}
\definecolor{ourgreen2}{rgb}{0.46,0.792,0.195}
\definecolor{ourred}{rgb}{0.923,0.386,0.209}
\definecolor{ourviolet}{rgb}{0.528,0.471,0.701}
\definecolor{ourbrown}{rgb}{0.772,0.432,0.102}
\definecolor{ourlightblue}{rgb}{0.364,0.619,0.782}
\definecolor{ourbrightblue}{RGB}{93,135,237}
\definecolor{ourdarkgreen}{rgb}{0.572,0.586,0.}

\definecolor{ourcyan2}{rgb}{0.125,0.722,0.804}
\definecolor{ourred2}{rgb}{0.863,0.184,0.047}
\definecolor{ouryellow2}{cmyk}{0,0.16,1.0,0.07}
\definecolor{ourviolet2}{cmyk}{0.55,0.56,0,0.47}
\definecolor{ourorange2}{cmyk}{0,0.46,0.89,0.11}
\definecolor{grayseq}{RGB}{120,120,120}

\definecolor{discretecolor}{RGB}{11,83,150}
\definecolor{gaussiancolor}{RGB}{230,145,56}
\definecolor{argmaxcolor}{RGB}{154,0,0}

\definecolor{customred}{RGB}{169, 45, 59}
\definecolor{url}{HTML}{d95225}

\definecolor{olivegreen}{RGB}{165,185,115}
\definecolor{olivegreendark}{RGB}{145,165,95}

\definecolor{maroon}{RGB}{140, 45, 45}

\usepackage[most]{tcolorbox} %
\usepackage{soul}

\makeatletter
\DeclareRobustCommand\onedot{\futurelet\@let@token\@onedot}
\def\@onedot{\ifx\@let@token.\else.\null\fi\xspace}

\makeatother

\newtcolorbox{corollarybox}{
  breakable,
  enhanced,
  colback=ourbrightblue!12,
  colframe=ourbrightblue!80!black,
  left=1pt,
  right=1pt,
  top=1pt,
  bottom=1pt,
  boxrule=1pt
}

\newtcolorbox[auto counter]{findingbox}[1][]{
  breakable,
  enhanced,
  colback=ourbrightblue!8,
  colframe=ourbrightblue!80!black,
  left=2pt,
  right=2pt,
  top=1pt,
  bottom=1pt,
  boxrule=1pt,
  before skip=0.7em,
  after skip=0.7em,
  before upper={\textbf{Finding~\thetcbcounter: } },
  #1
}

\newtcolorbox{samplebox}[1][]{
  breakable,
  enhanced,
  colback=ourorange!6,
  colframe=ourorange!75!black,
  left=6pt, right=6pt, top=4pt, bottom=4pt,
  boxrule=0.6pt,
  arc=2.5pt,
  fonttitle=\bfseries\small,
  coltitle=black,
  colbacktitle=ourorange!22,
  attach boxed title to top left={yshift=-2.5mm, xshift=5mm},
  boxed title style={size=small, sharp corners=all, boxrule=0.3pt, arc=2pt, colframe=ourorange!75!black},
  fontupper=\small,
  title=Sample,
  #1
}

\definecolor{justingreen}{rgb}{0.0078, 0.4431, 0.2823}

\usepackage{pifont}
\newcommand{\cmark}{\textcolor{green!70!black}{\ding{51}}}
\newcommand{\ccross}{\textcolor{red!70!black}{\ding{55}}}

\definecolor{topone}{RGB}{112, 149, 239}   %
\definecolor{toptwo}{RGB}{174, 195, 246}   %
\definecolor{topthree}{RGB}{223, 231, 251} %

\newcommand{\topone}[1]{\cellcolor{topone}#1}
\newcommand{\toptwo}[1]{\cellcolor{toptwo}#1}
\newcommand{\topthree}[1]{\cellcolor{topthree}#1}

\usepackage{arydshln}

\usepackage{amsthm}
\usepackage{thmtools}
\usepackage{thm-restate}

\crefname{theorem}{Thm.}{Thms.}
\newtheorem{lemma}{Lemma}
\crefname{lemma}{Lem.}{Lems.}

\crefname{assumption}{Assump.}{Assumps.}

\crefname{proposition}{Prop.}{Props.}

\crefname{corollary}{Cor.}{Cors.}
\newtheorem{definition}{Definition}
\crefname{definition}{Def.}{Defs.}
\newtheorem{remark}{Remark}
\crefname{remark}{Rmk.}{Rmks.}
\crefname{algocf}{Alg.}{Algs.}
\crefname{algorithm}{Alg.}{Algs.}

\usepackage{titlesec}

\titlespacing*{\section}{0pt}{0.5\baselineskip}{0.2\baselineskip}
\titlespacing*{\subsection}{0pt}{0.3\baselineskip}{0.1\baselineskip}
\titlespacing*{\subsubsection}{0pt}{0.2\baselineskip}{0.1\baselineskip}

\newcommand{\replaidscx}{20}
\newcommand{\replaidnscx}{27}

\newcommand{\replaidscppl}{22.1}
\newcommand{\replaidnscppl}{23.6}

\usepackage{amsmath,amsfonts,bm}

\DeclareMathOperator*{\argmax}{argmax}
\DeclareMathOperator*{\argmin}{argmin}
\newcommand{\CE}{\operatorname{CE}}
\newcommand{\Var}{\operatorname{Var}}
\newcommand{\KL}{\operatorname{KL}}
\newcommand{\MSE}{\operatorname{MSE}}
\newcommand{\MMSE}{\operatorname{MMSE}}

\newcommand{\sigmoid}{\operatorname{sigmoid}}

\newcommand{\tsum}{{\textstyle\sum}}

\newcommand{\normal}{\mathcal{N}}
\newcommand{\unif}{\mathcal{U}}

\newcommand{\iidsim}{\stackrel{\mathrm{i.i.d.}}{\sim}}

\newcommand{\R}{\mathbb{R}}
\newcommand{\E}{\mathbb{E}}

\newcommand{\cL}{\mathcal{L}}

\newcommand{\SNR}{{\mathrm{SNR}}}

\newcommand{\NELBO}{{\mathrm{NELBO}}}
\renewcommand{\sc}{{\mathrm{sc}}} %
\newcommand{\data}{{\mathrm{data}}}

\newcommand{\ee}{{\mathrm{e}}}
\newcommand{\zero}{{\mathbf 0}}

\renewcommand{\v}{{\mathbf v}}
\newcommand{\x}{{\mathbf x}}
\newcommand{\z}{{\mathbf z}}
\newcommand{\e}{{\mathbf e}}
\newcommand{\D}{{\mathbf D}}
\newcommand{\I}{{\mathbf I}}
\newcommand{\Emb}{{\mathbf E}}
\newcommand{\beps}{{\boldsymbol{\epsilon}}}

\newcommand{\bxi}{{\boldsymbol{\xi}}}
\renewcommand{\d}{{\mathrm{d}}}                    %

\usepackage{fancyvrb}
\usepackage{pythonhighlight}                       %

\usepackage[utf8]{inputenc} %
\usepackage[T1]{fontenc}    %
\usepackage{hyperref}       %
\usepackage{url}            %
\usepackage{booktabs}       %
\usepackage{amsfonts}       %
\usepackage{nicefrac}       %
\usepackage{microtype}      %
\usepackage{xcolor}         %
\usepackage{enumitem} %
\setlist[itemize,enumerate]{leftmargin=1.5em, topsep=1pt, itemsep=2pt, parsep=0pt}

\title{Continuous Diffusion Scales Competitively with Discrete Diffusion for Language}

\newcommand*\samethanks[1][\value{footnote}]{\footnotemark[#1]}

\author{%
  Zhihan Yang \\
   NVIDIA \& Cornell \\
  \texttt{zhihany@cs.cornell.edu} \\
  \And
  Wei Guo \\
  NVIDIA \& Georgia Tech \\
  \texttt{wei.guo@gatech.edu} \\
  \And
  Shuibai Zhang \\
  UW-Madison\\
  \texttt{shuibai@cs.wisc.edu} \\
  \And
  Subham Sekhar Sahoo \\
  MBZUAI-IFM\\
  \texttt{subham.sahoo@mbzuai.ac.ae} \\
  \And
  Yongxin Chen \\
  NVIDIA \& Georgia Tech \\
  \texttt{yongchen@gatech.edu} \\
  \And
  Arash Vahdat \\
  NVIDIA \\
  \texttt{avahdat@nvidia.com} \\
  \And
  Morteza Mardani\thanks{Equal advising; the senior authors are ordered alphabetically.} \\
  NVIDIA \\
  \texttt{mmardani@nvidia.com} \\
  \And
  John Thickstun\samethanks \\
  Cornell \\
  \texttt{jthickstun@cornell.edu} \\
}

\begin{document}

\maketitle

\etocdepthtag.toc{main}  %

\begin{abstract}
While diffusion has drawn considerable recent attention from the language modeling community, continuous diffusion has appeared less scalable than discrete approaches. To challenge this belief we revisit Plaid, a likelihood-based continuous diffusion language model (DLM), and construct \textit{RePlaid} by aligning the architecture of Plaid with modern discrete DLMs. In this unified setting, we establish the first scaling law for continuous DLMs that rivals discrete DLMs: RePlaid exhibits a compute gap of only $\replaidscx\times$ compared to autoregressive models, outperforms Duo while using fewer parameters, and outperforms MDLM in the over-trained regime. We benchmark RePlaid against recent continuous DLMs: on OpenWebText, RePlaid achieves a new state-of-the-art PPL bound of $\replaidscppl$ among continuous DLMs and superior generation quality. These results suggest that continuous diffusion, when trained via likelihood, is a highly competitive and scalable alternative to discrete DLMs. Moreover, we offer theoretical insights to understand the advantage of likelihood-based training. We show that optimizing the noise schedule to minimize the ELBO's variance naturally yields linear cross-entropy (information loss) over time. This evenly distributes denoising difficulty without any case-specific time reparameterization. In addition, we find that optimizing embeddings via likelihood creates structured geometries and drives the most significant likelihood gain.

\end{abstract}

\section{Introduction}
\label{sec:intro}

Discrete diffusion language models (DLMs) show significant promise, increasingly rivaling the performance and scalability of autoregressive (AR) models across various benchmarks \citep{khanna2025mercury,song2025seed}. Despite these successes, continuous diffusion remains a compelling alternative due to its inherent potential for controllability and acceleration. The smooth geometry of a continuous latent space enables structured editing \citep{meng2022sdedit,chung2023diffusion}
and allows for the application of efficient ODE-based solvers \citep{zhang2023fast,lu2022dpmsolver}, facilitating distillation of complex sampling trajectories into high-fidelity, single-step generators \citep{salimans2022progressive,song2023consistency}. Unlike discrete diffusion, which often suffers from severe quality degradation under limited sampling budgets, continuous diffusion offers a principled path toward efficient text generation by enabling superior trade-offs between computational cost and sample quality \citep{kingma2021variational, karras2022elucidating}.

While continuous DLMs present unique advantages, previous work finds a significant training scalability gap compared to discrete DLMs. Previous studies show empirical evidence that continuous diffusion underperforms discrete diffusion and AR models when evaluated at equivalent training FLOPs and parameter counts~\cite{austin2021structured, gulrajani2023plaid}; the original Plaid study reported a significant $64\times$ compute overhead compared to autoregressive models to achieve matching likelihood \cite{gulrajani2023plaid}. This figure is frequently cited as evidence of the poor scalability of continuous DLMs. By revisiting this principled likelihood-based paradigm, we aim to determine whether this performance gap is an inherent limitation of continuous diffusion or a consequence of incompatible scaling analyses.

To address this question, we introduce \textit{RePlaid}, a revised version of Plaid featuring a transformer architecture carefully aligned with modern discrete DLM practices. While the original $64\times$ gap suggested continuous diffusion was impractical at scale, those findings were obtained under a different setup from the protocols used to establish current discrete diffusion scaling laws. By adopting the exact protocol from \citet{sahoo2026scaling}---including the same dataset, optimization hyperparameters, and compute budget---we perform the first unified scaling comparison between continuous and discrete DLMs. Crucially, in this controlled environment, the performance gap narrows to $\replaidscx\times$ for RePlaid versus AR, placing continuous diffusion on par with discrete diffusion and challenging the narrative of its inherent unscalability.

Furthermore, we find that RePlaid achieves consistent improvements in generation quality over other recent continuous models like FLM \cite{lee2026flow} and LangFlow \cite{chen2026langflow}. In contrast to other recent work, RePlaid's training objective (like Plaid) is explicitly derived from a \textit{likelihood bound (ELBO)}, providing a theoretically grounded framework for training that heuristic cross-entropy or flow-matching methods lack. We attribute the success of RePlaid to two primary factors: $(i)$ optimizing the noise schedule via ELBO variance naturally recovers a near-linear cross-entropy schedule that evenly distributes denoising difficulty without heuristic reparameterization, and $(ii)$ ELBO-based embedding optimization inherently regularizes the latent geometry to improve likelihood, creating a structured, low-entropy space that prevents potential token dispersion in cross-entropy-based training.

Our main contributions are summarized as follows:

\begin{itemize}
    \item \textbf{RePlaid \& Benchmarking:} We introduce RePlaid, a modernized likelihood-based DLM achieving a state-of-the-art $\replaidscppl$ perplexity among existing continuous DLMs on OpenWebText.

    \item \textbf{Unified Scaling Analysis:} We present the first unified scaling law comparison between continuous and discrete DLMs, showing that continuous models scale on par with discrete counterparts when properly aligned.

    \item \textbf{Theoretical Insights:} We show that the ELBO objective: $(i)$ naturally recovers a linear cross-entropy noise schedule, eliminating heuristic time reparameterizations, and $(ii)$ inherently regularizes embedding geometry for likelihood improvement and prevents token dispersion. %
\end{itemize}
\vspace{-0.5em}

\section{Background: Plaid---A Likelihood-Based Continuous DLM}
\label{sec:background}

\textbf{Plaid}~\citep{gulrajani2023plaid} is a Variational Diffusion Model (VDM)~\citep{kingma2021variational} for text.
Let $[V] := \{1, \ldots, V\}$ be the set of vocabulary with size $V$. A length-$L$ sequence $\x \in [V]^L$ is identified with a matrix in $\{0, 1\}^{L \times V}$ whose $l$-th row $\x^l$ is the one-hot vector of the $l$-th token. Plaid defines the embedded sequence $\e := \x \Emb \in \R^{L \times d_e}$, where $\Emb\in\R^{V \times d_e}$ is a learnable token-embedding matrix with each row constrained to unit Euclidean length; the $l$-th row of $\e$, $\e^l$, is the embedding of the non-zero index of $\x^l$. Unless otherwise stated, we use low-dimensional embeddings ($d_e = 16$), as in Plaid.

\paragraph{Forward and reverse processes.} The forward process $q$ is Gaussian noising on embedding $\e$:
\begin{align}\label{eq:vdm-forward}
q (\z_t \mid \x) = \normal(\alpha_t \e, \sigma_t^2 \I),\quad t\in[0,1],
\end{align}
where $\alpha_t,\sigma_t>0$ are smooth scalar functions of $t$. VDMs assume a variance-preserving (VP) process, $\alpha_t^2+\sigma_t^2=1$, yielding a strictly decreasing signal-to-noise ratio $\SNR(t):=\alpha_t^2/\sigma_t^2$. The true reverse posterior $q(\z_s \mid \z_t, \x)$ for $s<t$ is then
\begin{align}\label{eq:vdm-reverse}
q(\z_s \mid \z_t, \x) = \normal\big((\alpha_{t|s}\sigma^2_s/\sigma_t^2) \z_t + (\alpha_s \sigma^2_{t|s}/\sigma^2_t) \e, (\sigma^2_{t|s} \sigma^2_s/\sigma^2_t) \I\big),
\end{align}
where $\alpha_{t|s}:=\alpha_t/\alpha_s$ and $\sigma^2_{t|s}:=\sigma_t^2-\alpha_{t|s}^2\sigma_s^2$. See \citep[App. A.2]{kingma2021variational} for the proof.

\paragraph{Training.} Let $\x_\theta(\z_t, t): \R^{L\times d_e} \times [0,1] \rightarrow (\Delta^{V})^L\subset\R^{L\times V}$ denote a time-conditioned denoising model outputting a categorical distribution over the vocabulary for every position (i.e., each row is a probability vector on the simplex $\Delta^V$). Before softmax, Plaid augments the final logit head output of $\x_\theta$ with output prior logits, a closed-form Gaussian log-density that helps focus the denoiser's prediction to plausible tokens given $\z_t$. Plaid parameterizes the reverse process by plugging the denoiser's prediction into~\cref{eq:vdm-reverse}:
\begin{align}\label{eqn:denoising}
    p_\theta(\z_s\mid\z_t) = q(\z_s\mid\z_t, \x = \x_\theta(\z_t, t)).
\end{align}
With prior $p(\z_1)=\normal(\zero,\I)$, this yields the Negative Evidence Lower Bound (NELBO) containing respectively \textit{prior loss}, \textit{reconstruction loss}, and \textit{diffusion loss} (see \cref{supp:nelbo-derivation} for the derivation):
\begin{align}
-\log p_\theta(\x) \leq \cL_\NELBO(\x):=&\KL(q(\z_1 \mid \x) \,\|\, p(\z_1)) + \E_{\z_0 \sim q(\z_0 \mid \x)}\big[\tsum_{l=1}^L- \log \langle \x_\theta^l(\z_0, 0), \x^l \rangle\big] \nonumber\\
& -\tfrac{1}{2}\ \E_{t\sim\unif[0,1], \z_t\sim q(\z_t \mid \x)}[\SNR'(t) \| \e_\theta(\z_t, t) - \e \|^2], \label{eq:plaid-nelbo}
\end{align}
where $\e_\theta(\z_t, t):=\x_\theta(\z_t, t) \Emb$ is the predicted clean embedding, and $\langle\cdot,\cdot\rangle$ denotes the inner product. During training, Plaid adaptively allocates a fraction of each batch to the reconstruction loss (Monte Carlo over $\z_0$) and the remainder to the diffusion loss (Monte Carlo over $t$ and $\z_t$); the prior loss uses the whole batch. See~\cref{supp:sec:training-algo} for the training algorithm.

\paragraph{Learnable noise schedule.} Plaid inherits the VDM parameterization~\citep{kingma2021variational} of the noise schedule $\gamma(t)=\gamma_0+(\gamma_1-\gamma_0)\tilde{\gamma}(t)$, with learnable scalar \textit{endpoints} $\gamma_0<\gamma_1$ and \textit{interior shape} $\tilde{\gamma}:[0,1]\rightarrow[0,1]$ modeled by a monotonically increasing neural net. This gives $\alpha^2_t = \sigmoid(-\gamma(t))$, $\sigma^2_t = \sigmoid(\gamma(t))$, and $\SNR(t) = \ee^{-\gamma(t)}$. The endpoints and interior shape are trained to minimize the diffusion loss and its Monte-Carlo variance respectively (\cref{fig:diffusion-loss-code} for pseudo-code). As with VDMs, $\gamma_0$ and $\gamma_1$ are always learned. Hence we use the term \textit{noise schedule} to refer to the interior shape $\tilde\gamma(t)$.

\paragraph{Beyond VDMs.} On top of VDMs, Plaid adds two features besides the categorical reparameterization $\e_\theta = \x_\theta \Emb$. First, it uses a learnable encoding via the embedding matrix $\Emb$. Second, it adds self-conditioning~\citep{chen2023analog}: during training, for $25\%$ of each batch the denoiser performs an initial gradient-free forward pass to estimate the clean data, which is fed back as an additional conditioning input for the actual prediction; at validation and sampling time, self-conditioning is enabled for all examples.

\paragraph{Cheaper embedding operations.} Although largely overlooked in the current literature, Plaid offers substantially cheaper embedding operations compared to DLMs that inject Gaussian noise into high-dimensional (e.g., one-hot~\citep{lee2026flow, roos2026categorical, potaptchik2026discrete}) embeddings. For example, projecting $V$-dimensional one-hot embeddings to the transformer hidden size $h$ requires an \texttt{FP32} matrix multiplication of $[L, V] \times [V, h]$. Plaid reduces this to $[L, d_e] \times [d_e, h]$ plus categorical reparameterization $[L, V] \times [V, d_e]$ at the output, which requires $\approx50\times$ fewer FLOPs when $V\approx50$K (e.g., GPT-2~\citep{radford2019language}), $h=768$, and $d_e=16$.

\section{Revisiting Scaling Laws of Continuous DLMs}
\label{sec:scaling}

In this section, we introduce RePlaid: a revised version of Plaid using a transformer architecture that is carefully aligned with modern discrete DLMs. This alignment allows us to benchmark RePlaid against discrete DLMs using the scaling law protocol from~\citet{sahoo2026scaling}. We find that RePlaid's scaling is on par with discrete diffusion: to achieve the same validation loss as AR, RePlaid requires $\replaidscx\times$ compute with self-conditioning, and $\replaidnscx\times$ compute without it. To the best of our knowledge, this is the first unified scaling comparison between continuous and discrete DLMs.

\subsection{RePlaid: Aligning Plaid with Modern Discrete DLM Architecture}\label{subsec:replaid-align}

In \citet{sahoo2026scaling}, MDLM and Duo employ a Diffusion Transformer (DiT)~\citep{peebles2023scalable} equipped with bidirectional attention, RoPE~\citep{su2021roformer}, and AdaLN-Zero~\citep{peebles2023scalable}. The autoregressive (AR) baseline utilizes an identical architecture, differing only in its use of causal attention. For RePlaid, we retain all foundational algorithmic components of Plaid (\cref{sec:background}) while fully aligning its transformer architecture to match the one used by MDLM and Duo. Specifically, this alignment incorporates LayerNorm, MLP biases, GELU(tanh) activations, and AdaLN-Zero modulation with learnable gating for residual connections. In addition, the original Plaid computes its final logit head in \texttt{FP32} due to its interaction with output prior logits (\cref{sec:background}). We remove this substantial numerical confounder for fair comparison. See \cref{supp:sec:align-table} for a detailed pre- and post-alignment comparison on architecture and numerical precision.

\subsection{A Unified Scaling Benchmark for Continuous and Discrete DLMs} 
\label{subsec:scaling-law-setup}

\paragraph{Data.} We utilize the large-scale, deduplicated SlimPajama dataset~\citep{cerebras2023slimpajama}, which provides a sufficiently rich distribution to avoid data repetition during training. We use the Llama-2 tokenizer~\citep{touvron2023llama2} ($V=32{,}001$ with a special mask token), a batch size of $256$, and a sequence length of $2048$ tokens. 

\paragraph{Optimization.} AR, MDLM, and Duo use AdamW with a cosine learning rate schedule decaying from $2\ee{-}4$ to $2\ee{-}5$, alongside $\beta_1{=}0.9$ and $\beta_2{=}0.95$ and $1\%$ learning rate warmup (minimum $100$ steps). Parameters are regularized with a weight decay of $0.1$ without dropout. RePlaid applies this identical configuration, with minor adjustments  to accommodate its learnable noise schedule and its length-normalized, low-dimensional ($d_e=16$) embeddings (\cref{tab:plaid-dit-optimization-scaling}), following Plaid. For reference, optimizing the $231$M MDLM for $1\ee20$ FLOPs requires about $32$ hours on $8$ NVIDIA GB200s.

\paragraph{Counting FLOPs.} We measure non-embedding FLOPs using \texttt{calflops}~\citep{calflops}. Self-conditioning of $25\%$ increases the non-embedding FLOPs by $\approx8.3\%$, which we offset by reducing training steps.

\subsection{IsoFLOP Analysis and Scaling Laws}
\label{subsec:isoflop}

\begin{figure*}[t]
    \centering
    \begin{subfigure}{0.325\linewidth}
        \centering
        \includegraphics[width=\linewidth]{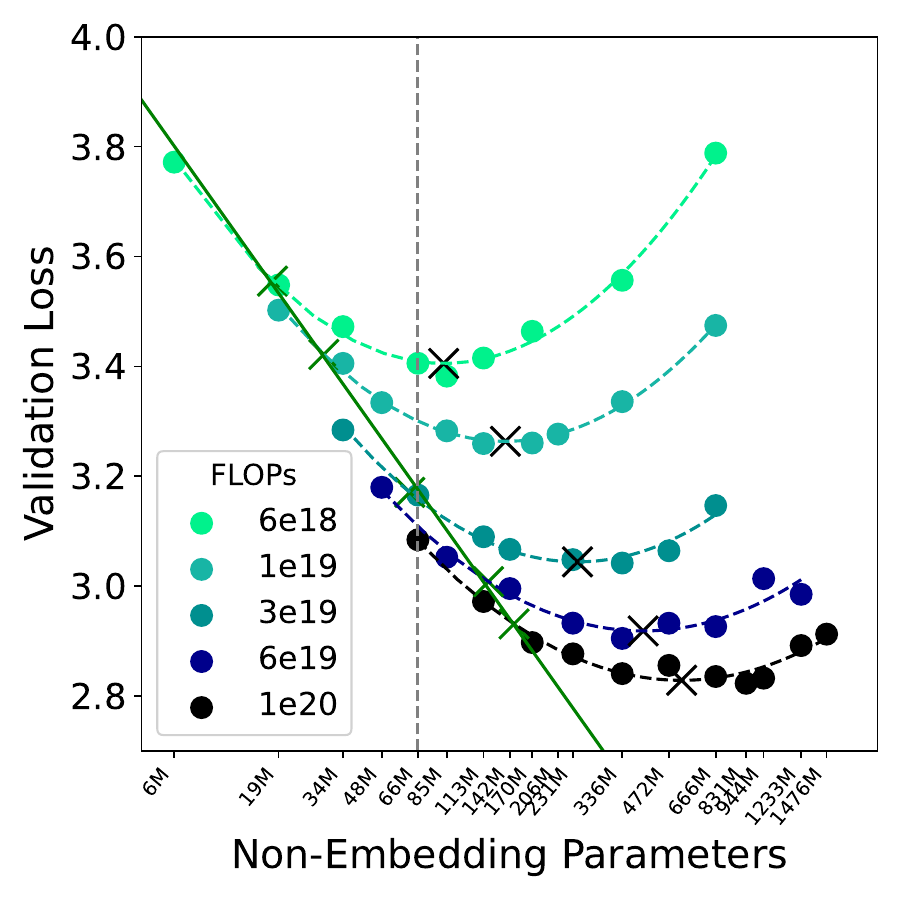}
        \caption{\centering MDLM with\\ low-variance training (low var.)}
        \label{fig:iso_flop_mdlm}
    \end{subfigure}
    \begin{subfigure}{0.325\linewidth}
        \centering
        \includegraphics[width=\linewidth]{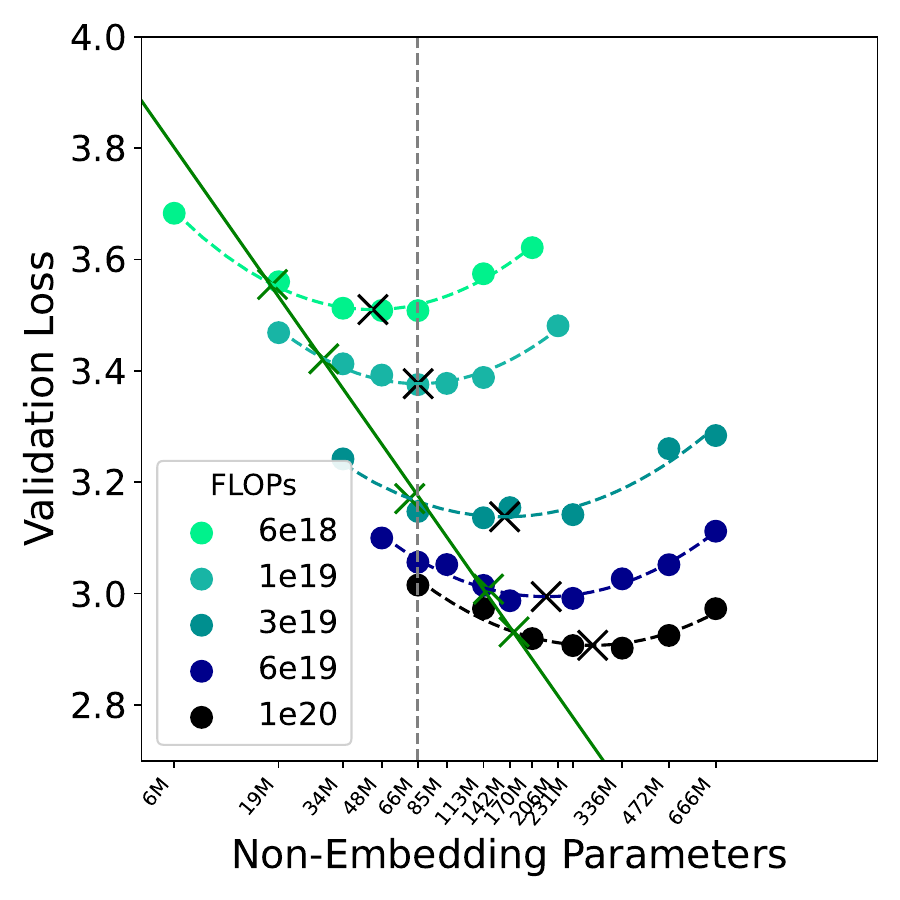}
        \caption{\centering \textsc{RePlaid} with\\ self-conditioning (s.c.)}
        \label{fig:iso_flop_replaid_sc}
    \end{subfigure}
    \begin{subfigure}{0.317\linewidth}
        \centering
        \includegraphics[width=\linewidth]{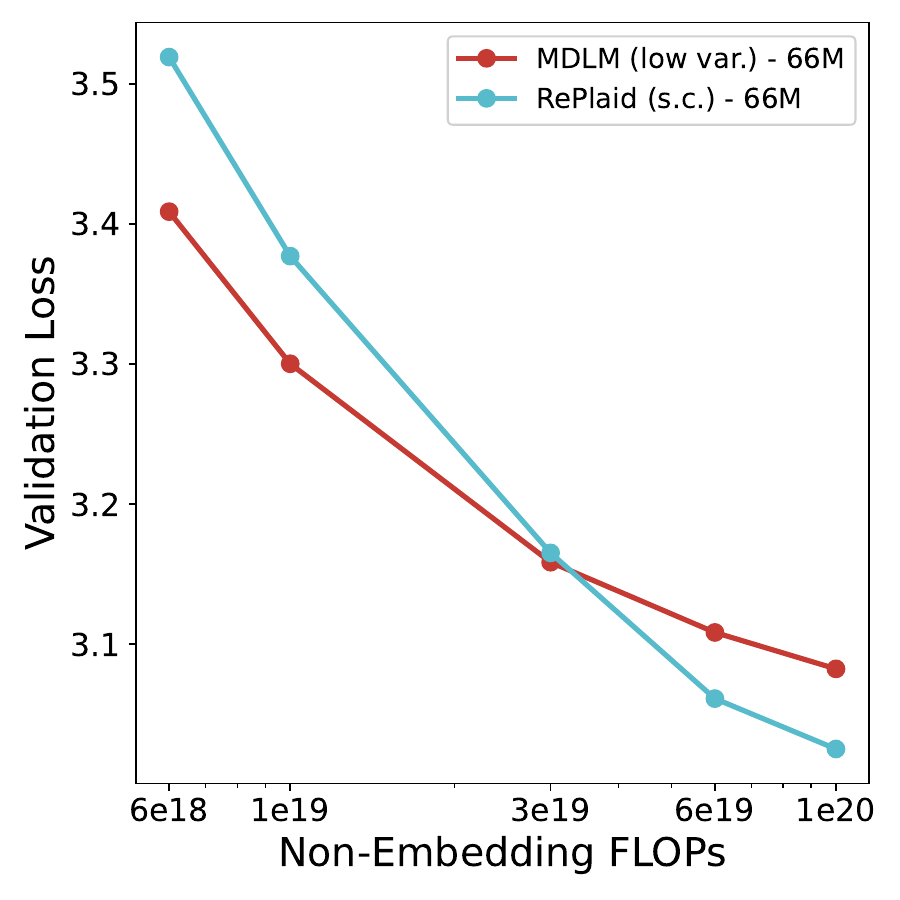}
        \caption{\centering Examples:\\ Over-training at 66M}
        \label{fig:overtrain-66m}
    \end{subfigure}
    \caption{\small \textbf{(a-b)} IsoFLOP curves identify optimal model sizes (black crosses) across fixed compute budgets. The optimal \textsc{RePlaid} (s.c.) loss exhibits power-law scaling, decreasing at a rate comparable to MDLM. To match AR loss, MDLM and \textsc{RePlaid} (s.c.) require $14\times$ and $\replaidscx\times$ the compute, respectively. In the over-trained regime below the {\color[RGB]{80, 150, 63}\textbf{green line}}, \textsc{RePlaid} (s.c.) consistently outperforms MDLM (\cref{subsec:scaling-law-results}). \textbf{(c)} The effect of over-training MDLM versus \textsc{RePlaid} (s.c.) at 66M, corresponding to slicing (a-b) at the vertical dashed lines.}
    \label{fig:isoflop-plots}
    \vspace{-1em}
\end{figure*}

\paragraph{IsoFLOP.} We perform an IsoFLOP analysis~\citep{hoffmann2022training} over compute budgets $C\in\{6\times10^{18}, 1\times10^{19}, 3\times 10^{19}, 6\times 10^{19}, 1\times 10^{20}\}$. For each target budget $C$, we train a set of models with different parameter counts $N$ (\cref{tab:model_sizes}) approximately up to the compute budget, producing (exact or an upper bound on) validation loss $\cL(N; C)$. The number of training tokens $D$ is decided by the number of training steps.

For each fixed $C$, we fit a quadratic function in log-log ($\log \cL$ and $\log N$) space:
\begin{align*}
    \log \cL(N, C) \approx a_C (\log N)^2 + b_C \log N + c_C,
\end{align*}
which are plotted as dotted curves in~\cref{fig:isoflop-plots} for MDLM and RePlaid (s.c.) and~\cref{fig:scaling_law_appendix} for all methods. The coordinates of the minimum for each $C$ are
\begin{align*}
N^*_C := \argmin\nolimits_N \cL(N;C),\quad \cL^*_C := \cL(N^*_C; C).
\end{align*}
These are the compute-optimal model size ($N^*_C$) and the compute-optimal validation loss ($\cL^*_C$) at compute budget $C$, respectively. Pairs of $(N^*_C, \cL^*_C)$ are denoted by black crosses in~\cref{fig:isoflop-plots,fig:scaling_law_appendix}. This protocol is applied to AR, MDLM, Duo, and RePlaid. For diffusion models, we use variational upper bounds (e.g.,~\cref{eq:plaid-nelbo}) on their negative log-likelihoods $- \log p_\theta (\x)$. RePlaid shows well-behaved IsoFLOP curves~(\cref{fig:iso_flop_replaid_sc}), similar to those of AR, MDLM, and Duo (\cref{fig:iso_flop_mdlm} and \cref{fig:scaling_law_appendix}).

\paragraph{Scaling laws.} To obtain the compute-optimal scaling law for a method, we use its pairs $(C_i, \cL^*_{C_i})$ from the IsoFLOP analysis~(\cref{subsec:isoflop}) and fit a power law in log-log coordinates:
\begin{align*}
    (\alpha^*, \beta^*) = \argmin\nolimits_{\alpha, \beta} \tsum_{i=1}^n (\log \cL^*_{C_i} - \alpha \log C_i-\beta)^2.
\end{align*}\cref{fig:likelihood_scaling_law} plots compute-optimal validation loss ($\cL^*_C$) against compute ($C$). We apply a similar power law fit to the compute-optimal model size, $\log N^*_C \approx \alpha' \log C + \beta'$, which is plotted in~\cref{fig:param_scaling_law}.

\subsection{Results}\label{subsec:scaling-law-results}

\begin{figure*}[t]
    \centering
    \begin{subfigure}{0.45\linewidth}
        \centering
        \includegraphics[width=\linewidth]{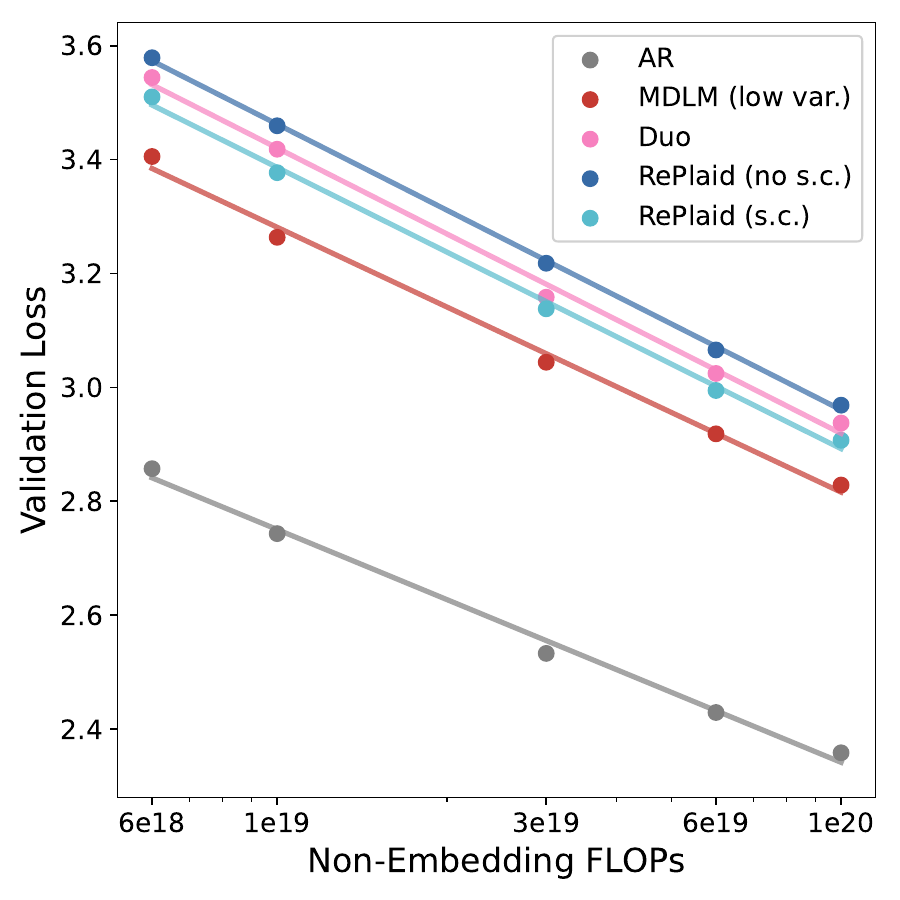}
        \caption{Likelihood scaling law}
        \label{fig:likelihood_scaling_law}
    \end{subfigure}
    \begin{subfigure}{0.45\linewidth}
        \centering
        \includegraphics[width=\linewidth]{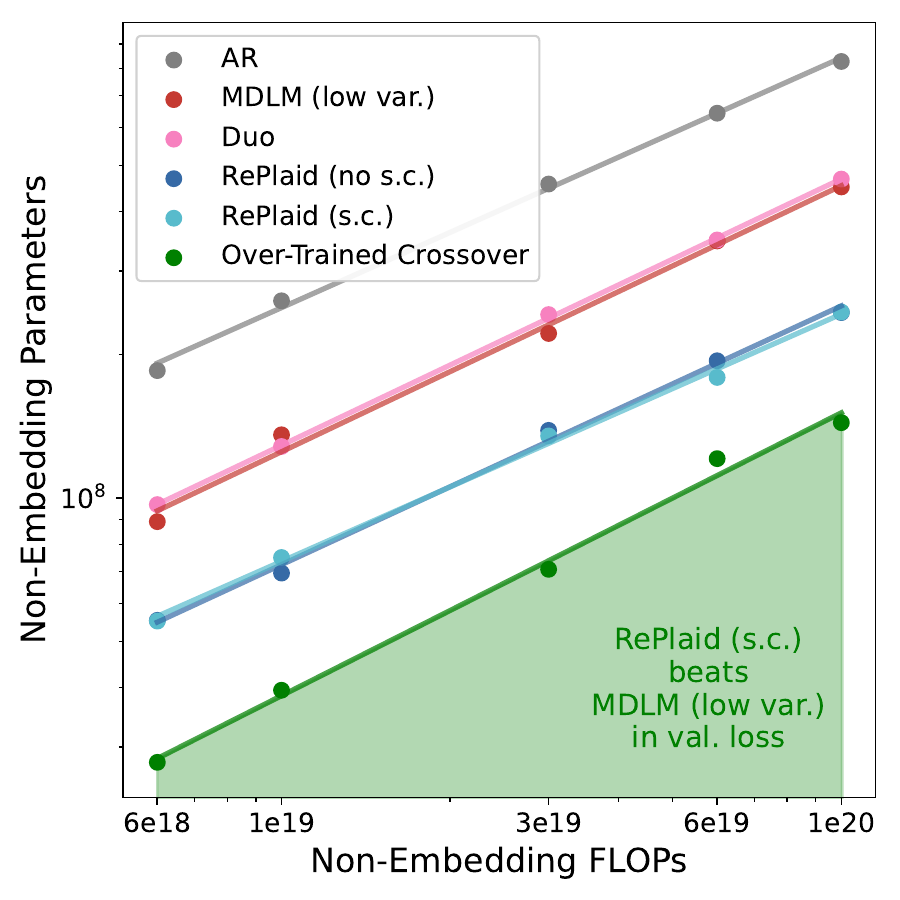}
        \caption{Parameter scaling law}
        \label{fig:param_scaling_law}
    \end{subfigure}
    \caption{\small \textbf{Scaling laws.} \textbf{(a)} The compute-optimal \textsc{RePlaid} loss exhibits power-law scaling, decreasing at a rate comparable to AR, MDLM, and Duo. MDLM requires $14\times$ FLOPs to match AR; Duo needs $22\times$; \textsc{RePlaid} consumes $\replaidscx\times$ with self-conditioning (s.c.) and $\replaidnscx\times$ without it. \textbf{(b)} The compute-optimal \textsc{RePlaid} (s.c.) uses $1.8\times$ fewer parameters than MDLM and Duo -- while outperforming Duo's loss in (a) -- and uses $3.4\times$ fewer than AR. For a given (non-embedding) model size, MDLM and \textsc{RePlaid} (s.c.) match loss at the {\color[RGB]{80, 150, 63}\textbf{green line}}.}
    \label{fig:scaling_law}
     \vspace{-1em}
\end{figure*}

We first validate our pipeline by reproducing that MDLM with low-variance (low var.) training requires $ 14\times$ compute to match AR validation loss~\citep{sahoo2026scaling}; for brevity, we refer to this as MDLM. For Duo, we obtain a $22\times$ compute gap, closely matching $23\times$ reported by~\citet{sahoo2026scaling}.

\paragraph{Competitive loss scaling.} \cref{fig:likelihood_scaling_law} shows that RePlaid scales competitively with discrete DLMs. The compute-optimal loss for RePlaid (s.c.) exhibits power-law scaling, decreasing
at a rate comparable to AR, MDLM, and Duo. The $\approx \replaidscx \times$ compute gap of RePlaid (s.c.) outperforms Duo ($\approx22\times$) and approaches MDLM ($\approx14\times$). RePlaid (no s.c.) exhibits a gap of $\approx \replaidnscx \times$, approaching Duo.

\paragraph{Strong parameter efficiency.} RePlaid also demonstrates significant parameter efficiency. To reach its compute-optimal frontier, RePlaid requires $\approx3.4\times$ fewer parameters than the AR baseline (\cref{fig:param_scaling_law}). Crucially, RePlaid (s.c.) uses $\approx1.8\times$ fewer parameters than MDLM and Duo (\cref{fig:param_scaling_law}), achieving this size reduction while outperforming Duo's compute-optimal loss (\cref{fig:likelihood_scaling_law}). 

\paragraph{Outperforming MDLM at over-training.} Obtaining high-quality small models through over-training is critical for heavy inference~\citep{sardana2024beyond}. RePlaid (s.c.) offers distinct advantages when training past compute-optimal budgets to minimize inference costs. A crossover occurs in this over-trained regime (\cref{fig:isoflop-plots}, below green line), where RePlaid (s.c.) predictably achieves a lower loss than MDLM. More precisely, for a given model size, a RePlaid (s.c.) trained for $\approx 3.1\times$ its optimal compute matches the loss of an MDLM trained for $\approx 6.9\times$ its optimal compute (\cref{fig:param_scaling_law}, green line).

\paragraph{Ablations.} Applying RePlaid's length-normalized, low-dimensional ($d_e=16$) embeddings, linear up-projection, and optimization adjustments (\cref{subsec:scaling-law-setup}) to MDLM ($d_e=768$) degrades its performance (\cref{fig:iso_flop_mdlm_exchange_appendix}), and vice versa (\cref{fig:iso_flop_replaid_sc_exchange_appendix}). This aligns with our observation in~\cref{subsec:geometric_reg}: RePlaid learns low-rank structures in embeddings, which do not emerge for discrete DLMs like MDLM.

\begin{table}[t]
  \caption{\small Test perplexities (PPL; $\downarrow$) on LM1B ($L=128$, 1M steps) and OWT ($L=1024$, 1M steps). For diffusion models, we report PPL computed using the NELBO~\cref{eq:plaid-nelbo} as in prior work.  The best value in each method category is \textbf{bolded}; top-3 diffusion values per column are \textbf{shaded}. ``s.c.'' indicates that self-conditioning is used during both training and evaluation. $\P$No sentence packing. $^*$Reported in~\citet{he2022diffusionbert}. $^\ddag$Reported in~\citet{sahoo2025the}. $^\dag$Reported in~\citet{sahoo2025esoteric}. $^\texttt{a}$Approximated by evaluating the checkpoint (trained with s.c.) with s.c. off. $^\texttt{r}$Models retrained from scratch. For LangFlow, we evaluate its official checkpoint with our NELBO~\cref{eq:plaid-nelbo} because we found its ODE-based NELBO to be highly sensitive to design choices (see \cref{tab:ode-chainrule} in \cref{app:ode-likelihood}).}  
  \vspace{0.5em}
  \label{tab:val-ppl}

  \begin{minipage}[b]{1.0\linewidth}
    \centering
    \footnotesize
    \begin{tabular}{llcccc}
    \toprule
     & & & {LM1B} & \multicolumn{2}{c}{OWT} \\
     \cmidrule(lr){4-4} \cmidrule(lr){5-6}
     & Method & s.c. & {1M} & {250K} & {1M} \\
    \midrule
    \multicolumn{3}{l}{\textit{Autoregressive}}\\
    & Transformer~\citep{dai2019transformer} &  & $22.8^{\dagger}$ & $17.8^\dag$ & $17.5^\ddag$ \\
    \midrule
    \multicolumn{3}{l}{\textit{Discrete Diffusion (Mask)}}\\
    & BERT-Mouth~\citep{wang2019bert} &  & $142.9^*$ & - & - \\
    & D3PM Absorb~\citep{austin2021structured} &  & $76.9$ & - & - \\
    & DiffusionBert~\citep{he2022diffusionbert} &  & $63.8$ & - & - \\
    & Eso-LMs ($\alpha_0=1$)~\citep{sahoo2025esoteric} & & $36.1$ & $30.1$ & - \\
    & SEDD Absorb~\citep{lou2024discrete} & & $32.7^\P$ & $26.8^\dag$ & - \\
    & MDLM~\citep{sahoo2024simple} & & \topthree{$31.8$}$^\ddag$ & \topthree{$25.2$}$^\dag$ & \topthree{$23.2$}$^\ddag$ \\
    & MDLM (low var.)~\citep{sahoo2025esoteric} & & \topone{$\mathbf{30.8}$}$^\texttt{r}$ & \toptwo{$\mathbf{25.0}$}$^\texttt{r}$ & \toptwo{$\mathbf{23.1}$}$^\texttt{r}$ \\
    \midrule
    \multicolumn{3}{l}{\textit{Discrete Diffusion (Uniform)}}\\
    & D3PM Uniform~\citep{austin2021structured} & & $137.9$ & - & - \\
    & SEDD Uniform~\citep{lou2024discrete} & & $40.3^\P$ & - & - \\
    & UDLM~\citep{schiff2025simple} & & $36.7^\ddag$ & $30.5^\dag$ & $27.4^\ddag$ \\
    & Duo~\citep{sahoo2025the} & & $\mathbf{33.7}^\ddag$ & $\mathbf{27.1}^\dag$ & $\mathbf{25.2}^\ddag$ \\
    \midrule
    \multicolumn{3}{l}{\textit{Continuous Diffusion}}\\
    & Diffusion-LM~\citep{li2022diffusion} & & $118.6^*$ & - & - \\
    & LangFlow~\citep{chen2026langflow} & \ccross & - & - & $38.4^\texttt{a}$  \\
    & & \cmark & - & - & $32.2$ \\
    & Plaid~\citep{gulrajani2023plaid} & \ccross & $39.4$ & $27.5$ & $25.7$ \\
    &  & \cmark & $37.7$ & $25.4$ & $24.4$ \\
    \midrule
    & \textbf{\textsc{RePlaid} (Ours)} & \ccross & $33.3$ & $26.8$ & $\replaidnscppl$ ($24.1^\texttt{a}$) \\
    &  & \cmark & \toptwo{$\mathbf{31.6}$} & \topone{$\mathbf{24.9}$} & \topone{$\mathbf{\replaidscppl}$} \\
    \bottomrule
    \end{tabular}
  \end{minipage}%
  \hfill%
\vspace{-1em}
\end{table}

\section{Benchmarking RePlaid for Diffusion Language Models}\label{sec:small-models}

The $0.1$B regime is the standard testbed for DLMs where the recent wave of continuous DLMs~\citep{lee2026flow, chen2026langflow, roos2026categorical, potaptchik2026discrete} is exclusively benchmarked. Having shown in~\cref{sec:scaling} that RePlaid scales on par with discrete DLMs, we now compare at this canonical scale against the strongest discrete (MDLM~\citep{sahoo2024simple}, Duo~\citep{sahoo2025the}) and continuous (FLM~\citep{lee2026flow}, LangFlow~\citep{chen2026langflow}) DLMs along \emph{likelihood} (\cref{subsec:likelihood-eval}) and \emph{sample quality} (\cref{subsec:sampling}). Both are necessary: FLM does not propose a likelihood bound; in addition, recent works~\citep{sahoo2025the, sahoo2026scaling} have shown that better likelihood may not always lead to better sample quality. 

\paragraph{Unified setup.} AR, MDLM, Duo, FLM, LangFlow, and RePlaid use the same transformer architecture used in~\cref{sec:scaling}. All methods share a comparable optimization setting (see \cref{tab:replaid-dit-optimization-small} for RePlaid) consistent with prior work at 0.1B, slightly different from the one used in \cref{sec:scaling}. Each method is trained for 1M steps with batch size $512$, taking \textasciitilde$110$ hours on 16 GB200s. For original Plaid, we only set its layer count, hidden size, and LR schedule to be the same as other methods (\cref{tab:plaid-dit-optimization-small}).

\subsection{Likelihood Evaluation}
\label{subsec:likelihood-eval}

\begin{findingbox}
    RePlaid achieves the best PPL bound $\mathbf{\replaidscppl}$ among considered DLMs on OpenWebText.
\end{findingbox}

\paragraph{Results.} On OpenWebText (OWT)~\citep{Gokaslan2019OpenWeb}, RePlaid (s.c.) attains the best PPL bound $\mathbf{\replaidscppl}$ among considered DLMs (\cref{tab:val-ppl}), improving over the strongest discrete baseline MDLM (low var., $23.1$) and outperforming Duo ($25.2$), the original Plaid ($24.4$), and the concurrent LangFlow ($32.2$). Interestingly, RePlaid (s.c.) exhibits a smaller gap to MDLM (low var.) at 250K, which aligns with the over-training behavior observed in~\cref{sec:scaling}. Even without self-conditioning, RePlaid reaches $\replaidnscppl$, which is worse than MDLM (low var.) but is already below Duo's $25.2$ and far ahead of LangFlow's $38.4$. On LM1B~\citep{chelba2014billion}, RePlaid (s.c.) reaches a top-2 PPL of $31.6$, surpassing Duo but not MDLM. We further include the training dynamics of RePlaid (s.c.) on OWT in~\cref{app:dynamics}.

\begin{wraptable}{r}{0.38\linewidth}
    
    \centering
    \caption{\small OWT Ablations for 1M steps.}
    \label{tab:replaid-ablations}
    \vspace{-0.5em}
    \resizebox{\linewidth}{!}{%
    \begin{tabular}{lc}
    \toprule
    Method & PPL ($\downarrow$) \\
    \midrule
    \textbf{\textsc{RePlaid} (s.c.) (Ours)} & $\mathbf{\color[rgb]{0, 0.5, 0}\replaidscppl}$\\
    \quad w/o output prior &  $22.5$ \\
    \quad or w/o self-conditioning & $\replaidnscppl$ \\
    \quad or w/o learnable noise sched. & $24.4$\\
    \quad or w/o learnable embeddings  & $\mathbf{\color[rgb]{0.5, 0, 0}39.4}$ \\
    \bottomrule
    \vspace{-2em}
    \end{tabular}%
    }
\end{wraptable}
\paragraph{Ablations.} \cref{tab:replaid-ablations} ablates RePlaid components. Crucially, having all components is required to beat MDLM. Among these, \textbf{learning the token embeddings} is the most important: freezing them to be random while keeping other components collapses PPL from $\mathbf{\color[rgb]{0, 0.5, 0}\replaidscppl}$ to $\mathbf{\color[rgb]{0.5, 0, 0}39.4}$, a degradation that makes RePlaid the worst DLM on OWT in~\cref{tab:val-ppl}. Learning the noise schedule contributes the next largest gain ($\Delta = 2.3$), followed by self-conditioning ($\Delta = 1.5$) and the output prior ($\Delta = 0.4$). Crucially, even when omitting any one of self-conditioning or the output prior or the noise schedule, RePlaid still outperforms Duo, indicating that the gains stem mainly from an optimized embedding geometry. This offers a complementary perspective on not optimizing a true likelihood bound in LangFlow~\citep{chen2026langflow} and the use of fixed, one-hot embeddings in FLM~\citep{lee2026flow}, DFM~\citep{potaptchik2026discrete}, and CFM~\citep{roos2026categorical}: learning embedding geometry with a true likelihood bound can be particularly beneficial in likelihood-based continuous DLMs.

\begin{figure*}[t]
    \centering
    \begin{subfigure}{0.325\linewidth}
        \centering
        \includegraphics[width=\linewidth]{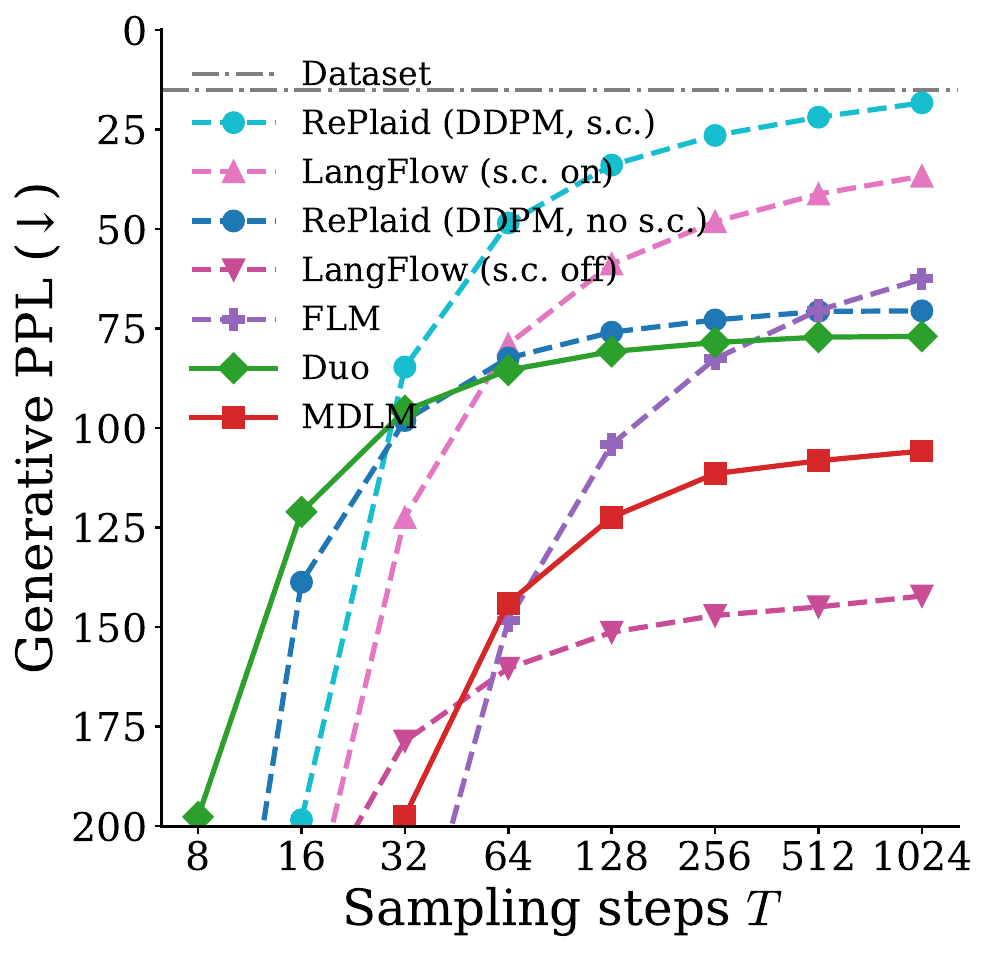}
        \caption{GenPPL vs sampling steps}
        \label{fig:genppl_vs_t}
    \end{subfigure}
    \begin{subfigure}{0.32\linewidth}
        \centering
        \includegraphics[width=\linewidth]{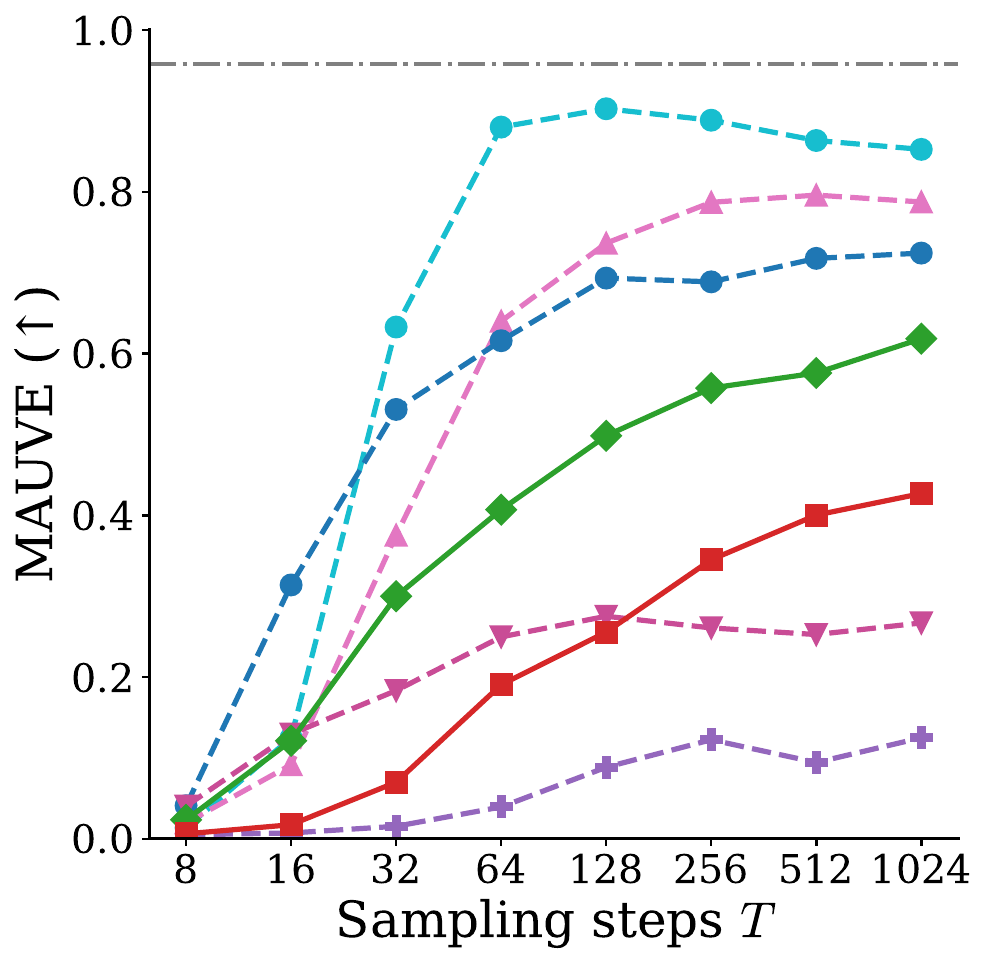}
        \caption{MAUVE vs sampling steps}
        \label{fig:mauve_vs_T}
    \end{subfigure}
    \begin{subfigure}{0.32\linewidth}
        \centering
        \includegraphics[width=\linewidth]{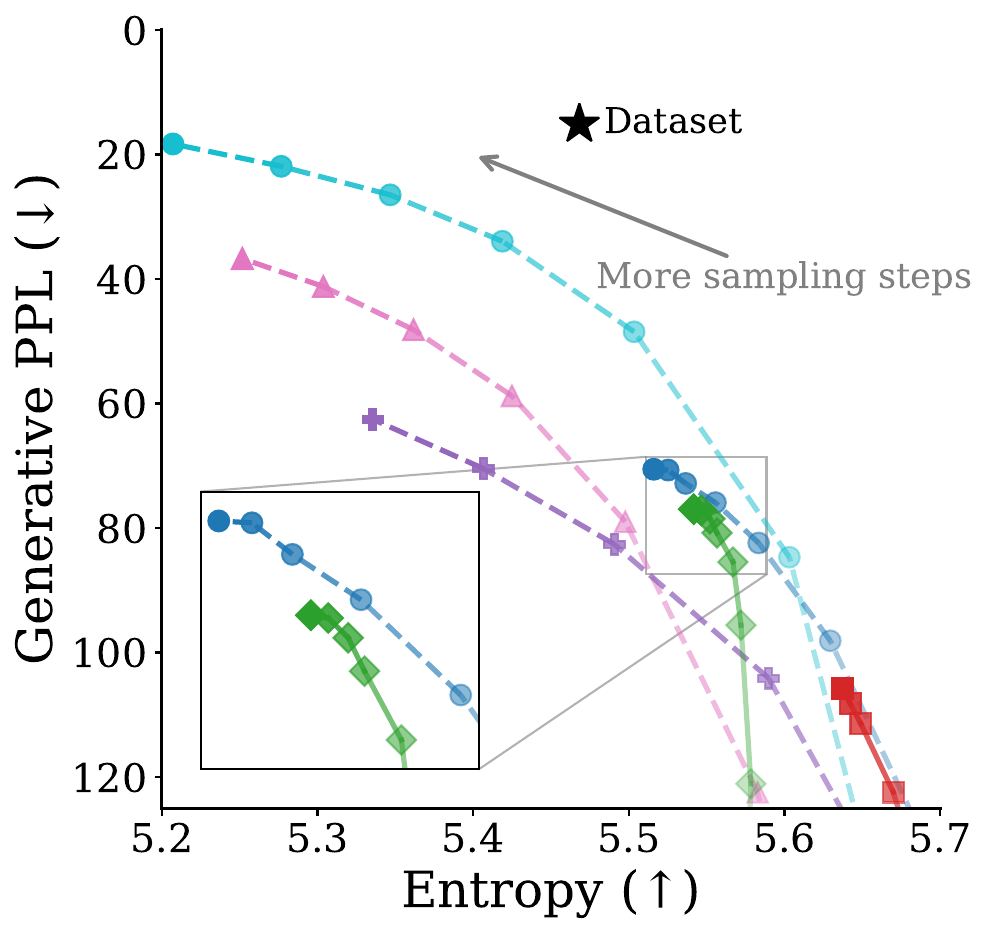}
        \caption{GenPPL--entropy frontier}
        \label{fig:genppl_vs_entropy}
    \end{subfigure}
    \caption{ \small \textbf{(a-b)} GenPPL and MAUVE on OWT samples $(L=1024, N=5120)$. No temperature is used.  All models are trained for 1M steps on OWT. \textbf{(c)} GenPPL-entropy frontier as $T$ increases (darker color: higher $T$). We observe that Duo and FLM have worse entropy than RePlaid (no s.c.) at comparable GenPPL levels.}
    \label{fig:sample-quality}
    \vspace{-1em}
\end{figure*}

\subsection{Sampling}
\label{subsec:sampling}

Unlike discrete DLMs, continuous DLMs admit \textit{acceleration} via efficient ODE solvers and trajectory distillation. While distillation is beyond the scope of our work, we study RePlaid's sample quality across ODE solvers to gauge its distillation potential. A trained RePlaid samples a sequence by simulating the backward trajectory from $\z_1 \sim \normal(\zero, \I)$ to $\z_0$. Going beyond the original Plaid~\citep{gulrajani2023plaid}, which only used a stochastic DDPM-style \emph{ancestral sampler} drawing $\z_s \sim p_\theta(\z_s \mid \z_t)$ from the closed-form Gaussian posterior of VDM~\citep[App.~A.4]{kingma2021variational} (see \cref{eq:vdm-reverse,eqn:denoising}), we also evaluate three deterministic samplers (DDIM~\citep{song2021denoising}, DPM-Solver++(2M)~\citep{lu2025dpm}, Heun~\citep{karras2022elucidating}) that integrate along the \emph{probability-flow ODE (PFODE)}:
\begin{equation}\label{eq:plaid-pfode}
  \dot\z_t = \v_\theta(\z_t, t) := (\dot\alpha_t - \alpha_t \dot\sigma_t / \sigma_t) \x_\theta(\z_t, t)\Emb + (\dot\sigma_t / \sigma_t) \z_t.
\end{equation}
These three deterministic samplers we adopt are mature PFODE integrators developed for image diffusion and transferred here largely unchanged.
All solvers read out the final tokens as $\argmax$ of the decoder logits at $t \approx 0$ as with FLM and LangFlow. See~\cref{app:sampler} for per-step update formulas.

We apply these four samplers to sample unconditionally ($L=1024$) from RePlaid (s.c.) and (no s.c.) trained on OWT for 1M steps. No temperature is used. During sampling, we vary the number of diffusion discretization steps $T$ (not including the final $\argmax$) to control NFEs. For MDLM, Duo, FLM, and LangFlow, we use their official, publicly available 1M checkpoints. We measure Generative perplexity (GenPPL)~\citep{dieleman2022continuous} via GPT-2 Large~\citep{radford2019language}, MAUVE~\citep{pillutla2021mauve} via ModernBERT-Large~\citep{warner2025smarter} for sample quality, and average unigram entropy for diversity~\citep{zheng2024masked}. GenPPL is the standard metric used in prior work~\citep{sahoo2024simple, sahoo2025the, lee2026flow, potaptchik2026discrete, chen2026langflow}. Unlike prior work, we also measure MAUVE, a metric designed to capture both quality and diversity that has been shown to correlate with human judgements~\citep{pillutla2021mauve}.

\begin{findingbox}
With a standard DDPM solver and a uniform $t$ discretization, \textbf{RePlaid (no s.c.)} generates at (1) higher quality than Duo and (2) significantly higher quality than other continuous DLMs without self-conditioning at all sampling steps. \textbf{RePlaid (s.c.)} struggles at low sampling steps (as with LangFlow) but further improves upon RePlaid (no s.c.) for $T\geq64$, outperforming discrete DLMs and other continuous DLMs with self-conditioning at high sampling steps.
\end{findingbox}

\cref{fig:sample-quality} reports GenPPL (a) and MAUVE (b) versus sampling steps $T$ without temperature. On GenPPL (\cref{fig:genppl_vs_t}), we confirm the finding of~\citep{sahoo2025the} that Duo outperforms MDLM. Crucially, as $T$ increases, all continuous methods appear to match or outperform Duo, except for LangFlow (s.c. off). Among these, RePlaid (s.c.) takes the lead from $T=64$ and then reaches a GenPPL of $\approx21$ at $T = 1024$, significantly outperforming other continuous methods. 

On MAUVE (\cref{fig:mauve_vs_T}), the conclusion holds similarly, except that (\textit{i}) RePlaid (no s.c.) outperforms Duo at every $T$, (\textit{ii}) FLM significantly underperforms other methods, and (\textit{iii}) RePlaid (s.c.) shows a decrease in quality at $T\in\{512, 1024\}$. To understand these issues, we plot GenPPL against entropy as $T$ is varied in~\cref{fig:genppl_vs_entropy} (darker color: higher $T$). We see that Duo and FLM have worse entropy than RePlaid (no s.c.) at comparable GenPPL levels. Meanwhile, RePlaid (s.c.) yields the lowest entropy among all methods at high $T$. We attribute this to the \textit{self-conditioning loop}: each sampling step feeds the model's previous reconstruction back as a conditioning signal, so additional steps compound the network's confidence and sharpen the sampling distribution below the reference entropy. While MAUVE accounts for both quality and diversity, to affirm our results we report GenPPL-entropy frontiers~\citep{pynadath2025candi} at each $T$ by sweeping temperatures (\cref{app:temp}). See~\cref{app:samples} for RePlaid text samples.

\begin{findingbox}
   Higher-order ODE solvers such as DPM-Solver++(2M) further improve stochastic samplers (e.g., DDPM) in generation quality at low NFEs. However, at high NFEs they perform worse than DDPM, converging to the first-order solver, DDIM.
\end{findingbox}

\begin{figure*}[t]
    \centering
    \begin{subfigure}{0.4\linewidth}
        \centering
        \includegraphics[width=\linewidth]{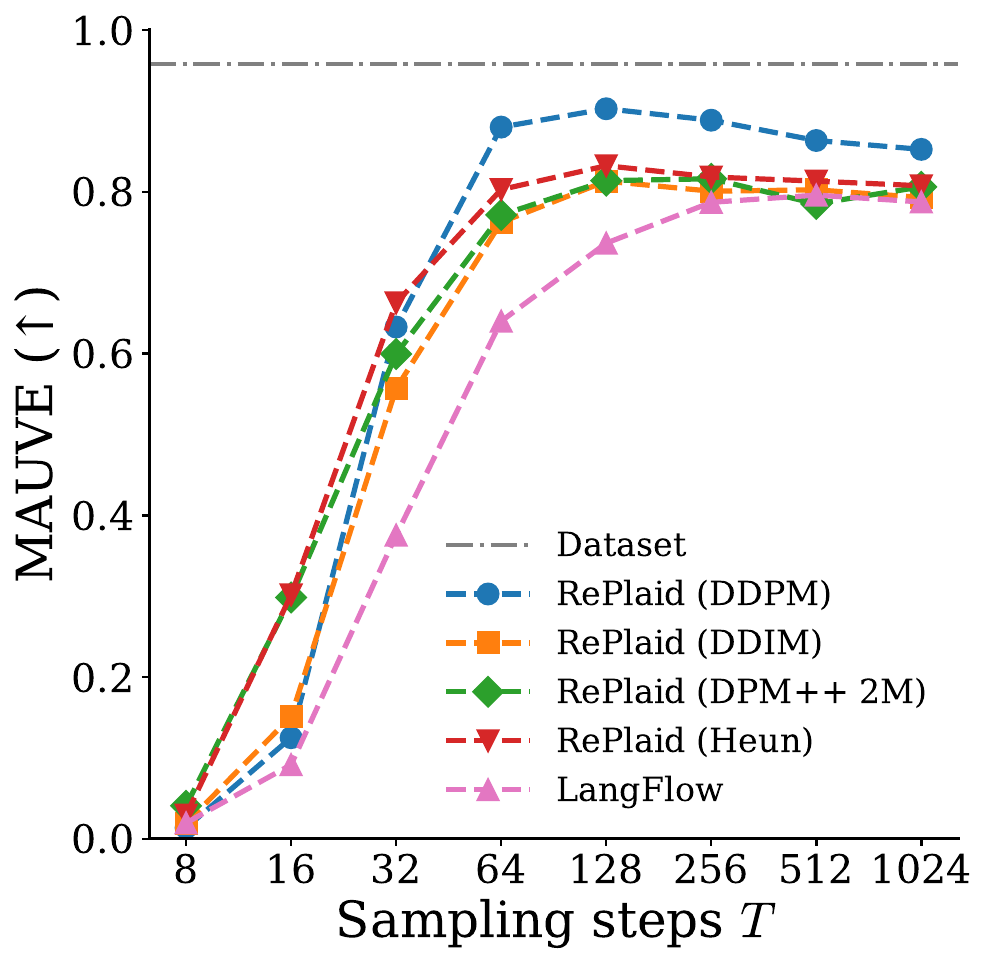}
        \caption{Self-conditioning}
        \label{fig:mauve_vs_T_2nd_sc}
    \end{subfigure}
    \begin{subfigure}{0.4\linewidth}
        \centering
        \includegraphics[width=\linewidth]{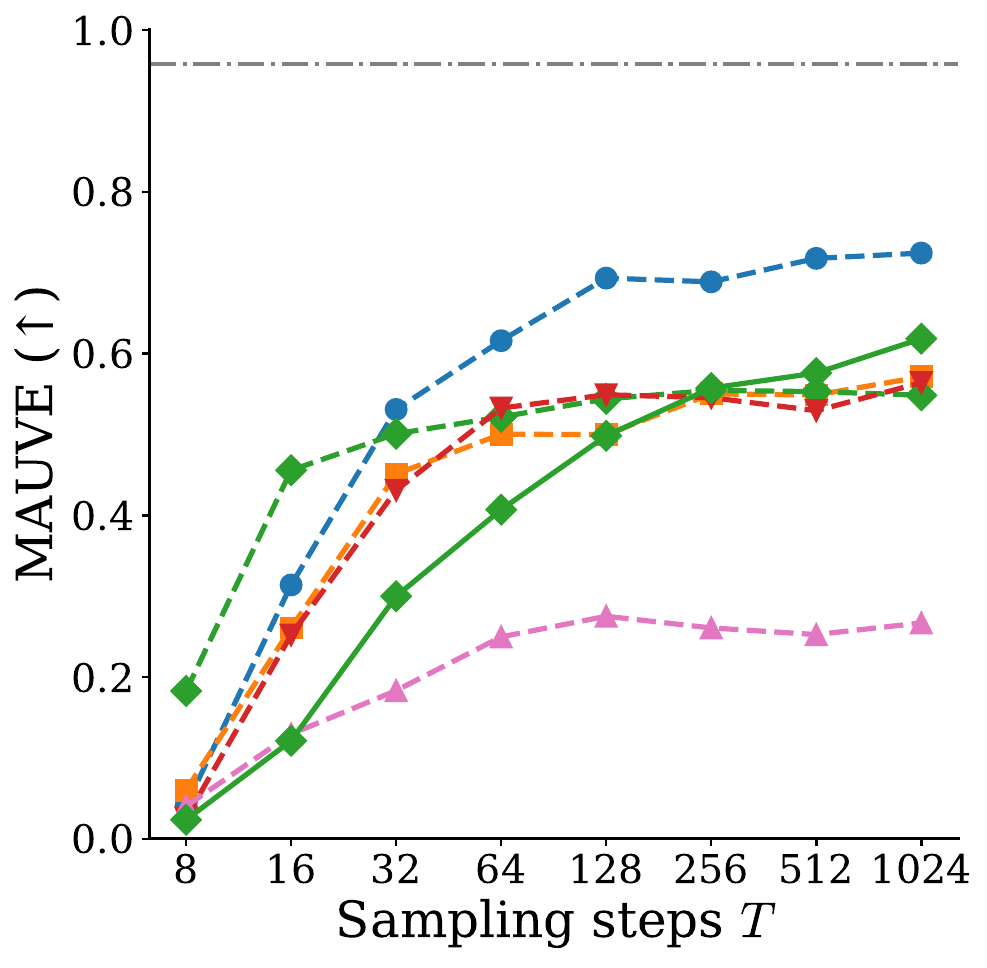}
        \caption{No self-conditioning / off}
        \label{fig:mauve_vs_T_2nd_nsc}
    \end{subfigure}
    \caption{\small MAUVE on OWT samples $(L=1024, N=5120)$ versus sampling steps $T$, comparing the ancestral DDPM sampler, DDIM, DPM-Solver++(2M), Heun on \textsc{RePlaid}, with LangFlow and Duo plotted as baselines.}
    \label{fig:sample-quality-2nd}
    \vspace{-0.5em}
\end{figure*}

\cref{fig:sample-quality-2nd} compares RePlaid's DDPM ancestral sampler against the deterministic samplers, with self-conditioning on and off. Both panels show the same qualitative pattern: at small $T$ the second-order DPM-Solver++(2M) clearly beats both DDPM and the first-order DDIM, confirming that solver order matters in the few-step regime as in image diffusion. From $T \ge 64$, however, DDPM overtakes both deterministic solvers and plateaus at the highest MAUVE. Heun seems less favorable compared to DPM-Solver++(2M), likely because Heun costs 2 NFEs per step while DPM-Solver++(2M) only costs 1 NFE per step, and $T$ is adjusted to account for Heun's more expensive steps.

\section{Theoretical Insights}
\label{sec:theory}
Having shown that RePlaid attains the best PPL bound and sample quality among continuous DLMs on OpenWebText, we now ask why, given that its training objective is a plain VDM NELBO with neither cross-entropy (CE) nor hand-engineered time reparameterization~\citep{dieleman2022continuous,lee2026flow, potaptchik2026discrete, roos2026categorical, chen2026langflow}. We argue that the answer is in the \textbf{NELBO} itself: it is a true upper bound on the negative log-likelihood while the alternatives are not, and thus the rest of this section traces two consequences of optimizing this bound. $(i)$ Adding an extra CE term to the NELBO provides a \textit{separative} force on the learned embeddings $\Emb$, which disrupts the low-rank embedding structure and hurts the PPL; $(ii)$ Making the noise schedule a learnable part of the NELBO and training it to minimize Monte-Carlo variance automatically yields a \textit{constant per-timestep diffusion loss} and a \textit{near-linear per-timestep CE}---the time-uniformity property recent CE-based methods engineer with manually-designed time reparameterization.

\subsection{ELBO versus Cross-Entropy Training}
\label{subsec:geometric_reg}

Recent embedding-based continuous DLMs such as CDCD~\citep{dieleman2022continuous}, FLM~\citep{lee2026flow}, and LangFlow~\citep{chen2026langflow} adopt a CE loss instead of the MSE-style diffusion loss \cref{eq:plaid-nelbo}. This discrepancy brings a natural question: can RePlaid be trained using only CE loss? We found that this requires non-trivial modifications (e.g., LangFlow~\citep{chen2026langflow} is an attempt to achieve exactly this), as all components of RePlaid are jointly optimized for likelihood. To conduct a meaningful, isolated comparison where no component becomes degenerate under a standalone CE loss,  we compare RePlaid (s.c.) trained with $\cL_\NELBO(\x)$ with RePlaid (s.c.) trained with a combined loss $\cL_\NELBO(\x) + w \cdot \cL_{\CE}(\x)$, where
\begin{align}
    \cL_{\CE}(\x) = \mathbb{E}_{t\sim\unif[0,1],\z_t\sim q(\z_t|\x)}\left[\tsum_{l=1}^L  -\log \langle \x_\theta^l(\z_t,t), \x^l \rangle\right]
\end{align}
and $\nabla\cL_{\CE}$ is constrained to only reach $\Emb$ and $\theta$ so noise schedule components stay non-degenerate.

For RePlaid trained with $\cL_\NELBO(\x)$, \cref{fig:tsne} projects the rows of $\Emb \in \R^{V \times d_e}$ to two dimensions via t-SNE~\citep{van2008visualizing}, with each subword colored by its most frequent participating part-of-speech (POS) assigned by spaCy~\citep{honnibal2020spacy} on a held-out OWT subset (see \cref{supp:sec:spacy}). Even at the small embedding dimension $d_e = 16$, the projection exhibits visible POS-structured clustering, and the PCA scree in \cref{fig:pca} shows the embeddings concentrating on roughly $6$ principal components ($90\%$ cumulative explained variance)---neither collapsed nor uniformly packed, but a \textit{low-rank}, \textit{linguistically meaningful} arrangement. Adding a CE loss ($w=1$) makes the gradient for $\Emb$ more \textit{separative}: the PCA scree flattens (\cref{fig:pca-aux}) and the PPL bound degrades from $\replaidscppl$ to $26.1$. This suggests that optimizing $\cL_\NELBO(\x)$ requires delicate modulation of embedding geometry, which can be disrupted by the discriminative CE loss. 

We verify this in realistic scenarios: RePlaid ($d_e=768$) achieves a substantially lower-rank embedding geometry than LangFlow ($d_e=768$ by default) while achieving a better PPL bound (\cref{fig:pca-scree-768}). Meanwhile, MDLM ($d_e=768$) learns a much higher-rank embedding geometry than both (\cref{fig:pca-scree-768-mdlm}).

\begin{figure*}[t]
    \centering
    \begin{subfigure}{0.37\linewidth}
        \centering
        \includegraphics[width=\linewidth]{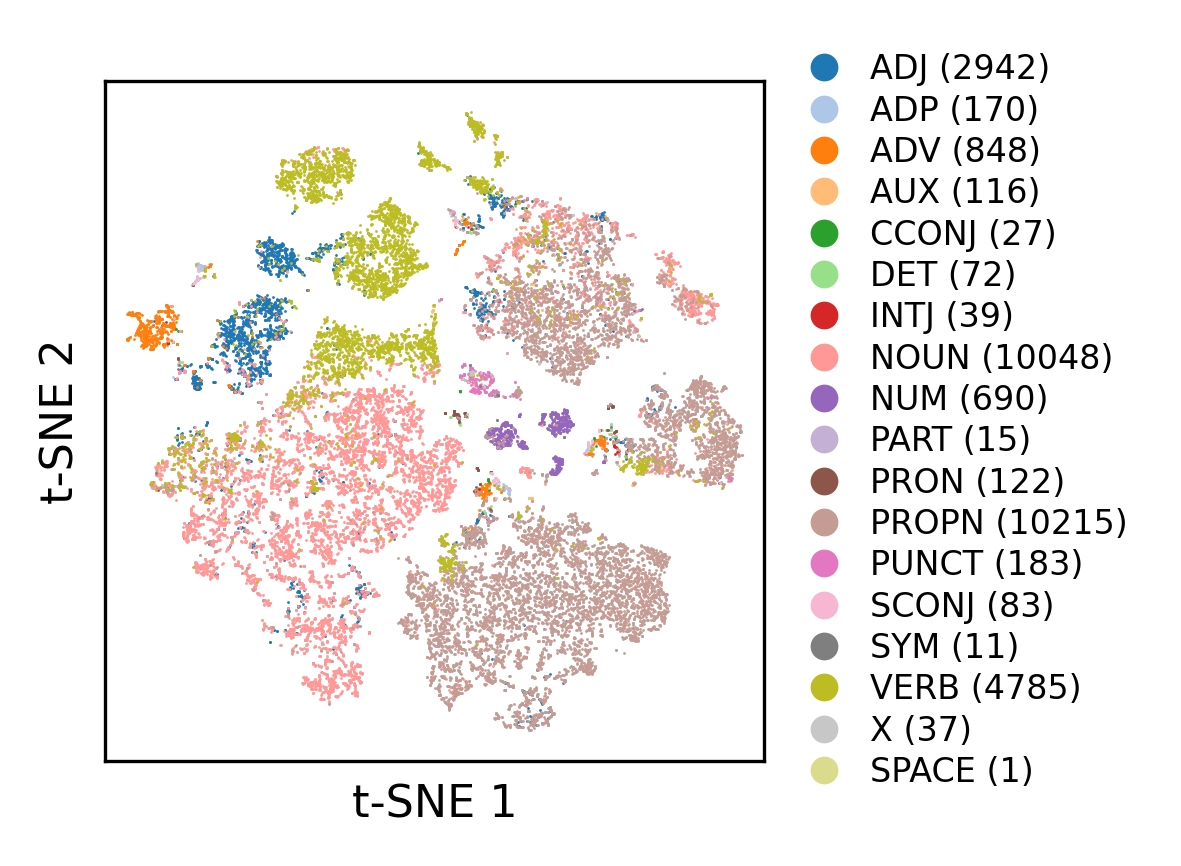}
        \caption{t-SNE ($\cL_{\NELBO}$).}
        \label{fig:tsne}
    \end{subfigure}
    \begin{subfigure}{0.30\linewidth}
        \centering
        \includegraphics[width=\linewidth]{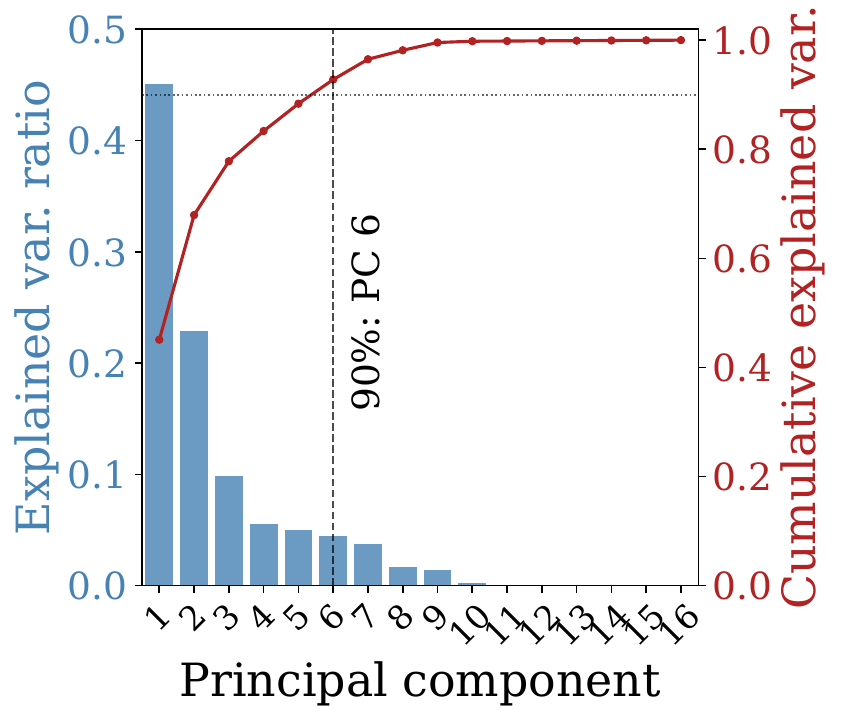}
        \caption{PCA scree ($\cL_{\NELBO}$).}
        \label{fig:pca}
    \end{subfigure}
    \begin{subfigure}{0.30\linewidth}
        \centering
        \includegraphics[width=\linewidth]{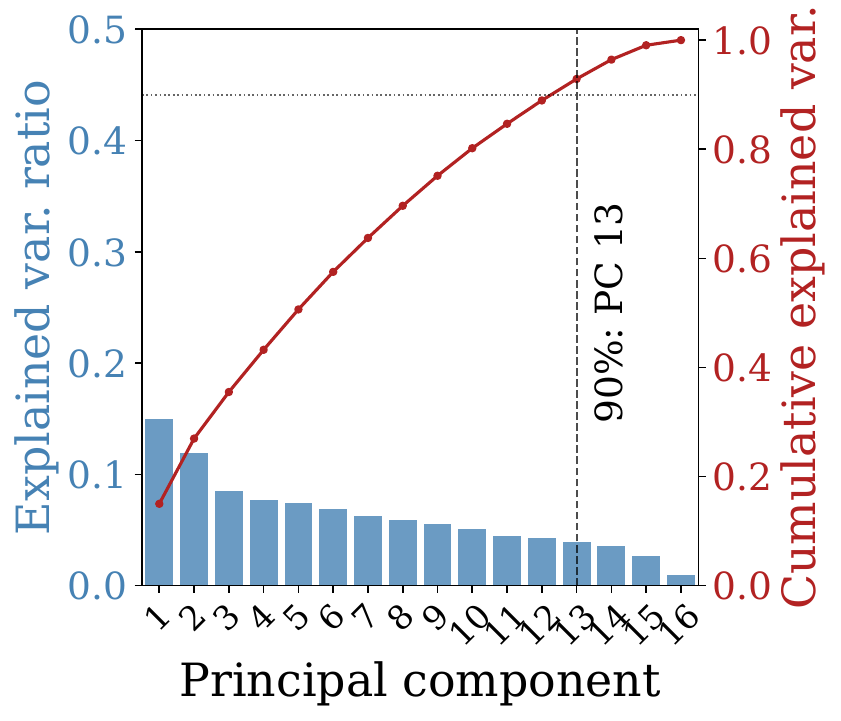}
        \caption{PCA scree ($\cL_{\NELBO} + \cL_{\CE}$).}
        \label{fig:pca-aux}
    \end{subfigure}
    \caption{\small Visualizing geometry of learned embeddings of \textsc{RePlaid} (s.c.) (\textbf{OWT PPL}: $\replaidscppl$ at 1M). 
    (\textbf{a}) 2D t-SNE plot with each subword colored by its most frequent POS tag.
    (\textbf{b}) PCA scree plot of $\Emb$.
    (\textbf{c}) PCA scree plot of $\Emb$ when an auxiliary CE loss is added (\cref{subsec:geometric_reg}), dispersing the embeddings (\textbf{OWT PPL}: $26.1$ at 1M).}
    \label{fig:geometry}
     \vspace{-1em}
\end{figure*}

\subsection{Benefits of Learning the Noise Schedule} 
\label{subsec:learning-noise-schedule}

Prior work on continuous DLMs stresses \textit{distributing denoising difficulty uniformly} across time, via manually-designed schedules that linearize token identifiability or the decoding error rate in time~\citep{pynadath2025candi,lee2026flow,potaptchik2026discrete}, or match the noise density to the information-gain (measured by cross entropy) rate~\citep{chen2026langflow}. We show below that minimizing the variance of the ELBO loss recovers the same property automatically for \textit{any} $\Emb$: the optimal noise schedule makes the per-timestep diffusion loss constant in $t$ (\cref{thm:constant_diffusion_loss}) and, under a Bayes-optimal denoiser, the per-timestep cross-entropy decomposes into a linear-in-$t$ term and a residual term (\cref{thm:linear_ce}).

For some given embeddings $\Emb$, a denoising model $\x_\theta$, and a noise schedule $\gamma$ with corresponding $\SNR(t)=\ee^{-\gamma(t)}$, the model's expected diffusion loss at timestep $t$ is
\begin{align}\label{eq:diffusion-loss-per-t}
    \ell_{\theta, \gamma}(t) := -\tfrac{1}{2} \SNR'(t) \E_{q_\data(\x) q(\z_t \mid \x)}\| \e_\theta(\z_t, t) - \e \|^2 =: -\tfrac{1}{2} \SNR'(t) \MSE_\theta(\SNR(t)).
\end{align}
\citet{kingma2021variational} showed that the diffusion loss  $\E_{t\sim\unif[0,1]}[\ell_{\theta, \gamma}(t)]$ depends on $\theta$ and $\gamma_0,\gamma_1$, but is invariant to the interior shape of $\gamma(t)$; for fixed $\theta$, $\gamma_0$ and $\gamma_1$, $\Var_t [\ell_{\theta, \gamma}(t)]$ depends only on the shape of $\gamma(t)$. We advance this by proving~\cref{thm:constant_diffusion_loss} \&~\cref{lemma:linear_info_decay}, which holds for VDMs and (Re)Plaid.

\vspace{0.5em}
\begin{restatable}[\textbf{\color{ourbrightblue!80!black}Constant Per-Timestep Diffusion Loss}]{proposition}{thmconstantdiffusion}
\label{thm:constant_diffusion_loss}
Let $\kappa:=\E_t[\ell_{\theta, \gamma}(t)]$. Under the endpoint constraint $\gamma(0)=\gamma_0$ and $\gamma(1)=\gamma_1$, there exists a unique noise schedule $\gamma^*$ such that $\ell_{\theta, \gamma^*}(t)\equiv\kappa\geq0$ for all $t$, and consequently $\Var_t [\ell_{\theta, \gamma^*}(t)]=0$.
\end{restatable}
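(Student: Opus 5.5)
The plan is to change variables from time to SNR, following the invariance argument of \citet{kingma2021variational}. Write $v=\SNR(t)=\ee^{-\gamma(t)}$, which is strictly decreasing from $v_0:=\ee^{-\gamma_0}$ to $v_1:=\ee^{-\gamma_1}<v_0$. The key observation is that, for fixed $\theta$ and fixed embeddings $\Emb$, the quantity $\MSE_\theta(v)$ depends only on the SNR level $v$ and not on the schedule. This holds because $q(\z_t\mid\x)=\normal(\alpha_t\e,\sigma_t^2\I)$ is determined by $v$ through the VP relations $\alpha_t^2=v/(1+v)$ and $\sigma_t^2=1/(1+v)$. We take the denoiser to be a function of $\z_t$ and the noise level, so that $\e_\theta$ is effectively indexed by $v$. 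Then
\begin{align*}
\kappa=\int_0^1 -\tfrac12\SNR'(t)\MSE_\theta(\SNR(t))\,\d t=\tfrac12\int_{v_1}^{v_0}\MSE_\theta(v)\,\d v ,
\end{align*}
which depends only on the endpoints. Since $\MSE_\theta\ge 0$, this already gives $\kappa\ge 0$.

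Next, define the cumulative function $F(v):=\tfrac12\int_{v}^{v_0}\MSE_\theta(u)\,\d u$. It satisfies $F(v_0)=0$ and $F(v_1)=\kappa$. The requirement $\ell_{\theta,\gamma}(t)\equiv\kappa$ reads
\begin{align*}
\tfrac{\d}{\d t}F(\SNR(t))=-\tfrac12\MSE_\theta(\SNR(t))\,\SNR'(t)=\kappa .
\end{align*}
This forces $F(\SNR(t))=\kappa t$ with the boundary condition $F(\SNR(0))=0$. So I would define the schedule as $\SNR^*(t):=F^{-1}(\kappa t)$, which gives $\gamma^*(t)=-\log F^{-1}(\kappa t)$. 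The endpoint constraints then hold automatically: $\gamma^*(0)=-\log v_0=\gamma_0$ and $\gamma^*(1)=-\log F^{-1}(\kappa)=\gamma_1$. Once $\ell_{\theta,\gamma^*}$ is constant, $\Var_t[\ell_{\theta,\gamma^*}(t)]=0$ is immediate.

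For uniqueness, any admissible schedule with constant loss must satisfy $F(\SNR(t))=\kappa t$ by integrating the display above from $0$. Together with the invertibility of $F$, this pins down $\SNR$ and hence $\gamma$.

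The main obstacle is justifying that $F$ is a strictly monotone bijection $[v_1,v_0]\to[0,\kappa]$, which is what makes $F^{-1}$ well defined and gives the resulting $\gamma^*$ strict increase and differentiability. I would argue that $\MSE_\theta(v)>0$ on the interval. Indeed, for any finite SNR the Gaussian posterior over tokens is nondegenerate whenever $q_\data$ is not a point mass with distinct embeddings, so the prediction $\e_\theta$ cannot equal $\e$ almost surely. I would also assume $\MSE_\theta$ is continuous in $v$, which follows from smoothness of the network and dominated convergence. Then $F$ is $C^1$ with $F'=-\tfrac12\MSE_\theta<0$, and the inverse function theorem yields a $C^1$ strictly decreasing $F^{-1}$. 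The degenerate case $\kappa=0$, where $\MSE_\theta\equiv 0$, I would exclude or treat separately, noting that it trivially gives constant zero loss for every schedule, so uniqueness is meant in the nondegenerate case.
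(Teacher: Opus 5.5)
Your proposal is correct and is essentially the paper's argument written in SNR rather than $\gamma$ coordinates. The paper sets $w(\gamma)=\ee^{-\gamma}\MSE_\theta(\ee^{-\gamma})$, $G(\gamma)=\int_{\gamma_0}^{\gamma}w$, and $\gamma^*(t)=G^{-1}(2\kappa t)$, with uniqueness from strict monotonicity of $G$; since $G(\gamma)=2F(\ee^{-\gamma})$, this is exactly your $\SNR^*(t)=F^{-1}(\kappa t)$. The paper simply assumes $w$ is positive and continuous, so your $\MSE_\theta\ge\MMSE>0$ justification, your explicit requirement that $\e_\theta$ be indexed by the noise level, and your separate treatment of the degenerate case $\kappa=0$ make your version slightly more complete.
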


See~\cref{supp:sec:constant-diffusion-proof} for the proof, and see \cref{eq:optimal-gamma,eq:optimal-per-timestep-diffusion} for the closed-form formulas for $\gamma^*$ and $\kappa$, respectively. We show that the learned noise schedule closely matches the theoretical closed-form solution computed for a RePlaid checkpoint obtained mid-training (\cref{fig:learned-gamma}). High $\alpha(t)$ across a large range of $t$ indicates that more denoising effort is spent near clean data. Replacing the learned schedule with the closed-form schedule leads to a perfectly flat per-timestep diffusion loss by construction (\cref{fig:diff-loss-vs-t}). 

But the diffusion loss is a weighted MSE loss and does not reflect denoising difficulty. Thus we follow prior work~\citep{chen2026langflow} and measure the (expected) per-timestep CE loss for practical \textit{one-shot decoding}:
    \begin{align}\label{eq:ce-loss-per-t}
    \CE_{\theta, \gamma} (t) \! := \! \E_{q_\data(\x)\,q(\z_t \mid \x)}[-\log \hat p_\theta (\x\mid\z_t)] \! = \! \E_{q_\data(\x)\,q(\z_t \mid \x)}\big[\tsum_{l=1}^L \! - \! \log \langle \x_\theta^l(\z_t,t), \x^l \rangle \big].
\end{align}
\vspace{-1.5em}

While any model can achieve constant per-timestep diffusion loss, per-timestep CE loss cannot be analyzed for an arbitrary model, as one could imagine an adversary that is only bad at denoising at some $t$. For analyses below, we assume a Bayes-optimal denoiser, which we approach in training.

\begin{figure*}[t]
    \centering
    \begin{subfigure}{0.5\linewidth}\phantomsubcaption\label{fig:diff-loss-vs-t}\end{subfigure}%
    \begin{subfigure}{0.5\linewidth}\phantomsubcaption\label{fig:learned-gamma}\end{subfigure}\\
    \begin{subfigure}{0.5\linewidth}\phantomsubcaption\label{fig:ce-loss-vs-t}\end{subfigure}%
    \begin{subfigure}{0.5\linewidth}\phantomsubcaption\label{fig:dec-error-vs-t}\end{subfigure}
    \includegraphics[width=0.9\linewidth]{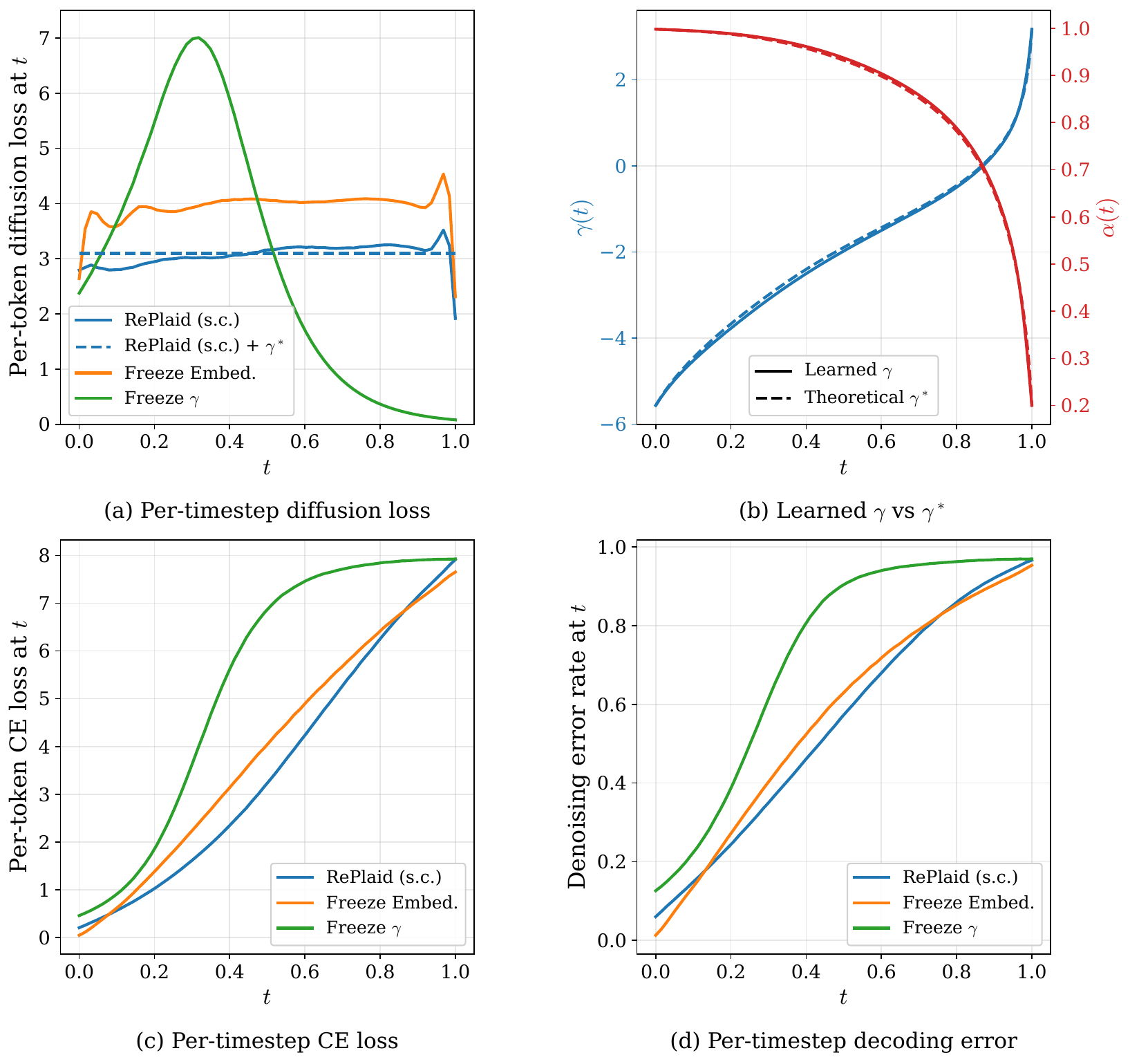}
    \caption{\small Visualizing per-timestep diffusion loss, CE loss, and decoding error for \textsc{RePlaid} (s.c.) (\textbf{OWT PPL}: $24.9$), an ablation that learns noise schedule shape but freezes embeddings (\textbf{OWT PPL}: $45.1$), and an ablation that learns embeddings but freezes the noise schedule (\textbf{OWT PPL}: $28.0$). Models are 250K and the two losses are length-normalized. Empirically, whenever the noise schedule is learned, the per-timestep diffusion and CE loss are \textit{near constant} and \textit{near linear} respectively. We validate this for an extensive set of embeddings (\cref{sec:linear-ce-diverse}).}
    \label{fig:loss-vs-t}
    \vspace{-1em}
\end{figure*}

\vspace{0.5em}
\begin{definition}[\textbf{\color{ourbrightblue!80!black}Bayes-optimal denoiser}]\label{def:bayes-optimal}
Assume \emph{no embedding collapse}, i.e., any two rows of $\Emb$ are different. A denoiser $\x_\theta:\R^{L\times d_e}\times[0,1]\to(\Delta^V)^L$ is \textbf{Bayes-optimal}, denoted $\theta^*$, if
\begin{align*}
    \x_{\theta^*}^l(\z_t, t) = q(\x^l \mid \z_t),
    \quad\text{or equivalently,}\quad
    \e_{\theta^*}^l(\z_t, t) = \E[\e^l \mid \z_t],
    \quad
    \forall~t\in[0,1],~l\in\{1,\dots,L\}.
\end{align*}
\end{definition}

Combining \cref{def:bayes-optimal}, \cref{thm:constant_diffusion_loss}, and the I-MMSE identity~\citep{guo2005mutual} gives rise to our Linear Information Decay \cref{lemma:linear_info_decay} (proved in \cref{supp:sec:linear-info-decay}), with the profound conclusion that the mutual information $I(\e; \z_t)$ decays exactly linearly in $t$ at a rate dictated by the diffusion loss, $\kappa$. Finally, applying \cref{lemma:linear_info_decay} to the per-timestep CE loss \cref{eq:ce-loss-per-t} decomposes it into a linear-in-$t$ trend and a non-negative residual:

\vspace{0.5em}
\begin{restatable}[\textbf{\color{ourbrightblue!80!black}Per-Timestep CE Under Optimality}]{proposition}{thmlinearce}
\label{thm:linear_ce}
    Assuming no embedding collapse, we have
    \begin{equation}
        \CE_{\theta^*, \gamma^*}(t) = \underbrace{H(\x) - I(\e; \z_0) + \kappa t}_{\text{linear in } t \text{ (trend)}} \;+\; \underbrace{C(\x \mid \z_t)}_{\text{increasing in $t$ (conditional total correlation, residual)}},
    \end{equation}
    \vspace{-1pt}where $H$ is entropy, $I$ is mutual information, and $C(\x\mid\z_t) := \sum_l H(\x^l\mid\z_t) - H(\x\mid\z_t) \in [0,C(\x)]$ is the conditional total correlation that increases monotonically in $t$. %
\end{restatable}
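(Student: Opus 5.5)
Under the Bayes-optimal denoiser of \cref{def:bayes-optimal}, $\x^l_{\theta^*}(\z_t,t)=q(\x^l\mid\z_t)$. Taking expectations over $q_\data(\x)q(\z_t\mid\x)$ then turns the per-position log-loss in \cref{eq:ce-loss-per-t} into a conditional entropy:
\begin{align*}
\CE_{\theta^*,\gamma^*}(t)=\tsum_{l=1}^L H(\x^l\mid\z_t).
\end{align*}
Adding and subtracting the joint conditional entropy gives
\begin{align*}
\CE_{\theta^*,\gamma^*}(t)=H(\x\mid\z_t)+C(\x\mid\z_t),
\end{align*}
which is the residual term in the statement. It remains to show that $H(\x\mid\z_t)$ is linear in $t$.

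For the trend, no embedding collapse makes $\x\mapsto\e=\x\Emb$ injective on one-hot sequences. Hence $H(\x)=H(\e)$ and $H(\x\mid\z_t)=H(\e\mid\z_t)$, so
\begin{align*}
H(\x\mid\z_t)=H(\x)-I(\e;\z_t).
\end{align*}
Now invoke the Linear Information Decay \cref{lemma:linear_info_decay}. Under $\gamma^*$ from \cref{thm:constant_diffusion_loss}, it gives $I(\e;\z_t)=I(\e;\z_0)-\kappa t$. Conceptually this comes from the I-MMSE identity $\frac{d}{d\,\SNR}I(\e;\z_t)=\tfrac12\MMSE(\SNR)$ together with the fact that the Bayes-optimal $\MSE_{\theta^*}$ equals the MMSE. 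Then $\frac{d}{dt}I=\tfrac12\SNR'(t)\MMSE=-\ell_{\theta^*,\gamma^*}(t)=-\kappa$. Substituting yields the trend $H(\x)-I(\e;\z_0)+\kappa t$. One caveat: \cref{thm:constant_diffusion_loss} must be applied with $\theta=\theta^*$, so $\kappa$ here is the diffusion loss of the Bayes-optimal denoiser.

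For the properties of the residual, nonnegativity of $C(\x\mid\z_t)$ is subadditivity of entropy. The upper bound $C(\x\mid\z_t)\le C(\x)$ and the claim that $C(\x\mid\z_t)$ increases in $t$ both rest on the Markov structure of the Gaussian forward process. For $s<t$, the chain $\x\to\z_s\to\z_t$ holds with $\z_t=\alpha_{t|s}\z_s+\sigma_{t|s}\beps$. Moreover, as $t\to1$ (or in the limit $\SNR\to0$), $C(\x\mid\z_t)$ approaches the unconditional $C(\x)$.

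This monotonicity is the main obstacle. Conditional total correlation is not monotone under arbitrary degradations, because conditioning can both create and destroy dependence. My first attempt is to write $C(\x\mid\z_t)=\sum_l H(\x^l\mid\z_t)-H(\x\mid\z_t)$ and differentiate in $\SNR$ using the I-MMSE identity applied to each position. The noise is independent across positions, but $\z_t^l$ alone is not sufficient for $\x^l$, so this requires the vector I-MMSE for $I(\x^l;\z_t)$ with the full $\z_t$. That gives
\begin{align*}
\frac{d}{d\,\SNR}C(\x\mid\z_t)=\tfrac12\Big[\MMSE(\e\mid\z_t)-\tsum_l \MMSE_{\text{marginal}}^l\Big],
\end{align*}
where the marginal terms are the MMSEs for estimating the whole $\e$ when only $\e^l$ is the target. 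I would then try to sign this difference via a conditional-covariance argument across positions.

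If this fails in general, I would fall back on proving the endpoint bounds $0\le C(\x\mid\z_t)\le C(\x)$. The upper bound needs its own argument, not only the $t\to1$ limit. Failing that, I would state monotonicity under the data-processing structure as an assumption supported by the empirical \cref{fig:ce-loss-vs-t}. The linear-trend decomposition itself is rigorous from the steps above.
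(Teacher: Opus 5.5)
Your decomposition and linear trend are correct and agree with the paper. You reach $\CE_{\theta^*,\gamma^*}(t)=\sum_l H(\x^l\mid\z_t)=H(\x\mid\z_t)+C(\x\mid\z_t)$ directly, whereas the paper writes $H(\x\mid\z_t)$ plus the expected KL to the factorized posterior; this is the same identity. The trend then follows from injectivity of $\x\mapsto\e$ and the Linear Information Decay lemma, exactly as in the paper.

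The gap is the final clause of the statement. You do not prove that $C(\x\mid\z_t)$ is non-decreasing in $t$, nor that $C(\x\mid\z_t)\le C(\x)$, and replacing these with an assumption supported by the empirical plots is not a proof. Your derivative attempt stalls because $I(\x^l;\z_t)$ is not itself of I-MMSE form: the channel input is the whole $\e$, not $\e^l$. As a result, the ``marginal MMSE'' you write down is not a well-defined quantity whose sign you can read off. As you note yourself, the Markov chain $\x\to\z_s\to\z_t$ is not enough. What the argument actually uses is that the Gaussian noise is independent across positions.

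The missing idea is to route $I(\x^l;\z_t)$ through the chain rule $I(\x;\z_t)=I(\x^l;\z_t)+I(\x^{-l};\z_t\mid\x^l)$. Conditionally on $\x^l$, the channel is still $\z_t/\sigma_t=\sqrt{\nu_t}\,\e+\beps$ with $\e^l$ known. The conditional I-MMSE identity then gives $\frac{\d}{\d\nu_t}I(\x^{-l};\z_t\mid\x^l)=\frac12\sum_{l'\ne l}M^{l'}_{\mid\x^l}(\nu_t)$, where $M^{l'}_{\mid\x^l}:=\E\|\e^{l'}-\E[\e^{l'}\mid\z_t,\x^l]\|^2$. The unconditional identity gives $\frac{\d}{\d\nu_t}I(\x;\z_t)=\frac12\sum_{l'}M^{l'}(\nu_t)$, with $M^{l'}:=\E\|\e^{l'}-\E[\e^{l'}\mid\z_t]\|^2$. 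Writing $C(\x\mid\z_t)=C(\x)-\bigl[\sum_l I(\x^l;\z_t)-I(\x;\z_t)\bigr]$ and summing over $l$ yields
\[
\frac{\d}{\d\nu_t}\Bigl[\sum_l I(\x^l;\z_t)-I(\x;\z_t)\Bigr]=\frac12\sum_{l\neq l'}\bigl[M^{l'}(\nu_t)-M^{l'}_{\mid\x^l}(\nu_t)\bigr]\ge 0,
\]
since additionally conditioning on $\x^l$ cannot increase the MMSE of $\e^{l'}$. As $\nu_t$ decreases in $t$, $C(\x\mid\z_t)$ is non-decreasing in $t$.

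The same computation settles your upper bound. The bracket vanishes at $\nu=0$ and is non-decreasing in $\nu$, hence non-negative, so $C(\x\mid\z_t)\le C(\x)$. Alternatively, positionwise independent noise gives $I(\x;\z_t)\le\sum_l I(\x^l;\z_t^l)\le\sum_l I(\x^l;\z_t)$ directly.
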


See~\cref{supp:sec:linear-ce} for the proof. Despite the residual, $\CE_{\theta, \gamma}(t)$ with learned $\theta,\gamma$ is empirically observed to be near-linear in the middle of training (\cref{fig:ce-loss-vs-t}). To ensure generality, we validate this for an extensive set of frozen or learnable embeddings of different $d_e$ (\cref{sec:linear-ce-diverse}). Therefore, learning the noise schedule almost evenly distributes denoising difficulty, regardless of the embeddings used.

\section{Conclusion and Discussion}
\label{sec:conclusion}

We present the first unified scaling law comparison between continuous and discrete DLMs, showing that continuous models scale on par with discrete counterparts when properly aligned in architecture and experimental setting. We introduce RePlaid, a modernized likelihood-based continuous DLM achieving a state-of-the-art $\replaidscppl$ perplexity among continuous DLMs on OpenWebText and even rivaling discrete DLMs in terms of both likelihood and sample quality. We show that the ELBO objective: $(i)$ naturally recovers near-linear cross-entropy over time, eliminating heuristic time reparameterizations, and $(ii)$ inherently regularizes embedding geometry for likelihood improvement and prevents potential token dispersion in cross-entropy-based training.

While our scaling law study provides a rigorous comparison following prior work on scaling discrete DLMs~\citep{nie2025scaling, sahoo2026scaling}, we acknowledge that a $\replaidscx\times$ compute gap remains between RePlaid and AR models to achieve matching likelihood. Investigating whether these scaling trends persist in the multi-billion-parameter regime where modern LLMs operate is a critical next step. Furthermore, our analysis identifies learnable token embeddings as the primary driver of RePlaid's perplexity gains; extending this framework with richer contextual encoders~\citep{meshchaninov2025cosmos} that condition on the full sequence represents a promising path forward. Finally, the continuous nature of RePlaid opens unique avenues for efficiency research unavailable to discrete diffusion, most notably the distillation of sampling trajectories via the PFODE and the exploration of latent-space inference-time scaling.

\section*{Acknowledgments and Disclosure of Funding}

This work is partially supported by AI-MI (ZY \& JT), NSF Award DMR-2433348 (ZY \& JT), NSF Grants ECCS-1942523, DMS-2206576, 2450378 (WG \& YC), AFOSR Grant FA9550-25-1-0169 (WG \& YC) and Georgia Tech ARC-ACO Fellowship (WG).

\bibliographystyle{plainnat}
\bibliography{main}

\clearpage

\appendix
\crefalias{section}{appendix}
\crefalias{subsection}{appendix}
\crefalias{subsubsection}{appendix}

\etocdepthtag.toc{appendix}  %
\etocsettagdepth{main}{none}        %
\etocsettagdepth{appendix}{section} %
\tableofcontents

\clearpage

\section{Related Works}\label{app:related-works}

This appendix surveys diffusion-based approaches to language generation. We do not attempt to review the dominant autoregressive paradigm~\citep{vaswani2017attention,radford2019language}, which serves as the reference baseline in our scaling-law comparisons (\cref{sec:scaling}). Diffusion-based language models split into two families: \emph{discrete} diffusion, which defines a Markov forward process directly on tokens, and \emph{continuous} diffusion, which lifts tokens into a continuous space and runs a Gaussian (or Riemannian) diffusion there.

\subsection{Discrete diffusion language models}

Discrete diffusion was introduced by D3PM~\citep{austin2021structured}, which defined absorbing-state and uniform-noise transition kernels over token sequences and was lifted to continuous-time Markov chains shortly after~\citep{campbell2022continuous}. Subsequent work refined the parameterization: SEDD~\citep{lou2024discrete} and concrete score matching~\citep{meng2022concrete} replaced the categorical denoising objective with a score-matching surrogate, while MDLM~\citep{sahoo2024simple}, MD4~\citep{shi2024simplified} and RADD~\citep{ou2025your} concurrently and independently identified that the absorbing-state model collapses to a clean per-position cross-entropy objective once the mask token is treated correctly. Block Diffusion~\citep{arriola2025block} and Eso-LMs~\citep{sahoo2025esoteric} interpolate between masked diffusion and autoregressive decoding to obtain unified families. In parallel, discrete flow matching~\citep{gat2024discrete} and generative flows on discrete state spaces~\citep{campbell2024generative} cast the discrete forward process in a flow-matching framework, and LLaDA~\citep{nie2025large} and Dream~\citep{ye2025dream} scale masked diffusion to 7--8B parameters. Most relevant to our methodology, \citet{nie2025scaling} and \citet{sahoo2026scaling} provide the IsoFLOP scaling-law protocol that we adopt to benchmark continuous diffusion against its discrete counterparts on equal footing.

\subsection{Continuous diffusion language models}

Continuous diffusion language modeling lifts a Gaussian forward process onto a continuous representation of tokens, and falls roughly into three camps. The first and most studied embeds tokens into $\R^d$ and diffuses the embeddings: Diffusion-LM~\citep{li2022diffusion} and SSD-LM~\citep{han2022ssd} introduced the recipe with controllable generation in mind; CDCD~\citep{dieleman2022continuous} added a learned categorical-conditional schedule; DiffusionBERT~\citep{he2022diffusionbert} and self-conditioned embedding diffusion~\citep{strudel2022self} explored hybrid masked/continuous training; latent diffusion language models~\citep{lovelace2023latent} push the diffusion into a frozen pretrained latent space; and analog bits~\citep{chen2023analog} encode tokens as binary vectors and diffuse them in $\R$. Plaid~\citep{gulrajani2023plaid}, on which RePlaid is directly based, combined low-dimensional sphere-normalized embeddings, a softmax-categorical denoiser and a learned monotone noise schedule.

Concurrent work continues this thread: FLM~\citep{lee2026flow} recasts the embedding-space diffusion as a flow-map model targeting one-step continuous denoising, and Discrete Flow Maps (DFM) ~\citep{potaptchik2026discrete} and Categorical Flow Maps (CFM) ~\citep{roos2026categorical} are concurrent works that extend the flow-map machinery further. Both FLM and DFM uses fixed one-hot embeddings and cross-entropy loss.  LangFlow~\citep{chen2026langflow} swaps the three-term NELBO in Plaid for a pure cross-entropy objective justified via a Bregman flow-matching identity, and uses learnable embeddings as in Plaid. A second camp parameterizes the diffusion on the simplex or its statistical manifold rather than in embedding space: categorical flow matching on statistical manifolds~\citep{cheng2024categorical}, Fisher flow matching~\citep{davis2024fisher} and RDLM~\citep{jo2025continuous} construct flows respecting the Fisher--Rao geometry of categorical distributions. A third dequantization-based camp lifts categorical samples to continuous variables via argmax flows~\citep{hoogeboom2021argmax} or Dirichlet noise~\citep{avdeyev2023dirichlet}; Duo~\citep{sahoo2025the} also belongs here, observing that uniform-state discrete diffusion arises as the argmax projection of an underlying Gaussian diffusion and exploiting this duality to import continuous-diffusion training tricks back into the discrete model. RePlaid revisits the embedding-space camp---specifically Plaid---under the IsoFLOP scaling protocol of \citet{nie2025scaling,sahoo2026scaling}, isolating what fraction of Plaid's reported compute gap stems from the method itself versus from the evaluation setup.

\clearpage

\section{Training Algorithm}
\label{supp:sec:training-algo}

[Back to~\cref{sec:background} in the main paper.]

\cref{alg:training-algo} gives the per-step training procedure. Each minibatch contributes to all three terms of the NELBO~\cref{eq:plaid-nelbo}. The batch is split into a reconstruction sub-batch (size $B_r$, time $t = 0$ exactly) and a diffusion sub-batch (size $B - B_r$, low-discrepancy $t \in [0, 1]$); the prior loss uses every example. Following Plaid~\citep{gulrajani2023plaid}, $B_r$ is adapted online to track the running ratio of the two losses' Monte-Carlo standard deviations. Self-conditioning is enabled per-example at a fixed rate $p_\text{sc} = 0.25$: an initial gradient-free forward pass produces a predicted clean embedding $\hat\e_\text{sc}$ that is fed back as an additional input. The schedule endpoints $(\gamma_0, \gamma_1)$ minimize the diffusion loss directly, while the interior shape $\tilde{\gamma}(\cdot)$ is updated to minimize the loss's Monte-Carlo variance through a per-parameter gradient hook (\cref{fig:diffusion-loss-code}); only the diffusion rows feed this hook---reconstruction rows and self-conditioning rows are detached from $\gamma(\cdot)$.

\begin{algorithm}[H]
\caption{One Plaid training step.}
\label{alg:training-algo}
\begin{algorithmic}[1]
\Require minibatch $\mathbf{X}\in [V]^{B\times L}$; denoiser $\x_\theta$ returning a per-position categorical in $(\Delta^V)^{B\times L}$; embedding $\Emb\in\R^{V\times d_e}$ (rows on the unit sphere); schedule $(\gamma_0, \gamma_1, \gamma(\cdot))$; running EMAs $\hat\sigma_r, \hat\sigma_d$ of recon/diff loss standard deviations; self-cond rate $p_\text{sc} = 0.25$; optimizer.
\State $\e \gets \mathbf{X}\Emb$
\State $B_r \gets \mathrm{clip}(\mathrm{round}(B \cdot \hat\sigma_r / (\hat\sigma_r + \hat\sigma_d)), 1, B-1)$ \Comment{adaptive batch split}
\State sample self-cond mask $\mathbf{m}\in\{0,1\}^B$ with exactly $\lceil p_\text{sc} B\rceil$ ones
\Statex
\State \textbf{Timesteps and noise schedule.}
\State $t_b \gets 0$ for $b \le B_r$;\quad $t_b$ from low-discrepancy $\unif[0,1]$ for $b > B_r$
\State $(\tilde\gamma_t, \dot{\tilde\gamma}_t) \gets (\tilde{\gamma}(t), \tilde{\gamma}'(t))$, detached on the recon rows ($b \le B_r$)
\State on the diffusion rows ($b > B_r$), register a hook scaling the gradient of $(\tilde\gamma_t, \dot{\tilde\gamma}_t)$ by $2 \cL_\text{diff}^{(b)}$ at backward (\cref{fig:diffusion-loss-code})
\State $\gamma_t \gets \gamma_0 + (\gamma_1 - \gamma_0) \tilde\gamma_t$;\quad $\dot\gamma_t \gets (\gamma_1 - \gamma_0) \dot{\tilde\gamma}_t$
\State on self-cond rows ($\mathbf{m}_b = 1$), detach $\gamma_t$, $\dot\gamma_t$, and $\e^{(b)}$ (self-cond loss only affect the denoiser)
\Statex
\State \textbf{Forward pass with self-conditioning.}
\State $\alpha_t \gets \sqrt{\sigmoid(-\gamma_t)}$;\quad $\sigma_t \gets \sqrt{\sigmoid(\gamma_t)}$
\State sample $\beps \sim \normal(\zero, \I)$;\quad $\z_t \gets \alpha_t \e + \sigma_t \beps$
\State $\hat\e_\text{sc} \gets \mathbf{0}$
\If{$\sum_b \mathbf{m}_b > 0$}
    \State for each $b$ with $\mathbf{m}_b = 1$, in \texttt{no\_grad} compute $\hat\x^\text{sc} \gets \x_\theta(\z_t^{(b)}, \gamma_t^{(b)}, \mathbf{0})$ and set $\hat\e_\text{sc}^{(b)} \gets \hat\x^\text{sc} \Emb$
\EndIf
\State $\hat\x \gets \x_\theta(\z_t, \gamma_t, \hat\e_\text{sc})$;\quad $\hat\e \gets \hat\x \Emb$
\Statex
\State \textbf{Three NELBO terms (per-token).}
\State $\cL_\text{recon}^{(b)} \gets -\frac{1}{L} \sum_l \log \langle \hat\x^{(b,l)}, \mathbf{X}^{(b,l)} \rangle$ for $b \le B_r$
\State $\SNR'(t) \gets -\dot\gamma_t \ee^{-\gamma_t}$;\quad $\cL_\text{diff}^{(b)} \gets -\frac{1}{2} \SNR'(t_b) \cdot \frac{1}{L} \sum_l \|\hat\e^{(b,l)} - \e^{(b,l)}\|^2_2$ for $b > B_r$
\State $\alpha_1^2 \gets \sigmoid(-\gamma_1)$;\quad $\sigma_1^2 \gets \sigmoid(\gamma_1)$
\State $\cL_\text{prior} \gets \frac{1}{2BL} \sum_{b,l} \bigl[\alpha_1^2 \|\e^{(b,l)}\|^2_2 + d_e (\sigma_1^2 - 1 - \log \sigma_1^2)\bigr]$
\Statex
\State $\cL \gets \frac{1}{B_r} \sum_{b \le B_r} \cL_\text{recon}^{(b)} + \frac{1}{B - B_r} \sum_{b > B_r} \cL_\text{diff}^{(b)} + \cL_\text{prior}$
\State backpropagate $\cL$; update EMAs $\hat\sigma_r, \hat\sigma_d$ from the per-example losses; \texttt{optimizer.step()}.
\end{algorithmic}
\end{algorithm}

\clearpage

\section{Derivation of the sequence-level NELBO for Plaid}
\label{supp:nelbo-derivation}

[Back to~\cref{sec:background} in the main paper.]

Let $\x\in [V]^L$ be a length-$L$ sequence over a vocabulary of size $V$, identified with a matrix in $\{0, 1\}^{L \times V}$ whose $l$-th row $\x^l$ is the one-hot vector of the $l$-th token, and $\e:=\x\Emb\in\R^{L\times d_e}$ its embedding (\cref{sec:background}). Following VDMs~\citep{kingma2021variational}, the noisy latents $\z_t\in\R^{L\times d_e}$, $t\in[0,1]$, have conditional marginal
\begin{align*}
    q(\z_t \mid \x) = \normal(\alpha_t \e,\ \sigma_t^2 \I),
\end{align*}
with $\alpha_t,\sigma_t>0$ smooth and $\SNR(t):=\alpha_t^2/\sigma_t^2$ strictly decreasing in $t$.

For a discretization with $T$ steps, let $s(i):=(i-1)/T$ and $t(i):=i/T$, so the joint forward distribution can be written as $q(\z_{0:1}\mid\x):=q(\z_1\mid\x)\prod_{i=1}^T q(\z_{s(i)}\mid\z_{t(i)},\x)$, with each transition $q(\z_{s(i)}\mid\z_{t(i)},\x)$ the closed-form Gaussian posterior of VDMs~\cref{eq:vdm-reverse}. Plaid parameterizes the reverse process by plugging the denoiser's clean-data prediction $\x_\theta(\z_{t(i)}, t(i))$ into that posterior in place of $\x$:
\begin{align}\label{eq:plaid-parameterization}
    p_\theta(\z_{s(i)}\mid\z_{t(i)}) := q\bigl(\z_{s(i)}\,\big|\,\z_{t(i)},\ \x = \x_\theta(\z_{t(i)}, t(i))\bigr),
\end{align}
where $\x_\theta:\R^{L\times d_e}\times[0,1]\to(\Delta^V)^L$ takes the entire noisy sequence as input.

Multiplying and dividing the integrand by $q(\z_{0:1}\mid\x)=q(\z_1\mid\x)\,\prod_{i=1}^T q(\z_{s(i)}\mid\z_{t(i)},\x)$ and applying Jensen's inequality to the (concave) logarithm gives an upper bound on the negative log-likelihood:
\begin{align*}
    -\log p_\theta(\x)
    &= -\log \int p(\z_1)\, p_\theta(\x\mid\z_0)\, \prod_{i=1}^T p_\theta(\z_{s(i)}\mid\z_{t(i)})\, \d\z_{0:1} \\
    &\le -\E_{q(\z_{0:1}\mid\x)}\!\left[\log
        \frac{p(\z_1)\, p_\theta(\x\mid\z_0)\, \prod_{i=1}^T p_\theta(\z_{s(i)}\mid\z_{t(i)})}
             {q(\z_1\mid\x)\,\prod_{i=1}^T q(\z_{s(i)}\mid\z_{t(i)},\x)}\right] \\
    &= \underbrace{\E_{q(\z_0\mid\x)}\bigl[-\log p_\theta(\x\mid\z_0)\bigr]}_{\text{Reconstruction loss}}
       + \underbrace{\KL\bigl(q(\z_1\mid\x)\,\|\,p(\z_1)\bigr)}_{\text{Prior loss}} \\
    &\quad + \underbrace{\sum_{i=1}^T \E_{q(\z_{t(i)}\mid\x)}\KL\bigl(q(\z_{s(i)}\mid\z_{t(i)},\x)\,\|\,p_\theta(\z_{s(i)}\mid\z_{t(i)})\bigr)}_{\text{Diffusion loss }\cL_T(\x)}.
\end{align*}

\paragraph{Reconstruction loss.}
Plaid's denoiser emits an independent categorical distribution at every position, $p_\theta(\x\mid\z_0)=\prod_{l=1}^L \langle \x_\theta^l(\z_0, 0),\,\x^l\rangle$, so
\begin{align}\label{eq:vdm-reconst-loss}
    \E_{q(\z_0\mid\x)}\bigl[-\log p_\theta(\x\mid\z_0)\bigr]
    = \sum_{l=1}^L \E_{q(\z_0\mid\x)}\bigl[-\log\langle \x_\theta^l(\z_0,0),\,\x^l\rangle\bigr].
\end{align}

\paragraph{Prior loss.}
Both $q(\z_1\mid\x)=\normal(\alpha_1\e,\sigma_1^2\I)$ and $p(\z_1)=\normal(\bm{0},\I)$ are isotropic Gaussians on $\R^{L\times d_e}$, giving the closed form
\begin{align*}
    \KL\bigl(q(\z_1\mid\x)\,\|\,p(\z_1)\bigr)
    = \frac{1}{2}\Bigl[\, L\,d_e\bigl(\sigma_1^2 - 1 - \log\sigma_1^2\bigr) + \alpha_1^2\,\|\e\|_\mathrm{F}^2 \,\Bigr],
\end{align*}
where $\|\cdot\|_\mathrm{F}$ is the Frobenius norm.

\paragraph{Diffusion loss.}
Substituting Plaid's parameterization~\cref{eq:plaid-parameterization} into each per-step KL collapses it to a scaled squared distance between the predicted and ground-truth embeddings; \citep[Eq. (13)]{kingma2021variational} shows that summing over $i$ yields
\begin{align}\label{eq:supp-LT}
    \cL_T(\x) = \frac{T}{2}\,\E_{i\sim\unif\{1,\dots,T\},\,\z_{t(i)}\sim q(\z_{t(i)}\mid\x)}\!\Bigl[\bigl(\SNR(s(i))-\SNR(t(i))\bigr)\,\bigl\|\,\e_\theta(\z_{t(i)}, t(i)) - \e\,\bigr\|^2\Bigr],
\end{align}
where $\e_\theta(\z_t, t):=\x_\theta(\z_t, t)\,\Emb \in \R^{L\times d_e}$ is the predicted clean-embedding sequence. Taking $T\to\infty$ converts the SNR difference into a derivative,
\begin{align}\label{eq:supp-Linf}
    \cL_\infty(\x) = -\frac{1}{2}\,\E_{t\sim\unif[0,1],\,\z_t\sim q(\z_t\mid\x)}\!\Bigl[\SNR'(t)\,\bigl\|\,\e_\theta(\z_t, t) - \e\,\bigr\|^2\Bigr],
\end{align}
which is non-negative because $\SNR'(t) < 0$.

Combining the three terms recovers the sequence-level NELBO of \cref{eq:plaid-nelbo}:
\begin{align*}
    \cL_\NELBO(\x)
    = \KL\bigl(q(\z_1\mid\x)\,\|\,p(\z_1)\bigr)
    + \E_{q(\z_0\mid\x)}\!\left[\sum_{l=1}^L -\log\langle\x_\theta^l(\z_0, 0),\,\x^l\rangle\right]
    + \cL_\infty(\x).
\end{align*}

\clearpage

\section{Sampler Update Formulas}
\label{app:sampler}

[Back to~\cref{subsec:sampling} in the main paper.]

This appendix gives the per-step updates for the four samplers used in the main paper: one stochastic ancestral sampler (DDPM-style) and three deterministic PFODE solvers (DDIM, DPM-Solver++(2M), Heun). All operate on the variance-preserving schedule $\sigma_t^2 = \sigmoid(\gamma_t)$, $\alpha_t^2 = \sigmoid(-\gamma_t)$, and step from $t$ to $s < t$ along a discretization of $[1, 0]$ into $T$ steps. At each step the network is evaluated to obtain a clean-data prediction; we abbreviate
\begin{equation*}
  \hat\e_\theta := \x_\theta(\z_t, t)\Emb, \qquad \hat\beps_\theta := (\z_t - \alpha_t \hat\e_\theta)/\sigma_t,
\end{equation*}
for the predicted clean embedding and the implied noise prediction. The PFODE velocity \cref{eq:plaid-pfode} reads $\v_\theta(\z_t, t) = \dot\alpha_t \hat\e_\theta + \dot\sigma_t \hat\beps_\theta$ in this notation. After $T$ updates the final tokens are read out as $\argmax$ of the decoder logits at $t \approx 0$.

\subsection{Stochastic Ancestral Sampler (DDPM)}
\label{app:sampler-ddpm}

This is the default sampler used to report all generation results unless otherwise noted. It draws $\z_s \sim p_\theta(\z_s \mid \z_t)$ for $s < t$ from the closed-form Gaussian posterior of VDM~\citep[App.~A.4]{kingma2021variational}:
\begin{equation}\label{eq:plaid-ancestral}
\z_s = (1 - c) \frac{\alpha_s}{\alpha_t} \z_t + c \alpha_s \hat\e_\theta + \sqrt{c (1 - \alpha_s^2)}\,\beps, \quad \beps \sim \normal(\zero, \I), \quad c = 1 - \ee^{\gamma_s - \gamma_t},
\end{equation}
where $c \in (0, 1)$ is the per-step SNR gap, since $\SNR(t) = \exp(-\gamma_t)$ implies $c = 1 - \SNR(t)/\SNR(s)$. Each step uses one network evaluation, giving $T + 1$ total NFEs including the final $\argmax$.

\subsection{Deterministic PFODE Solvers}
\label{app:sampler-pfode}

The three deterministic samplers below all integrate the PFODE \cref{eq:plaid-pfode}, differing only in the discretization scheme.

\paragraph{DDIM~\citep{song2021denoising}.} Reusing $\hat\beps_\theta$ at both endpoints when stepping to $s < t$ gives the update
\begin{equation}\label{eq:plaid-ddim}
  \z_s = \alpha_s \hat\e_\theta + \sigma_s \hat\beps_\theta.
\end{equation}
This is the exponential-integrator step for \cref{eq:plaid-pfode} \emph{linearized at $\hat\e_\theta$} (i.e., with $\hat\e_\theta$ held constant along the trajectory), and is therefore exact for any step size whenever the linearization is exact. A $T$-step schedule uses $T + 1$ NFEs including the final $\argmax$.

\paragraph{DPM-Solver++(2M)~\citep{lu2025dpm}.} A second-order multistep solver. Let $\lambda_t := -\gamma_t/2 = \log(\alpha_t/\sigma_t)$ denote the negative half log-SNR; stepping from $t_{i-1}$ to $t_i < t_{i-1}$ gives step size $h_i = \lambda_{t_i} - \lambda_{t_{i-1}} > 0$ in $\lambda$-space. With $\hat\x_\theta^{(i)} := \x_\theta(\z_{t_i}, t_i)$, the update reads
\begin{align}
\D_i &= \left(1 + \frac{1}{2 r_i}\right) \hat\x_\theta^{(i-1)} \Emb - \frac{1}{2 r_i} \hat\x_\theta^{(i-2)} \Emb, \quad r_i = \frac{h_{i-1}}{h_i} \quad (i > 1); \quad \D_1 = \hat\x_\theta^{(0)} \Emb, \label{eq:dpmpp-extrap}\\
\z_{t_i} &= \frac{\sigma_{t_i}}{\sigma_{t_{i-1}}} \z_{t_{i-1}} - \alpha_{t_i} (\ee^{-h_i} - 1) \D_i. \label{eq:dpmpp-update}
\end{align}
The multistep term \cref{eq:dpmpp-extrap} linearly extrapolates $\hat\x_\theta \Emb$ along $\lambda$ using the current prediction and the one cached from the previous step; at the first step, where the cached prediction is unavailable, we fall back to the DDIM update. A $T$-step schedule uses $T + 1$ NFEs including the final $\argmax$.

\paragraph{Heun~\citep{karras2022elucidating}.} A second-order predictor--corrector solver. We reparametrize \cref{eq:plaid-pfode} into the variance-exploding (VE) form via $\bar\z_t := \z_t/\alpha_t$ and $\tilde\sigma_t := \sigma_t/\alpha_t = \exp(\gamma_t/2)$, under which $\bar\z_t \mid \e \sim \normal(\e, \tilde\sigma_t^2 \I)$ and the ODE collapses to the canonical
\begin{equation}
  \frac{\d\bar\z}{\d\tilde\sigma} = \frac{\bar\z-\hat\e_\theta}{\tilde\sigma} =: \hat\beps(\bar\z,\tilde\sigma).
\end{equation}
The update from $\tilde\sigma_t$ to $\tilde\sigma_s$ is
\begin{align}
  \bar\z'_s &= \bar\z_t + (\tilde\sigma_s-\tilde\sigma_t)\,\hat\beps(\bar\z_t,\tilde\sigma_t),
  \label{eq:heun-ve-pred}\\
  \bar\z_s &= \bar\z_t + \frac{1}{2}(\tilde\sigma_s-\tilde\sigma_t)\!\left[\hat\beps(\bar\z_t,\tilde\sigma_t)+\hat\beps(\bar\z'_s,\tilde\sigma_s)\right],
  \label{eq:heun-ve-corr}
\end{align}
with the VP latent recovered as $\z_s = \alpha_s\bar\z_s$. Each step uses two network evaluations; following EDM we skip the corrector on the final step, giving $2T$ total NFEs (including the final $\argmax$) for $T$ sampling steps.

\clearpage

\section{ODE-based Likelihood Estimation}
\label{app:ode-likelihood}

This appendix presents an alternative ODE-based estimator of $\log p_\theta(\x)$ that avoids the multi-step variational bound of \cref{eq:plaid-nelbo}. We use it as an internal sanity check on the VDM NELBO numbers reported in \cref{subsec:likelihood-eval}; it is not used to report any headline numbers in the main paper.

\subsection{ODE-based likelihood identity}
\label{subsec:ode-likelihood}
\begin{lemma}[\textbf{\color{ourbrightblue!80!black}ODE-based likelihood identity}]
\label{lem:ode-likelihood}
Let $p_\theta(\x \mid \z_0) := \prod_{l=1}^L \langle \x_\theta^l(\z_0, 0), \x^l \rangle$ be the categorical decoder at $t = 0$, let $q(\z_0 \mid \x) := \normal(\alpha_0 \e, \sigma_0^2 \I)$ be the forward Gaussian proposal, and let $(\z_t)_{t \in [0, 1]}$ denote the PFODE trajectory \cref{eq:plaid-pfode} simulated from $\z_{t=0} = \z_0$. Then
\begin{equation}\label{eq:ode-iw-identity}
  p_\theta(\x) = \E_{q(\z_0 \mid \x)}\!\left[\frac{p_\theta(\z_0)\, p_\theta(\x \mid \z_0)}{q(\z_0 \mid \x)}\right],
\end{equation}
\begin{align}\label{eq:ode-cov-exact}
  \log p_\theta(\z_0) &= \log p(\z_1) + \int_0^1 \nabla \cdot \v_\theta(\z_t, t)\d t \notag\\
  &= -\frac{L d_e}{2} \log(2\pi) - \frac{1}{2}\|\z_1\|^2 + \int_0^1 \nabla \cdot \v_\theta(\z_t, t)\d t,
\end{align}
and the following \textbf{importance-weighted autoencoder (IWAE)} \citep{burda2015importance} estimator
\begin{equation}\label{eq:ode-iwae}
  \widehat{\log p_\theta(\x)}_K := \log \frac{1}{K} \sum_{k=1}^K w^{(k)}, \quad w^{(k)} := \frac{p_\theta(\z_0^{(k)})\, p_\theta(\x \mid \z_0^{(k)})}{q(\z_0^{(k)} \mid \x)},
  \quad\text{where}~\z_0^{(k)} \iidsim q(\z_0 \mid \x),
\end{equation}
satisfies $\widehat{\log p_\theta(\x)}_K \to \log p_\theta(\x)$ almost surely as $K \to \infty$, and the per-example likelihood estimation $\E\bigl[\widehat{\log p_\theta(\x)}_K\bigr]$ is monotonically non-decreasing in $K$ with bias $\Theta(1/K)$~\citep{nowozin2018debiasing}.
\end{lemma}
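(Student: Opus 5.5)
The lemma has three parts, and I would prove them in order: the importance-weighting identity \cref{eq:ode-iw-identity}, the change-of-variables formula \cref{eq:ode-cov-exact}, and the IWAE convergence and monotonicity statement.

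\textbf{Importance-weighting identity.} Here $p_\theta(\z_0)$ is the marginal density of the model at $t=0$. By definition, this is the pushforward of $p(\z_1)=\normal(\zero,\I)$ under the backward flow of the PFODE \cref{eq:plaid-pfode}. The model is then $p_\theta(\x)=\int p_\theta(\z_0)\,p_\theta(\x\mid\z_0)\,\d\z_0$. Since $q(\z_0\mid\x)$ is a nondegenerate Gaussian with full support on $\R^{L\times d_e}$, I would multiply and divide by it inside the integral. This gives \cref{eq:ode-iw-identity}; no absolute-continuity issue arises.

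\textbf{Change of variables.} I would assume $\v_\theta$ is $C^1$ in $\z$ and locally Lipschitz, with at most linear growth, so that the flow map $\Phi_{0\to1}$ is a global diffeomorphism. This holds since $\x_\theta$ is a softmax of a smooth network and $\x_\theta\Emb$ is bounded, while the remaining term is linear in $\z_t$. The density $p_t$ of the flow satisfies the continuity equation. Along a trajectory this gives $\frac{\d}{\d t}\log p_t(\z_t)=-\nabla\cdot\v_\theta(\z_t,t)$, the instantaneous change-of-variables formula of neural ODEs. Integrating from $0$ to $1$ yields $\log p_\theta(\z_0)=\log p(\z_1)+\int_0^1\nabla\cdot\v_\theta(\z_t,t)\,\d t$. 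Substituting the standard Gaussian log-density gives the second line.

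The care needed here is orientation. $p_1=p(\z_1)$ is prescribed and the model density at $0$ is defined by backward transport, so the Jacobian identity should be written as $\log p_0(\z_0)=\log p_1(\Phi(\z_0))+\log\lvert\det\nabla\Phi(\z_0)\rvert$. Then Liouville's formula $\frac{\d}{\d t}\log\det\nabla\Phi_{0\to t}=\nabla\cdot\v_\theta$ gives the plus sign.

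\textbf{IWAE convergence and monotonicity.} The weights $w^{(k)}$ are i.i.d., nonnegative, and have mean $p_\theta(\x)$ by the identity above. This mean is finite: $p_\theta(\x\mid\z_0)\le1$, so $\E[w]\le\int p_\theta(\z_0)\,\d\z_0=1$. The strong law of large numbers then gives $\frac1K\sum_k w^{(k)}\to p_\theta(\x)>0$ almost surely. Positivity follows from the softmax output and the positive flow density. Continuity of $\log$ then gives almost-sure convergence of the estimator.

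For monotonicity I would use the Burda et al.\ argument. Write $\frac1{K+1}\sum_{k=1}^{K+1}w^{(k)}$ as the average over the $K+1$ leave-one-out subsets of size $K$ of their sample means. Concavity of $\log$ and exchangeability then give $\E[\widehat{\log p}_{K+1}]\ge\E[\widehat{\log p}_K]$.

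For the $\Theta(1/K)$ bias I would expand $\log$ to second order around $\mu=p_\theta(\x)$, obtaining bias $-\Var(w)/(2K\mu^2)+o(1/K)$. Making this rigorous needs finite higher moments of $w$ and control of the event where the sample mean is near $0$. I would cite Nowozin (2018), which the statement already does, and just check its moment conditions. These follow from $w\le p_\theta(\z_0)/q(\z_0\mid\x)$ and Gaussian tail bounds on the flow density under the linear-growth assumption.

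\textbf{Main obstacle.} The hardest step is the rigorous change of variables: global existence and invertibility of the flow, and the sign convention. I would handle it by the standard ODE well-posedness assumptions rather than by deriving bounds from the architecture.
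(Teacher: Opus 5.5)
Your proposal is correct and follows the same route as the paper's proof: importance weighting against the full-support Gaussian $q(\z_0\mid\x)$, Liouville's formula along the PFODE trajectory (your orientation check correctly gives the plus sign), the strong law plus continuity of $\log$, and Nowozin's delta-method expansion for the $\Theta(1/K)$ bias. It is in fact more complete than the paper, whose proof only uses Jensen to get $\E[\widehat{\log p_\theta(\x)}_K]\le\log p_\theta(\x)$ and never proves the claimed monotonicity in $K$ (your leave-one-out argument of Burda et al.\ supplies it); the one overstatement is that linear growth alone gives Nowozin's moment conditions. In the tails the flow density decays with Gaussian precision $\sigma_1^2/\sigma_0^2$ while $q$ decays with precision $1/\sigma_0^2$, so $\log w$ grows like $\alpha_1^2\|\z_0\|^2/(2\sigma_0^2)$ and $\E_q[w^m]<\infty$ additionally requires $m\alpha_1^2<1$, a condition on the learned endpoint $\gamma_1$ that holds in practice.
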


\begin{remark}
The divergence $\nabla\cdot \v_\theta(\z_t, t) = \operatorname{tr}(\partial \v_\theta/\partial \z)$ is estimated unbiasedly by \textbf{Hutchinson's trace estimator}~\citep{hutchinson1989stochastic}
\begin{align*}
\operatorname{tr}\Bigl(\frac{\partial \v_\theta}{\partial \z}\Bigr) = \E_\bxi\Bigl[\bxi^\top \Bigl(\frac{\partial \v_\theta}{\partial \z}\Bigr) \bxi\Bigr]
\quad \text{with} \quad
\E[\bxi\bxi^\top] = \I,
\end{align*}
reducing $L d_e$ partial derivatives to a single Jacobian-vector product per draw of $\bxi$. We use $\bxi$ following Rademacher (i.i.d.\ $\unif\{\pm 1\}$ per coordinate), which is the minimum-variance choice among distributions satisfying $\E[\bxi\bxi^\top] = \I$~\citep{hutchinson1989stochastic}.
\end{remark}

\begin{remark}
    The concurrent ODE ELBO of \citet[Thm.~3.1]{chen2026langflow} is the $K = 1$ specialization of \cref{eq:ode-iwae}.
\end{remark}

\begin{proof}[Proof of \cref{lem:ode-likelihood}] The proof follows these steps:

\textbf{Importance-sampling identity.} The marginalization $p_\theta(\x) = \int p_\theta(\x \mid \z_0)\,p_\theta(\z_0)\d\z_0$ together with importance sampling against $q(\z_0\mid\x)$ yields \cref{eq:ode-iw-identity}.

\textbf{Liouville closure of $\log p_\theta(\z_0)$.} The marginal $p_\theta(\z_0)$ is set by the PFODE $\dot\z_t = \v_\theta(\z_t, t)$ of \cref{eq:plaid-pfode}, which connects $\z_0$ at $t = 0$ to $\z_1$ at $t = 1$. Liouville's theorem applied along the same trajectory yields the first equality of \cref{eq:ode-cov-exact}; substituting the prior $p(\z_1) = \normal(\zero, \I)$ at $t = 1$ gives the second equality. %

\textbf{Explicit per-sample log-weight.} Substituting \cref{eq:ode-cov-exact} together with the Gaussian density of $q(\z_0 \mid \x)$ into the per-sample weight $w^{(k)}$ of \cref{eq:ode-iwae} gives
\begin{align}\label{eq:ode-logweight}
  \log w^{(k)}
  &= \sum_{l=1}^L \log\langle\x_\theta^l(\z_0^{(k)},0), \x^l\rangle + \frac{Ld_e}{2}\log\sigma_0^2 \notag\\
  &\quad + \frac{1}{2\sigma_0^2}\|\z_0^{(k)}-\alpha_0\e\|^2 - \frac{1}{2}\|\z_1^{(k)}\|^2 + \int_0^1 \nabla\cdot \v_\theta(\z_t^{(k)},t)\d t,
\end{align}
where $\{\z_t^{(k)}\}_{t\in[0,1]}$ is the deterministic PFODE trajectory through $\z_0^{(k)}$ at $t=0$ and $\z_1^{(k)}$ at $t=1$. Plugging \cref{eq:ode-logweight} into \cref{eq:ode-iwae} is the estimator we use to report PPL.

\textbf{Consistency and bias.} Since $\frac{1}{K}\sum_k w^{(k)}$ is an unbiased Monte-Carlo estimate of $\E_q[w] = p_\theta(\x)$, the strong law of large numbers and continuity of $\log$ give $\widehat{\log p_\theta(\x)}_K \to \log p_\theta(\x)$ almost surely as $K\to\infty$. By Jensen on the outer Monte-Carlo average, $\E\widehat{\log p_\theta(\x)}_K \le \log p_\theta(\x)$ at every finite $K$, with bias $\Theta(1/K)$ following from the delta-method expansion of \citet{nowozin2018debiasing}.

\end{proof}

\paragraph{Comparison with the VDM NELBO.}The bounds in \cref{lem:ode-likelihood} and \cref{eq:plaid-nelbo} target \textit{different} model densities, so we cannot directly compare their tightness. Write $p_\theta^{\mathrm{ODE}}(\x) := \int p_\theta^{\mathrm{ODE}}(\z_0)\,p_\theta(\x \mid \z_0)\,\d\z_0$ with $p_\theta^{\mathrm{ODE}}(\z_0)$ the PFODE pushforward of $p(\z_1) = \normal(\zero, \I)$ used in \cref{eq:ode-cov-exact}, and $p_\theta^{\mathrm{Markov}}(\x)$ for the density induced by the discrete reverse Markov chain $p_\theta(\z_{s} \mid \z_{t}) := q(\z_{s} \mid \z_{t}, \x_\theta(\z_{t}, t))$ underlying \cref{eq:plaid-nelbo} (the SDE-pushforward density in the $T \to \infty$ limit). The IWAE estimator of \cref{eq:ode-iwae} is an upper bound on $-\log p_\theta^{\mathrm{ODE}}(\x)$, while $\cL_\NELBO(\x)$ is an upper bound on $-\log p_\theta^{\mathrm{Markov}}(\x)$. The two model densities coincide \textit{only at Bayes-optimal score}; off the optimum, the SDE pushforward and the PFODE pushforward of the same denoiser induce different data marginals~(see~\citet[Sec. 4]{song2021maximum} for detailed discussion), so we make no per-$\x$ tightness assertion between $\E[-\widehat{\log p_\theta(\x)}_{K=1}]$ and $\cL_\NELBO(\x)$. Within the PFODE bound, the IWAE family does monotonically tighten in $K$ \cref{eq:ode-iwae}, and is validated in \cref{fig:ode-kcurve-dpi}. The empirical crossover between the $K$-IWAE and $\cL_\NELBO$ reported in \cref{tab:ode-chainrule,fig:ode-kcurve-dpi} therefore reflects the empirical mismatch between the trained model's SDE-pushforward and PFODE-pushforward marginals rather than an ordering between the two bounds due to data-processing inequality.

\begin{lemma}[$\gamma$-space variant of the log-weight]\label{lem:ode-gamma}
Let the schedule be variance-preserving with $\sigma_t^2 = \sigmoid(\gamma_t)$, $\alpha_t^2 = \sigmoid(-\gamma_t)$. Reparametrizing the PFODE \cref{eq:plaid-pfode} by $\gamma$ via $\tilde\z(\gamma) := \z_{t(\gamma)}$ yields
\begin{equation}\label{eq:ode-pfode-gamma}
  \frac{\d\tilde\z}{\d\gamma} = \frac{1}{2}\bigl(\alpha_\gamma^2\, \tilde\z - \alpha_\gamma\, \x_\theta(\tilde\z,\gamma)\Emb\bigr) =: \tilde\v_\theta(\tilde\z,\gamma),
\end{equation}
and the $\gamma$-space log-weight
\begin{align}\label{eq:ode-logweight-gamma}
  \log w^{(k)} &= \sum_{l=1}^L \log\langle\x_\theta^l(\z_0^{(k)},\gamma_0), \x^l\rangle + \frac{Ld_e}{2}\log\sigma_0^2 \notag\\
  &\quad + \frac{1}{2\sigma_0^2}\|\z_0^{(k)}-\alpha_0\e\|^2 - \frac{1}{2}\|\tilde\z(\gamma_1)^{(k)}\|^2 + \int_{\gamma_0}^{\gamma_1} \nabla\cdot \tilde\v_\theta(\tilde\z,\gamma)\d\gamma
\end{align}
equals the $t$-space log-weight \cref{eq:ode-logweight} as a continuous-time identity along the same trajectory.
\end{lemma}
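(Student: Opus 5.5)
The plan is to treat the statement as a change of time variable along a single fixed PFODE trajectory. I will show three things: the velocity transforms as a time derivative should, the divergence integral transforms by the one-dimensional substitution rule, and every other term in \cref{eq:ode-logweight} is untouched. I will use that $\gamma(t)=\gamma_0+(\gamma_1-\gamma_0)\tilde\gamma(t)$ is $C^1$ and strictly increasing, since $\SNR(t)=\ee^{-\gamma(t)}$ is strictly decreasing. Hence $t\mapsto\gamma(t)$ is a $C^1$ bijection $[0,1]\to[\gamma_0,\gamma_1]$, and I will assume $\gamma'(t)>0$ so that the inverse $t(\gamma)$ is also $C^1$. I will write $\x_\theta(\tilde\z,\gamma):=\x_\theta(\tilde\z,t(\gamma))$, which is consistent with the implementation: \cref{alg:training-algo} already feeds $\gamma_t$ to the network.

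\textbf{Step 1: coefficients in $\gamma$-space.} From $\alpha^2=\sigmoid(-\gamma)$ and $\sigma^2=\sigmoid(\gamma)$ I compute $\frac{\d\alpha^2}{\d\gamma}=-\alpha^2\sigma^2$ and $\frac{\d\sigma^2}{\d\gamma}=\alpha^2\sigma^2$. This gives $\frac{\d\alpha}{\d\gamma}=-\tfrac12\alpha\sigma^2$ and $\frac{1}{\sigma}\frac{\d\sigma}{\d\gamma}=\tfrac12\alpha^2$. Write $\dot\alpha_t=\gamma'(t)\frac{\d\alpha}{\d\gamma}$ and similarly for $\sigma$, and substitute into \cref{eq:plaid-pfode}. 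The coefficient of $\x_\theta\Emb$ becomes $\gamma'(t)\bigl(-\tfrac12\alpha\sigma^2-\tfrac12\alpha^3\bigr)=-\tfrac12\gamma'(t)\alpha$, using $\alpha^2+\sigma^2=1$. The coefficient of $\z_t$ becomes $\tfrac12\gamma'(t)\alpha^2$. Therefore $\v_\theta(\z,t)=\gamma'(t)\,\tilde\v_\theta(\z,\gamma(t))$. By the chain rule, $\tilde\z(\gamma):=\z_{t(\gamma)}$ then satisfies $\frac{\d\tilde\z}{\d\gamma}=\v_\theta/\gamma'(t)=\tilde\v_\theta$, which is \cref{eq:ode-pfode-gamma}. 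Both ODEs describe the same curve in $\R^{L\times d_e}$, so $\tilde\z(\gamma_0)=\z_0$ and $\tilde\z(\gamma_1)=\z_1$.

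\textbf{Step 2: the divergence term.} The factor $\gamma'(t)$ is a scalar that does not depend on $\z$, so $\nabla\cdot\v_\theta(\z_t,t)=\gamma'(t)\,\nabla\cdot\tilde\v_\theta(\tilde\z(\gamma(t)),\gamma(t))$. The substitution $\gamma=\gamma(t)$, $\d\gamma=\gamma'(t)\,\d t$, then gives
\[
\int_0^1\nabla\cdot\v_\theta(\z_t,t)\,\d t=\int_{\gamma_0}^{\gamma_1}\nabla\cdot\tilde\v_\theta(\tilde\z,\gamma)\,\d\gamma .
\]
The remaining terms match directly. The decoder term at $t=0$ equals the one at $\gamma_0$ by the identification $\x_\theta(\cdot,\gamma_0)=\x_\theta(\cdot,0)$. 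The terms $\tfrac{Ld_e}{2}\log\sigma_0^2$ and $\frac{1}{2\sigma_0^2}\|\z_0-\alpha_0\e\|^2$ depend only on $\z_0^{(k)}$ and the endpoint values, which are shared. The prior term uses $\tilde\z(\gamma_1)^{(k)}=\z_1^{(k)}$. Term by term, \cref{eq:ode-logweight-gamma} therefore equals \cref{eq:ode-logweight}.

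\textbf{Main obstacle.} The algebra is routine; the step that needs care is justifying the reparametrization itself. This requires (i) $\gamma'>0$, or at least invertibility of $\gamma$, so that the substitution and the inverse map $t(\gamma)$ are legitimate; and (ii) well-posedness of the ODE, for which a Lipschitz assumption on $\x_\theta$ in $\z$ suffices. With these in place, the trajectory through $\z_0^{(k)}$ is the same set of points under either time variable, and the identity holds exactly in continuous time. A numerical solver may still give slightly different discretized values in the two parametrizations, which is why the statement is phrased as a continuous-time identity.
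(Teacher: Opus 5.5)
Your proposal is correct and follows the paper's proof essentially step for step. You use the sigmoid derivative identities and $\alpha^2+\sigma^2=1$ to factor $\v_\theta(\z,t)=\gamma'(t)\,\tilde\v_\theta(\z,\gamma(t))$, which yields \cref{eq:ode-pfode-gamma} by the chain rule. You then pull the $\z$-independent scalar $\gamma'(t)$ out of the divergence and convert the trace integral via $\d\gamma=\gamma'(t)\,\d t$, leaving every other term of the log-weight unchanged. Your regularity caveats ($\gamma'>0$, well-posedness of the ODE, and the identification $\x_\theta(\cdot,\gamma_0)=\x_\theta(\cdot,0)$) are left implicit in the paper but do not alter the argument.
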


\begin{proof}
The VP schedule gives $2\sigma_t\dot\sigma_t = \sigmoid'(\gamma_t)\,\gamma'(t) = \sigma_t^2\alpha_t^2\,\gamma'(t)$ via $\sigmoid'(z) = \sigmoid(z)(1-\sigmoid(z))$, and $\alpha_t\dot\alpha_t + \sigma_t\dot\sigma_t = 0$ via $\alpha^2+\sigma^2=1$. Hence
\begin{equation*}
  \frac{\dot\sigma_t}{\sigma_t} = \frac{1}{2}\alpha_t^2\,\gamma'(t), \qquad \dot\alpha_t - \alpha_t\frac{\dot\sigma_t}{\sigma_t} = -\frac{1}{2}\alpha_t(\sigma_t^2 + \alpha_t^2)\,\gamma'(t) = -\frac{1}{2}\alpha_t\,\gamma'(t).
\end{equation*}
Substituting into \cref{eq:plaid-pfode} factors out a common $\gamma'(t)$:
\begin{equation*}
  \v_\theta(\z_t,t) = \frac{\dot\sigma_t}{\sigma_t}\z_t + \!\left(\dot\alpha_t - \alpha_t\frac{\dot\sigma_t}{\sigma_t}\right)\!\x_\theta(\z_t,t)\Emb = \frac{\gamma'(t)}{2}\bigl(\alpha_t^2\z_t - \alpha_t\x_\theta(\z_t,t)\Emb\bigr).
\end{equation*}
The chain rule with $\d\gamma = \gamma'(t)\d t$ gives $\d\tilde\z/\d\gamma = \v_\theta(\z_t,t)/\gamma'(t)$, which cancels the $\gamma'(t)$ in the velocity above and yields \cref{eq:ode-pfode-gamma}; the backbone $\x_\theta(\cdot,\gamma)$ already conditions on $\gamma$ directly, so no schedule inversion is needed.

The two velocity fields satisfy $\tilde\v_\theta(\z,\gamma) = \v_\theta(\z,t(\gamma))/\gamma'(t(\gamma))$. Since $\gamma'(t)$ depends on $\gamma$ alone (not on $\z$), the spatial Jacobian inherits the same scalar factor: $\partial_\z\tilde\v_\theta(\z,\gamma) = \partial_\z\v_\theta(\z,t(\gamma))/\gamma'(t(\gamma))$. Taking traces and substituting $\d t = \d\gamma/\gamma'(t)$ in the $t$-space trace integral gives
\begin{equation}\label{eq:ode-trace-invariant}
  \int_0^1 \nabla\cdot\v_\theta(\z_t,t)\d t = \int_{\gamma_0}^{\gamma_1} \nabla\cdot\tilde\v_\theta(\tilde\z,\gamma)\d\gamma,
\end{equation}
by change of variable plus the pointwise trace identity. Substituting into \cref{eq:ode-cov-exact} and \cref{eq:ode-logweight} yields \cref{eq:ode-logweight-gamma}.
\end{proof}

\paragraph{Numerical advantage.} Although \cref{eq:ode-logweight} and \cref{eq:ode-logweight-gamma} are exactly equal in continuous time, they differ as numerical objects: integrating along $\gamma$ removes the schedule-induced stiffness factor $\gamma'(t)$ near $t \to 0, 1$, where $\gamma'(t)$ blows up under typical learned schedules. We use \cref{eq:ode-logweight-gamma} for all reported PPL numbers.

\subsection{Self-conditioning and the Chain-rule Trace}
\label{subsec:ode-likelihood-selfcond}

With self-conditioning enabled, the denoiser carries an extra input $\x_\sc\in\R^{L\times d_e}$, written $\x_\theta(\tilde\z,\gamma,\x_\sc)$. At inference time, both \citet{chen2026langflow} and our pipeline resolve $\x_\sc$ at every solver step from a single bootstrap forward pass with $\x_\sc \gets \zero$:
\begin{equation*}
  \x_\sc(\tilde\z,\gamma) := \x_\theta(\tilde\z,\gamma,\zero)\Emb,
\end{equation*}
making $\x_\sc$ a deterministic function of the current state. The PFODE \cref{eq:ode-pfode-gamma} actually integrated then has effective drift $\tilde\z\mapsto\tilde\v_\theta(\tilde\z,\gamma,\x_\sc(\tilde\z,\gamma))$, and Liouville's identity in \cref{eq:ode-cov-exact} requires the divergence of this composition. By the chain rule,
\begin{equation}\label{eq:ode-selfcond-divergence}
  \nabla_\z\cdot\Bigl[\tilde\v_\theta(\z,\gamma,\x_\sc(\z,\gamma))\Bigr] = \underbrace{\operatorname{tr}\bigl(\partial_\z\tilde\v_\theta\bigr)}_{\text{partial}} + \underbrace{\operatorname{tr}\bigl(\partial_{\x_\sc}\tilde\v_\theta\cdot\partial_\z\x_\sc\bigr)}_{\text{chain rule}},
\end{equation}
with all partials of $\tilde\v_\theta$ evaluated at the unperturbed $(\z,\gamma,\x_\sc(\z,\gamma))$. The standard Hutchinson trace estimator---$\bxi^\top[\tilde\v_\theta(\z+h\bxi,\gamma,\x_\sc^\star) - \tilde\v_\theta(\z-h\bxi,\gamma,\x_\sc^\star)]/(2h)$ in the finite-difference path, or its JVP analogue---caches $\x_\sc^\star := \x_\sc(\z,\gamma)$ at the unperturbed state and reuses it across the two perturbations, recovering only the partial term; the chain-rule term is silently dropped. The integrated divergence is then that of $\tilde\v_\theta(\cdot,\gamma,\x_\sc^\star)$, a different velocity field from the one in \cref{eq:ode-pfode-gamma}, and the resulting ``$-\widehat{\log p_\theta(\x)}$'' is not a Liouville identity for $p_\theta$, hence not an upper bound on $-\log p_\theta(\x)$.

The correction is to recompute $\x_\sc$ at each perturbed state (FD path) or to leave the bootstrap forward inside the JVP graph (JVP path), at a cost of one extra backbone forward per perturbation. With self-conditioning disabled ($\x_\sc \equiv \zero$) the chain-rule term vanishes ($\partial_\z\x_\sc = \zero$) and the standard estimator is unbiased.

\paragraph{Why dropping the chain-rule term inflates the estimate?} The PFODE drift $\tilde\v_\theta$ contains a denoising contribution that contracts the trajectory toward the network's predicted clean embedding $\hat\x_\theta(\tilde\z, \gamma, \x_\sc)\Emb$; with self-conditioning enabled, the bootstrap $\x_\sc(\tilde\z, \gamma)$ co-moves with $\tilde\z$ in the high-SNR regime where the prediction is most informative, so the closed-loop velocity actually integrated by the solver is \emph{less} sensitive to perturbations of $\tilde\z$ than the open-loop velocity $\tilde\v_\theta(\cdot, \gamma, \x_\sc^\star)$ that the cached Hutchinson trace estimator probes. Equivalently, since $\partial_{\x_\sc}\tilde\v_\theta \propto -\alpha_\gamma\,\partial_{\x_\sc}\hat\x_\theta\,\Emb$, the missing chain-rule trace $\operatorname{tr}(\partial_{\x_\sc}\tilde\v_\theta \cdot \partial_\z\x_\sc)$ is typically negative for a self-conditioning denoiser whose two input pathways agree on the same predicted token. Dropping a negative summand overestimates the integrated divergence $\int_0^1 \nabla\cdot\v_\theta\,\d t$, which via the Liouville closure \cref{eq:ode-cov-exact} inflates $\log p_\theta(\z_0)$ and the log-weight $\log w^{(k)}$, deflating the reported PPL. The bias grows with $\partial_\z\x_\sc$ and is therefore largest in the high-SNR regime where the bootstrap is most accurate.

\subsection{Experimental Results and Ablation Study}
\label{subsec:ode-likelihood-ablation}

We evaluate the estimator on RePlaid and on the publicly released LangFlow checkpoint\footnote{\url{https://huggingface.co/Continuous-Rivals-Discrete/langflow-owt}} as an external sanity check, using the chain-rule-corrected drift of \cref{subsec:ode-likelihood-selfcond}. Unless stated otherwise, all numbers are computed on a fixed $1024$-sequence subset of the OpenWebText-valid split (GPT2 tokenizer, sequence length $1024$). The $\gamma$-space PFODE of \cref{lem:ode-gamma} is integrated with a $128$-step Heun2 solver (\texttt{heun2-128}). The divergence term is estimated by Hutchinson's estimator with $n_\mathrm{hutch}=1$ Rademacher random vector per call. The Jacobian-vector product is approximated by central finite differences with step size $h=0.001$ rather than forward-mode automatic differentiation (\texttt{torch.func.jvp}), since the latter currently forces PyTorch's much slower math SDPA backend. The same sequences and self-conditioning protocol are used for both the $K$-IWAE and the matched VDM NELBO baseline.

\paragraph{Chain-rule trace correction.} \cref{tab:ode-chainrule} reports the $K=1$ ELBO in three configurations: (i) self-conditioning disabled, (ii) self-conditioning bootstrap on but the chain-rule term of \cref{eq:ode-selfcond-divergence} dropped (the LangFlow recipe\footnote{\url{https://github.com/nealchen2003/LangFlow/blob/main/eval_ppl.py}}), and (iii) self-conditioning on with the chain-rule correction (ours). Dropping the chain-rule term deflates the reported PPL by $8.03$ on RePlaid and $8.35$ on LangFlow---the missing summand is consistently negative and large at the model scale studied here. The corrected estimate is the only one of the three that is provably an upper bound on $-\log p_\theta(\x)$. Two consistency checks support the implementation: reproducing LangFlow's estimator verbatim yields $24.94$~PPL on their public checkpoint, closely matching the $24.6$~PPL reported by \citet{chen2026langflow} on the full dataset; conversely, the $15.52$~PPL obtained without the chain-rule correction on RePlaid is below the $17.5$~PPL of an autoregressive baseline on the same corpus (\cref{tab:val-ppl}), which would be implausible for an embedding-space diffusion model.

\begin{table}[h]
\centering
\caption{\small Comparison of different PPL estimation methods for \textsc{RePlaid} and LangFlow checkpoints on OpenWebText.
Upper part: Chain-rule omission in ODE-based ELBO estimator deflates the PPL estimation \cref{eq:ode-selfcond-divergence}.
Lower part: Reproduction of the VDM NELBOs from \cref{tab:val-ppl} for reference.
Under the same PPL estimation protocols, \textsc{RePlaid} always outperforms LangFlow.
}
\begin{tabular}{l c c}
\toprule
                                                                          & \textsc{RePlaid} (s.c.) & LangFlow \\
\midrule
\multicolumn{3}{l}{\textit{ODE-based estimator ($K=1$, $1024$ sequences)}} \\
\quad self-conditioning off                                                    & $24.26$ & $38.12$ \\
\quad {\color{gray}self-conditioning\ on, chain-rule term \emph{dropped} (LangFlow)}        & ${\color{gray}15.52}$ & ${\color{gray}24.94}$ \\
\quad self-conditioning\ on, chain-rule term \emph{included} (\textbf{ours})                & $23.54$ & $33.29$ \\
\midrule
\quad $\Delta$ (row 3 $-$ row 2): chain-rule omission bias                     & $+8.03$ & $+8.35$ \\
\midrule
\multicolumn{3}{l}{\textit{Reference: VDM NELBO from \cref{tab:val-ppl} (full dataset)}} \\
\quad self-conditioning off                                                    & $\replaidnscppl$  & $38.4$  \\
\quad self-conditioning on                                                     & $\replaidscppl$  & $32.2$  \\
\bottomrule
\end{tabular}
\vspace{0.5em}
\label{tab:ode-chainrule}
\end{table}

\paragraph{IWAE $K$-curve and DPI ordering.} \cref{fig:ode-kcurve-dpi} plots the $K$-IWAE PPL of \cref{eq:ode-iwae} for $K \in \{1,2,4,8,16,32\}$ on the same $1024$-sequence subset, derived offline by subsampling the per-sequence log-weights of a single $K=32$ run. The curve drops by $1.4$~PPL on RePlaid and $4.1$~PPL on LangFlow between $K=1$ and $K=32$. Fitting the per-token NLL to $a + b/K$ on $K \ge 4$ matches the data cleanly there and overpredicts at $K \le 2$, consistent with the $\Theta(1/K)$ leading-order IWAE bias of \citet{nowozin2018debiasing} together with non-negligible $O(1/K^2)$ corrections at small $K$. The fitted asymptotes are $\text{PPL}_\infty \approx 22.08$ for RePlaid and $\approx 29.11$ for LangFlow. In both cases the $K$-IWAE approaches the matched VDM NELBO from above as $K$ grows: LangFlow's $K$-IWAE drops below the VDM NELBO at $K = 2$ (continuing well below as $K$ increases), whereas RePlaid's $K$-IWAE meets the VDM NELBO essentially at $K = 32$ ($22.11$ vs.\ $\replaidscppl$). That $K=1$ empirically exceeds the VDM NELBO reflects the SDE-vs-PFODE gap under imperfect scores~\citep{karras2022elucidating}, as discussed in \cref{subsec:ode-likelihood}; the gap closes (RePlaid) or reverses (LangFlow) as $K$ tightens the IWAE bound. For paper-quality likelihood reporting one should therefore use $K \gtrsim 16$ rather than the $K=1$ ELBO, which, due to the extensive computational cost, is infeasible on the full OpenWebText dataset given our computational resources.

\begin{figure}[h]
\centering
\includegraphics[width=0.6\linewidth]{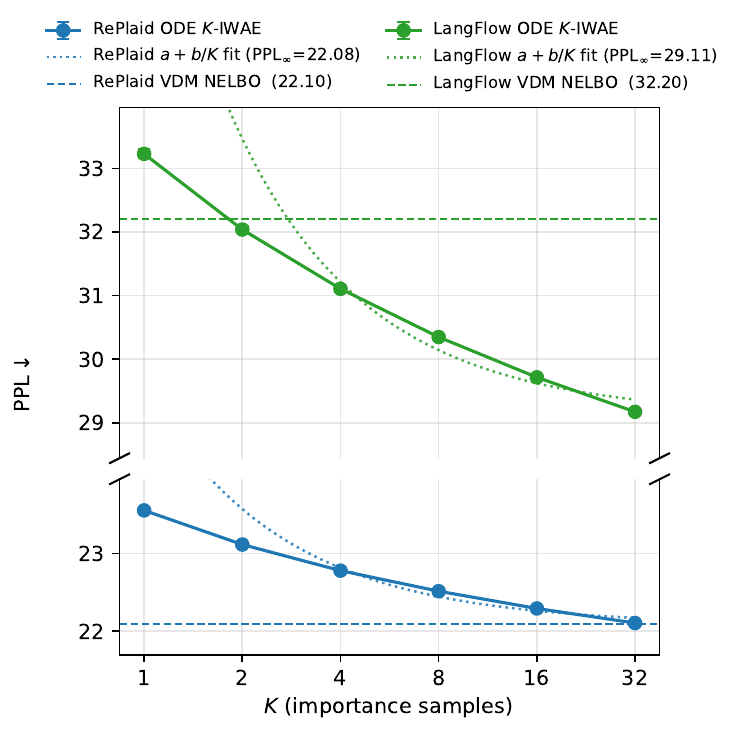}
\caption{\small IWAE $K$-curve of \cref{eq:ode-iwae} on a fixed $1024$-sequence OWT-valid subset (solid lines), with matched VDM NELBOs of \cref{eq:plaid-nelbo} drawn as dashed references. Dotted curves are the leading-order $a + b/K$ fits on $K \ge 4$ with extrapolated asymptotes $\text{PPL}_\infty$ in the legend; the visible deviation at $K \le 2$ reflects $O(1/K^2)$ corrections to the $\Theta(1/K)$ IWAE bias of \citet{nowozin2018debiasing}.}
\label{fig:ode-kcurve-dpi}
\end{figure}

\paragraph{Estimator-side sensitivity.} \cref{tab:ode-sensitivity} sweeps four solver / Hutchinson knobs at fixed $K=1$ on RePlaid, on the same $1024$ sequences. Doubling the discretization steps (\texttt{heun2-256}), switching to an adaptive solver (\texttt{dopri5}), swapping the $\bxi$ distribution from Rademacher to standard Gaussian, and increasing the per-step Hutchinson sample count up to $n_\mathrm{hutch}=16$ all leave the mean PPL within $0.4$ of the baseline $23.54$. This confirms that (i) the deterministic ODE solver is well-converged at \texttt{heun2-128}, (ii) the Hutchinson estimator is unbiased irrespective of the probe distribution, and (iii) raising $n_\mathrm{hutch}$ does not shift the mean---the residual scatter reflects Monte-Carlo variance from the $K=1$ importance sample, not the trace estimator.

\begin{table}[h]
\centering
\caption{\small Estimator-side hyperparameter sensitivity on \textsc{RePlaid} ($K=1$, $1024$ sequences) for ODE-based likelihood estimation. Mean PPL is preserved to within $\pm 0.4$ across all variants, consistent with an unbiased trace estimator and a converged ODE solver.}
\begin{tabular}{l c c c}
\toprule
Variation                                  & PPL$\downarrow$ & $\Delta$ vs.\ Baseline & avg.\ NFE/batch \\
\midrule
Baseline (\texttt{heun2-128}, Rademacher, $n_\mathrm{hutch}=1$) & $23.54$ & $-$        & $1{,}280$  \\
~~solver: \texttt{heun2-256}                   & $23.91$ & $+0.36$           & $2{,}560$  \\
~~solver: \texttt{dopri5} adaptive             & $23.83$ & $+0.28$           & $8{,}684$  \\
~~Hutchinson distribution: Gaussian            & $23.58$ & $+0.03$           & $1{,}280$  \\
~~$n_\mathrm{hutch}=2$                         & $23.58$ & $+0.03$           & $2{,}304$  \\
~~$n_\mathrm{hutch}=4$                         & $23.58$ & $+0.04$           & $4{,}352$  \\
~~$n_\mathrm{hutch}=8$                         & $23.57$ & $+0.02$           & $8{,}448$  \\
~~$n_\mathrm{hutch}=16$                        & $23.56$ & $+0.02$           & $16{,}640$ \\
\bottomrule
\end{tabular}
\vspace{0.5em}
\label{tab:ode-sensitivity}
\end{table}

\clearpage

\section{Constant Per-Timestep Diffusion Loss}\label{supp:sec:constant-diffusion-proof}

[Back to \cref{thm:constant_diffusion_loss} in the main paper]

\vspace{0.5em}
\thmconstantdiffusion*

\begin{proof}
In $\gamma$-coordinates the per-timestep loss is
\begin{align}
    \ell_{\theta,\gamma}(t) = \frac{1}{2}\,\gamma'(t)\,w(\gamma(t)),
    \qquad w(\gamma) := \ee^{-\gamma}\,\MSE_\theta(\ee^{-\gamma}),
\end{align}
since $\SNR(t)=\ee^{-\gamma(t)}$ implies $\SNR'(t) = -\gamma'(t)\,\ee^{-\gamma(t)}$. By assumption $w$ is positive and continuous on $[\gamma_0,\gamma_1]$. Changing variables to $\gamma$,
\begin{align}
    \kappa
    = \E_t[\ell_{\theta,\gamma}(t)]
    = \frac{1}{2}\!\int_{\gamma_0}^{\gamma_1} w(\gamma)\d\gamma,
\end{align}
which depends only on the endpoints, recovering the schedule invariance of~\citet{kingma2021variational}. Define the cumulative weight $G(\gamma) := \int_{\gamma_0}^\gamma w(\gamma)\d\gamma$; then $G(\gamma_0)=0$, $G(\gamma_1)=2\kappa$, and $G$ is strictly increasing on $[\gamma_0,\gamma_1]$. Set
\begin{align}\label{eq:optimal-gamma}
    \gamma^*(t) := G^{-1}(2\kappa\,t),
\end{align}
which gives $\gamma^*(0)=\gamma_0$ and $\gamma^*(1)=\gamma_1$ by construction. Differentiating $G(\gamma^*(t))=2\kappa t$ yields $(\gamma^*)'(t) = 2\kappa / w(\gamma^*(t))$, so
\begin{align}\label{eq:optimal-per-timestep-diffusion}
    \ell_{\theta,\gamma^*}(t)
    = \frac{1}{2}\,(\gamma^*)'(t)\,w(\gamma^*(t))
    = \kappa
    \qquad\text{for all }t\in[0,1],
\end{align}
hence $\Var_t[\ell_{\theta,\gamma^*}(t)]=0$. Uniqueness of $\gamma^*$ follows from the strict monotonicity of $G$.
\end{proof}

\clearpage

\section{Linear Information Decay Under Optimality}\label{supp:sec:linear-info-decay}

[Back to \cref{thm:linear_ce} in the main paper]

\vspace{0.5em}
\begin{restatable}[\textbf{\color{ourbrightblue!80!black}Linear Information Decay Under Optimality}]{lemma}{lemmalinearinfodecay}
\label{lemma:linear_info_decay}
Assuming a Bayes-optimal model $\theta^*$ with its optimal noise schedule $\gamma^*$ (\cref{thm:constant_diffusion_loss}) that achieves constant $\ell_{\theta^*, \gamma^*}(t)=\kappa\geq0$, we have
\begin{equation}
    I(\e; \z_t) = I(\e; \z_0) - \kappa t.
\end{equation}
Intuitively, $\z_t$ contains less information about what $\e$ is as $t$ increases.
\end{restatable}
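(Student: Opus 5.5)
The plan is to differentiate $I(\e;\z_t)$ in $t$ using the I-MMSE identity of \citet{guo2005mutual}. Then we identify the derivative with $-\ell_{\theta^*,\gamma^*}(t)$, which by \cref{thm:constant_diffusion_loss} equals the constant $-\kappa$. Integrating from $0$ to $t$ gives the claim.

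\textbf{Step 1: reduce to a Gaussian channel.} Since $\z_t=\alpha_t\e+\sigma_t\beps$, rescaling by $1/\sigma_t$ (a bijection, so mutual information is unchanged) gives $\z_t/\sigma_t=\sqrt{\SNR(t)}\,\e+\beps$. This is a standard additive Gaussian channel with input $\e\in\R^{L\times d_e}$ and signal-to-noise ratio $\mathrm{snr}=\SNR(t)$, where the input has the finite-support distribution of embedded data sequences. The I-MMSE identity then reads
\begin{equation*}
\frac{\d}{\d\,\mathrm{snr}} I\bigl(\e;\sqrt{\mathrm{snr}}\,\e+\beps\bigr)=\tfrac12\,\MMSE(\mathrm{snr}),
\qquad
\MMSE(\mathrm{snr}):=\E\|\e-\E[\e\mid\z_t]\|^2 .
\end{equation*}
It applies because $\e$ has finite second moment: its rows are unit-norm embeddings.

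\textbf{Step 2: identify the MMSE with the model's MSE.} Under \cref{def:bayes-optimal}, $\e_{\theta^*}^l(\z_t,t)=\E[\e^l\mid\z_t]$ for every position $l$, so the full sequence prediction is $\e_{\theta^*}(\z_t,t)=\E[\e\mid\z_t]$. Hence $\MSE_{\theta^*}(\SNR(t))=\MMSE(\SNR(t))$ in \cref{eq:diffusion-loss-per-t}.

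The step needing care is that the Bayes-optimal definition is stated per-token through $q(\x^l\mid\z_t)$, conditioning on the whole noisy sequence. The per-row conditional means then assemble exactly into the joint conditional mean, because squared error decomposes over rows. I will make this explicit.

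\textbf{Step 3: chain rule and integration.} By the chain rule,
\begin{equation*}
\frac{\d}{\d t}I(\e;\z_t)=\tfrac12\,\SNR'(t)\,\MMSE(\SNR(t))=-\ell_{\theta^*,\gamma^*}(t)=-\kappa .
\end{equation*}
The last equality uses \cref{thm:constant_diffusion_loss} applied to the model $\theta^*$, with schedule $\gamma^*$ constructed from that model's $\MSE$. Integrating over $[0,t]$ yields $I(\e;\z_t)=I(\e;\z_0)-\kappa t$.

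\textbf{Main obstacle: regularity.} The expected obstacle is justifying the derivative and the integration. I will need the following:
\begin{itemize}
\item $I(\e;\cdot)$ is differentiable in snr. This is guaranteed by Guo--Shamai--Verdú for finite-variance inputs.
\item $\SNR(t)$ under $\gamma^*$ is $C^1$. This follows from $(\gamma^*)'=2\kappa/w(\gamma^*)$ with $w$ continuous and positive.
\item $I(\e;\z_0)$ is finite. This holds since $\e$ is discrete with finitely many values.
\end{itemize}
A subtle consistency point is that $\gamma^*$ depends on $\theta^*$ through $w$, while the Bayes-optimal denoiser is defined pointwise in snr and does not depend on the schedule. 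There is therefore no circularity in pairing the two.
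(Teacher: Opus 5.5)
Your proposal is correct and follows the paper's proof essentially step for step: you rescale to the canonical Gaussian channel $\z_t/\sigma_t=\sqrt{\SNR(t)}\,\e+\beps$, apply the I-MMSE identity, identify $\MSE_{\theta^*}$ with $\MMSE$ via Bayes-optimality, then combine the chain rule with the constant-loss condition and integrate. Your regularity remarks (finite second moment of $\e$, $C^1$ schedule from $(\gamma^*)'=2\kappa/w(\gamma^*)$, finiteness of $I(\e;\z_0)$) and the row-wise assembly of $\E[\e\mid\z_t]$ make explicit details that the paper leaves implicit.
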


\begin{proof}
Write $\nu_t := \SNR(t)$ and recall the minimum mean-square error of estimating $\e$ from $\z_t$,
\begin{align}
    \MMSE(\nu_t) := \E_{q_\data(\x)\,q(\z_t \mid \x)}\bigl[\,\|\e - \E[\e\mid\z_t]\|^2\,\bigr].
\end{align}
Rescaling by $\sigma_t$ casts $\z_t$ into canonical Gaussian-channel form,
\begin{align}
    \z_t / \sigma_t = \sqrt{\nu_t}\,\e + \beps,
    \qquad \beps \sim \normal(\zero, \I),
\end{align}
and dividing by the deterministic $\sigma_t$ preserves information about $\e$. The I-MMSE identity~\citep[Thm.~2]{guo2005mutual} therefore yields
\begin{align}
    \frac{\d}{\d \nu_t}\,I(\e;\z_t)
    = \frac{1}{2}\,\MMSE(\nu_t).
\end{align}
By \cref{def:bayes-optimal}, $\MSE_{\theta^*}(\nu_t)=\MMSE(\nu_t)$, so the constant-loss condition from \cref{thm:constant_diffusion_loss} becomes
\begin{align}
    \ell_{\theta^*,\gamma^*}(t)
    = -\frac{1}{2}\,\frac{\d\nu_t}{\d t}\,\MMSE(\nu_t)
    = \kappa.
\end{align}
Combining the two displays via the chain rule,
\begin{align}
    \frac{\d}{\d t}\,I(\e;\z_t)
    = \frac{\d I}{\d \nu_t}\,\frac{\d \nu_t}{\d t}
    = \frac{1}{2}\,\MMSE(\nu_t)\,\frac{\d \nu_t}{\d t}
    = -\kappa,
\end{align}
and integrating from $0$ to $t$ gives $I(\e;\z_t) = I(\e;\z_0) - \kappa\,t$.
\end{proof}

\clearpage

\section{Per-Timestep CE Under Optimality}\label{supp:sec:linear-ce}

[Back to \cref{thm:linear_ce} in the main paper]

\vspace{0.5em}

\thmlinearce*

\begin{proof}
By inserting $q(\x\mid\z_t)$ inside the log, the per-timestep CE loss decomposes into a conditional entropy plus a model gap:
\begin{align}
    \CE_{\theta^*, \gamma^*} (t)
    &= \E_{q_\data(\x)\,q(\z_t \mid \x)}[-\log \hat p_{\theta^*} (\x\mid\z_t)] \\
    &= \E_{q_\data(\x)\,q(\z_t\mid\x)}\!\left[-\log q(\x\mid\z_t) -\log \frac{\hat{p}_{\theta^*}(\x\mid\z_t)}{q(\x\mid\z_t)}\right]\\
    &= \underbrace{H(\x\mid\z_t)}_{\text{conditional entropy}} \;+\; \underbrace{\E_{q(\z_t)}\bigl[\KL\bigl(q(\x\mid\z_t)\,\|\,\hat{p}_{\theta^*}(\x\mid\z_t)\bigr)\bigr]}_{\text{model gap}}.
\end{align}

\paragraph{Computing the model gap.}
By \cref{def:bayes-optimal}, the Bayes-optimal denoiser realizes a per-position factorized model $\hat{p}_{\theta^*}(\x\mid\z_t)=\prod_l q(\x^l\mid\z_t)$. The KL between the true joint posterior and this factorized product is
\begin{align}
    \KL\bigl(q(\x\mid\z_t) \,\big\|\, \textstyle\prod_l q(\x^l\mid\z_t)\bigr)
    = \E_{q(\x\mid\z_t)}\Big[\log q(\x\mid\z_t) - \sum_l \log q(\x^l\mid\z_t)\Big],
\end{align}
and taking expectation over $\z_t \sim q(\z_t)$ gives, after rearrangement,
\begin{align}
    \E_{q(\z_t)}\bigl[\KL(q(\x\mid\z_t)\,\|\,\textstyle\prod_l q(\x^l\mid\z_t))\bigr]
    = \sum_l H(\x^l\mid\z_t) - H(\x\mid\z_t)
    \;=:\; C(\x\mid\z_t),
\end{align}
i.e., the conditional total correlation of $\x$ given $\z_t$. It is non-negative, and zero iff tokens of $\x$ are conditionally independent given $\z_t$. For language data $q(\x)$ has dimensional correlations, and as $q(\z_t\mid\x) = \prod_l q(\z_t^l\mid\x^l)$, the posterior  $q(\x\mid\z_t)$ also exhibits dimensional correlation. Therefore, $C(\x\mid\z_t)>0$ generically.

\paragraph{Computing the conditional entropy.}
By the definition of mutual information,
\begin{align}
    H(\x\mid\z_t) = H(\x) - I(\x;\z_t).
\end{align}
Under no embedding collapse, $\x^l\leftrightarrow\e^l$ is bijective per position, so $I(\x;\z_t)=I(\e;\z_t)$, and \cref{lemma:linear_info_decay} gives
\begin{align}
    I(\x;\z_t)=I(\e;\z_t) = I(\e;\z_0) - \kappa t \implies H(\x\mid\z_t) = H(\x) - I(\e;\z_0) + \kappa t.
\end{align}

\paragraph{Putting it together.}
Combining the conditional entropy and the model gap,
\begin{align}
    \CE_{\theta^*,\gamma^*}(t) = H(\x) - I(\e;\z_0) + \kappa t + C(\x\mid\z_t).
\end{align}

\paragraph{Monotonicity of $C(\x\mid\z_t)$ in $t$.}
Substituting $H(\x\mid\z_t) = H(\x) - I(\x;\z_t)$ and $H(\x^l\mid\z_t) = H(\x^l) - I(\x^l;\z_t)$ into the definition of $C(\x\mid\z_t)$,
\begin{align}
    C(\x\mid\z_t) = C(\x) - \Bigl[\sum_l I(\x^l;\z_t) - I(\x;\z_t)\Bigr],
\end{align}
so it suffices to show that $\sum_l I(\x^l;\z_t) - I(\x;\z_t)$ is non-increasing in $t$, or equivalently non-decreasing in the SNR $\nu_t := \alpha_t^2/\sigma_t^2$. Writing the channel in the standard form $\z_t/\sigma_t = \sqrt{\nu_t}\,\e + \xi$ with $\xi \sim \normal(0, \I)$ and applying the I-MMSE identity~\citep{guo2005mutual},
\begin{align}
    \frac{\d}{\d\nu_t} I(\x;\z_t) = \frac{1}{2} \sum_l M^l(\nu_t),
    \qquad M^l(\nu_t) := \E\!\left[\|\e^l - \E[\e^l\mid\z_t]\|^2\right].
\end{align}
Combining the chain rule $I(\x;\z_t) = I(\x^l;\z_t) + I(\x^{-l};\z_t\mid\x^l)$ with the conditional I-MMSE identity applied to the second term gives
\begin{align}
    \frac{\d}{\d\nu_t} I(\x^l;\z_t) = \frac{\d}{\d\nu_t} I(\x;\z_t) - \frac{1}{2}\sum_{l' \neq l} M^{l'}_{\,\mid\x^l}(\nu_t),
    \quad M^{l'}_{\,\mid\x^l}(\nu_t) := \E\!\left[\|\e^{l'} - \E[\e^{l'}\mid\z_t,\x^l]\|^2\right].
\end{align}
Summing over $l$ and rearranging,
\begin{align}
    \frac{\d}{\d\nu_t}\!\left[\sum_l I(\x^l;\z_t) - I(\x;\z_t)\right]
    = \frac{1}{2}\sum_{l \neq l'}\bigl[M^{l'}(\nu_t) - M^{l'}_{\,\mid\x^l}(\nu_t)\bigr] \geq 0,
\end{align}
where the inequality is due to tower property: conditioning on $\x^l$ in addition to $\z_t$ can only reduce the MMSE of $\e^{l'}$. Therefore $C(\x\mid\z_t)$ is non-decreasing in $t$, with strict increase whenever any pair $(\x^l,\x^{l'})$ with $l \ne l'$ remains correlated under the Bayes posterior $q(\x\mid\z_t)$.
\end{proof}

\clearpage

\clearpage

\clearpage

\section{Learning Noise Schedule in VDMs}
\label{app:noise-schedule-vdm}

[Back to~\cref{sec:background} in the main paper]

\begin{figure}[H]
\centering   
\begin{minipage}{\linewidth}
\footnotesize
\begin{python}
def min_diffusion_loss(e, e_model, gamma_tilde_net, gamma_bounds, optimizer):
    """
    e: Encoded clean data. Shape: [bs, L, d_e]
    """
    bs = e.shape[0]
    t = torch.rand(bs, device=e.device)  # Shape: [bs]
    
    # 1. Get normalized schedule shape
    gamma_tilde_t, gamma_tilde_prime_t = gamma_tilde_net(t)  
    # Shapes: [bs], [bs]
    
    # 2. Gradient Hook for Variance Minimization
    L_t_detached = None  # Will hold shape [bs] before backward()
    
    def var_min_hook(grad):
        # grad Shape: [bs]. L_t_detached Shape: [bs]. 
        return grad * 2.0 * L_t_detached
        
    gamma_tilde_t.register_hook(var_min_hook)
    gamma_tilde_prime_t.register_hook(var_min_hook)
    
    # 3. Apply Learnable Endpoints
    gamma_0, gamma_1 = gamma_bounds()  # Shapes: [1], [1] (Scalars)
    
    gamma_t = gamma_0 + (gamma_1 - gamma_0) * gamma_tilde_t  # Shape: [bs]
    gamma_prime_t = (gamma_1 - gamma_0) * gamma_tilde_prime_t  # Shape: [bs]
    
    # 4. Forward noising process
    alpha_t = torch.sqrt(torch.sigmoid(-gamma_t))  # Shape: [bs]
    sigma_t = torch.sqrt(torch.sigmoid(gamma_t))  # Shape: [bs]
    
    # Reshape for broadcasting across sequence and feature dimensions
    alpha_t_view = alpha_t.view(bs, 1, 1)  # Shape: [bs, 1, 1]
    sigma_t_view = sigma_t.view(bs, 1, 1)  # Shape: [bs, 1, 1]
    
    eps = torch.randn_like(e)  # Shape: [bs, L, d_e]
    z_t = alpha_t_view * e + sigma_t_view * eps  # Shape: [bs, L, d_e]
    
    # 5. Model prediction
    e_pred = e_model(z_t, gamma_t)  # Shape: [bs, L, d_e]
    
    # 6. Continuous diffusion loss calculation
    # average per-sample MSE
    mse = ((e - e_pred)**2).sum(dim=2).mean(dim=1)  # Shape: [bs]
    
    # SNR = exp(-gamma), SNR'(t) = -exp(-gamma(t)) * gamma'(t)
    snr_prime = -torch.exp(-gamma_t) * gamma_prime_t  # Shape: [bs]
    L_t = -0.5 * snr_prime * mse  # Shape: [bs]
    
    # 7. Single Backward Pass
    loss = torch.mean(L_t)  # Shape: scalar
    
    # Pass the detached per-sample loss to the hook closure
    L_t_detached = L_t.detach()  # Shape: [bs]
    
    # Backward pass automatically routes gradients
    optimizer.zero_grad()
    loss.backward() 
    optimizer.step()
\end{python}
\end{minipage}
    \caption{\small The noise schedule endpoints $(\gamma_0, \gamma_1)$ and the interior shape of $\gamma(t)$ are separately parameterized; they are trained to minimize the diffusion loss and its estimator variance respectively.}
    \label{fig:diffusion-loss-code}
\end{figure}

\clearpage

\section{Aligning Plaid with Modern Discrete DLM Architecture}\label{supp:sec:align-table}

[Back to~\cref{subsec:replaid-align} in the main paper]

\begin{table}[H]
\centering
\caption{\small Aligning components of Plaid with the modern discrete diffusion architecture to create RePlaid. \textbf{\color{gray}Gray} indicates what's used in the alternative method.}
\vspace{0.5em}
\label{tab:plaid-dit-alignment-arch}
\scriptsize
\renewcommand{\arraystretch}{1}
\begin{tabular}{lccl}
\toprule
\textbf{RePlaid} & \textbf{= MDLM~\citep{sahoo2024simple,sahoo2026scaling}} & \textbf{= Plaid ~\citep{gulrajani2023plaid}} & \textbf{Details} \\
\midrule
Embedding dimensionality & & \cmark & 16 dimensions \textcolor{lightgray}{(instead of 768 dimensions)} \\
Embedding corruption     & N.A. & \cmark & Gaussian noise \textcolor{lightgray}{(instead of pre-embedding masking)} \\
Post-corruption processing     & N.A. & \cmark & Rescale $+$ linear projection (no bias) to 768 dimensions\\
&&& Also linearly projected (no bias) \& added: self-cond., time-cond. \\
&&& (These are the three input projections.) \\
\midrule
Normalization           & \cmark & & LayerNorm \textcolor{lightgray}{(instead of RMSNorm)} \\
MLP biases              & \cmark & & Enabled \textcolor{lightgray}{(instead of no biases)} \\
MLP activation          & \cmark & & GELU(tanh) \textcolor{lightgray}{(instead of standard GELU)} \\
AdaLN modulation        & \cmark & & AdaLN-Zero \textcolor{lightgray}{(instead of no adaLN)} \\
Residual connection     & \cmark & & Learnable AdaLN gating \textcolor{lightgray}{(instead of fixed $1/\sqrt{N}$ scaling)} \\
\midrule
Noise schedule          & & \cmark & Learned \textcolor{black}{(monotone MLP $+$ noise schedule endpoints $\gamma_0, \gamma_1$)} \\
Discrete NELBO          & & \cmark & Reconst. loss $+$ diffusion loss \textcolor{black}{(MSE)} $+$ prior loss \\
\bottomrule
\end{tabular}
\end{table}

\begin{table}[H]
\centering
\caption{\small Precision details for RePlaid. \textbf{\color{gray}Gray} indicates what's used in the alternative method. (*) As with Plaid, we use \texttt{float64} for the learnable noise schedule to ensure its numerical stability, monotonicity, and differentiability; it propagates into the final loss aggregation only by dtype promotion.}
\vspace{0.5em}
\label{tab:plaid-dit-alignment-precision}
\renewcommand{\arraystretch}{1}
\resizebox{\linewidth}{!}{%
\begin{tabular}{lccc}
\toprule
\textbf{RePlaid} & \textbf{= MDLM~\citep{sahoo2024simple,sahoo2026scaling}} & \textbf{= Plaid ~\citep{gulrajani2023plaid}} & \textbf{Details} \\
\midrule
Precision (storing parameters)  & \cmark & \cmark & \texttt{float32} \\
Precision (forward pass)     & & & \\
\quad Noise schedule & N.A. & \cmark & \texttt{float64} \\
\quad Input linear projections & N.A. & \cmark & \texttt{float32} \\
\quad MLP & \cmark & \cmark & \texttt{bfloat16} \\
\quad Attention & \cmark & \cmark & \texttt{bfloat16} \\
\quad Residual stream & \cmark & \cmark & \texttt{float32} \\
\quad Final logit head & \cmark &  & \texttt{bfloat16} \textcolor{lightgray}{(not \texttt{float32})}\\
\quad Output prior logits & N.A. & \cmark & \texttt{float32} \\
\quad Categorical reparameterization & N.A. & \cmark & \texttt{float32} \\
\quad Per-element loss computation & \cmark & \cmark & \texttt{float32} \\
\quad Final loss aggregation \& SNR$'$ weighting (*) & & \cmark & \texttt{float64} \textcolor{lightgray}{(not \texttt{float32})} \\
\bottomrule
\end{tabular}
}
\end{table}

\vspace*{\fill}
\clearpage

\section{Optimization Details}
\label{supp:sec:opt}

[Back to~\cref{subsec:scaling-law-setup} or~\cref{sec:small-models} in the main paper]

\begin{table}[h]
\centering
\caption{\small Optimization details for \textbf{RePlaid} in scaling law experiments. The initial LR and weight decay for the embedding matrix, the monotone MLP, and $(\gamma_0,\gamma_1)$ follow from Plaid~\citep{gulrajani2023plaid}; for these components, we found that using lower LR hurts RePlaid when training horizon is short. Meanwhile, using $1\ee{-}2$ as the embedding LR for MDLM did not lead to meaningful improvements (at 0.1B scale, MDLM still reaches PPL 25.0 at 250K steps with low variance training).}
\vspace{0.5em}
\label{tab:plaid-dit-optimization-scaling}
\renewcommand{\arraystretch}{1}
\resizebox{\linewidth}{!}{%
\begin{tabular}{ll}
\toprule
\textbf{RePlaid} & \textbf{Details} \\
\midrule
Optimizer               & AdamW \\
LR and regularization (embedding matrix)   & Cosine decay from $1\ee{-}2$ to $2\ee{-}5$; no weight decay\\
LR and regularization (monotone MLP) & Cosine decay from $1\ee{-}2$ to $2\ee{-}5$; no weight decay\\
LR and regularization ($\gamma_0, \gamma_1$) & Cosine decay from $1\ee{-}2$ to $2\ee{-}5$; weight decay $0.1$\\
LR and regularization (3 linear projections) & Cosine decay from $2\ee{-}4$ to $2\ee{-}5$; weight decay $0.1$ \\
LR and regularization (transformer backbone) & Cosine decay from $2\ee{-}4$ to $2\ee{-}5$; weight decay $0.1$ \\
Adam Moments & $\beta_1=0.9$, $\beta_2=0.95$ \\
EMA (validation only) & $0$ \\
\bottomrule
\end{tabular}
}
\end{table}

\begin{table}[h]
\centering
\caption{\small Optimization details for \textbf{Plaid} at the $0.1$B scale. We use the same LRs (but not the same scheduler), weight decays, and Adam moments as Plaid~\citep{gulrajani2023plaid}. The LR and weight decay values shown below are per-component values passed to AdamW after muP~\citep{yang2021tuning} rescaling.}
\vspace{0.5em}
\label{tab:plaid-dit-optimization-small}
\renewcommand{\arraystretch}{1}
\resizebox{\linewidth}{!}{%
\begin{tabular}{ll}
\toprule
\textbf{RePlaid} & \textbf{Details} \\
\midrule
Optimizer               & AdamW (wrapped by MuAdam~\citep{yang2021tuning}) \\
LR and regularization (embedding matrix)   & $1\ee{-}2$; no weight decay \\
LR and regularization (monotone MLP) & $1\ee{-}2$; no weight decay \\
LR and regularization ($\gamma_0, \gamma_1$) & $1\ee{-}2$; weight decay $0.1$ \\
LR and regularization (3 input linear projections) & $1.4\ee{-}3$; weight decay $\sim 0.029$ \\
LR and regularization (transformer backbone) & $4.67\ee{-}4$ for attention QKV/out and MLP weights; \\
& weight decay $\sim 0.086$ \\
& $1.4\ee{-}3$ for RMSNorm, output LayerNorm, \\
& final linear layer's weight, and final linear layer's bias; \\
& weight decay $\sim 0.029$ \\
LR schedule for all components & Constant with linear warmup ($2500$ steps) \\
Adam moments & $\beta_1=0.9$, $\beta_2=0.99$ \\
EMA (validation only) & $0.9999$ \\
\bottomrule
\end{tabular}
}
\end{table}

\begin{table}[h]
\centering
\caption{\small Optimization details for \textbf{RePlaid} at the $0.1$B scale.}
\vspace{0.5em}
\label{tab:replaid-dit-optimization-small}
\renewcommand{\arraystretch}{1}
\begin{tabular}{ll}
\toprule
\textbf{RePlaid} & \textbf{Details} \\
\midrule
Optimizer               & AdamW \\
LR and regularization (embedding matrix)   & $1\ee{-}2$; no weight decay \\
LR and regularization (monotone MLP) & $1\ee{-}2$; no weight decay\\
LR and regularization ($\gamma_0, \gamma_1$) & $1\ee{-}2$; weight decay $0.1$\\
LR and regularization (3 linear projections) & $3\ee{-}4$; no weight decay\\
LR and regularization (transformer backbone) & $3\ee{-}4$; dropout $0.1$ \\
LR schedule & Constant with linear warmup ($2500$ steps) \\
Adam Moments & $\beta_1=0.9$, $\beta_2=0.999$ \\
EMA (validation only) & $0.9999$ \\
\bottomrule
\end{tabular}
\end{table}

\clearpage

\section{IsoFLOP Analysis Transformer Configurations}
\label{supp:sec:transformer}

[Back to~\cref{subsec:isoflop} in the main paper]

\begin{table}[H]
    \centering
    \caption{\small Transformer configurations of AR, MDLM, and \textsc{RePlaid} used in our scaling law study. This follows from~\citet{nie2025scaling} and~\citet{sahoo2026scaling}. We include it for completeness.} 
    \vspace{1em}
    \label{tab:model_sizes}
    \begin{tabular}{c|ccc}
    Non-Embedding Parameters (M) & $n_\text{embed}$ &  $n_\text{layers}$ & $n_\text{heads}$ \\
    \toprule
    $14$ & $256$ & $6$ & $4$ \\
    $29$ & $384$ & $8$ & $6$ \\
    $44$ & $512$ & $8$ & $8$ \\
    $58$ & $576$ & $9$ & $9$ \\
    $74$ & $640$ & $10$ & $10$ \\
    $91$ & $640$ & $13$ & $10$ \\
    $107$ & $640$ & $16$ & $8$ \\
    $116$ & $768$ & $12$ & $12$ \\
    $140$ & $768$ & $15$ & $12$ \\
    $163$ & $768$ & $18$ & $12$ \\
    $173$ & $896$ & $14$ & $14$ \\
    $194$ & $896$ & $16$ & $14$ \\
    $214$ & $896$ & $18$ & $14$ \\
    $247$ & $1024$ & $16$ & $16$ \\
    $274$ & $1024$ & $18$ & $16$ \\
    $300$ & $1024$ & $20$ & $16$ \\
    $413$ & $1280$ & $18$ & $10$ \\
    $475$ & $1280$ & $21$ & $10$ \\
    $493$ & $1408$ & $18$ & $11$ \\
    $537$ & $1280$ & $24$ & $10$ \\
    $568$ & $1408$ & $21$ & $11$ \\
    $642$ & $1408$ & $24$ & $11$ \\
    $698$ & $1536$ & $22$ & $12$ \\
    $787$ & $1536$ & $25$ & $12$ \\
    $1016$ & $1792$ & $24$ & $14$ \\
    $1208$ & $2048$ & $22$ & $16$ \\
    $1364$ & $2048$ & $25$ & $16$ \\
    $1708$ & $2176$ & $28$ & $17$ \\
    \bottomrule
    \end{tabular}
\end{table}

\clearpage

\section{IsoFLOP Curves}
\label{app:isoflop-all}

[Back to~\cref{subsec:isoflop} in the main paper]

\vspace*{\fill}

\begin{figure}[H]
    \centering
    \begin{subfigure}{0.4\linewidth}
        \centering
        \includegraphics[width=\linewidth]{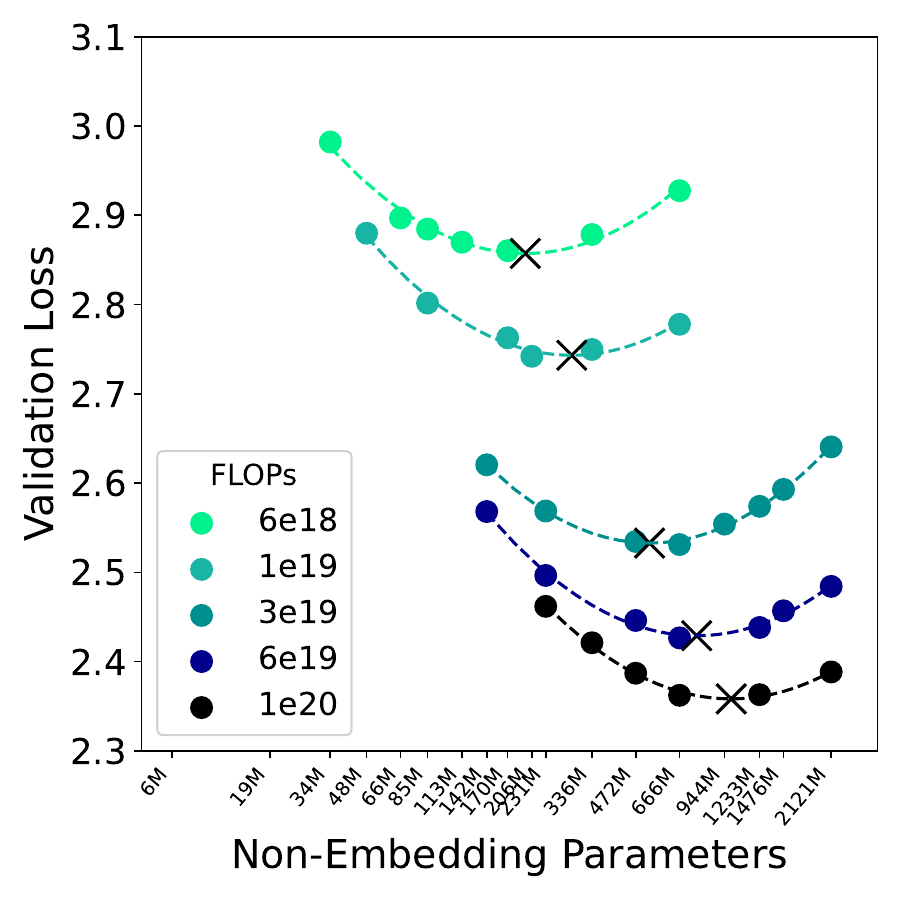}
        \caption{AR}
        \label{fig:iso_flop_AR_appendix}
    \end{subfigure}
    \begin{subfigure}{0.4\linewidth}
        \centering
        \includegraphics[width=\linewidth]{figures_scaling/iso_flop_MDLM-LV.pdf}
        \caption{MDLM with (low var.)}
        \label{fig:iso_flop_mdlm_appendix}
    \end{subfigure}
    
    \begin{subfigure}{0.4\linewidth}
        \centering
        \includegraphics[width=\linewidth]{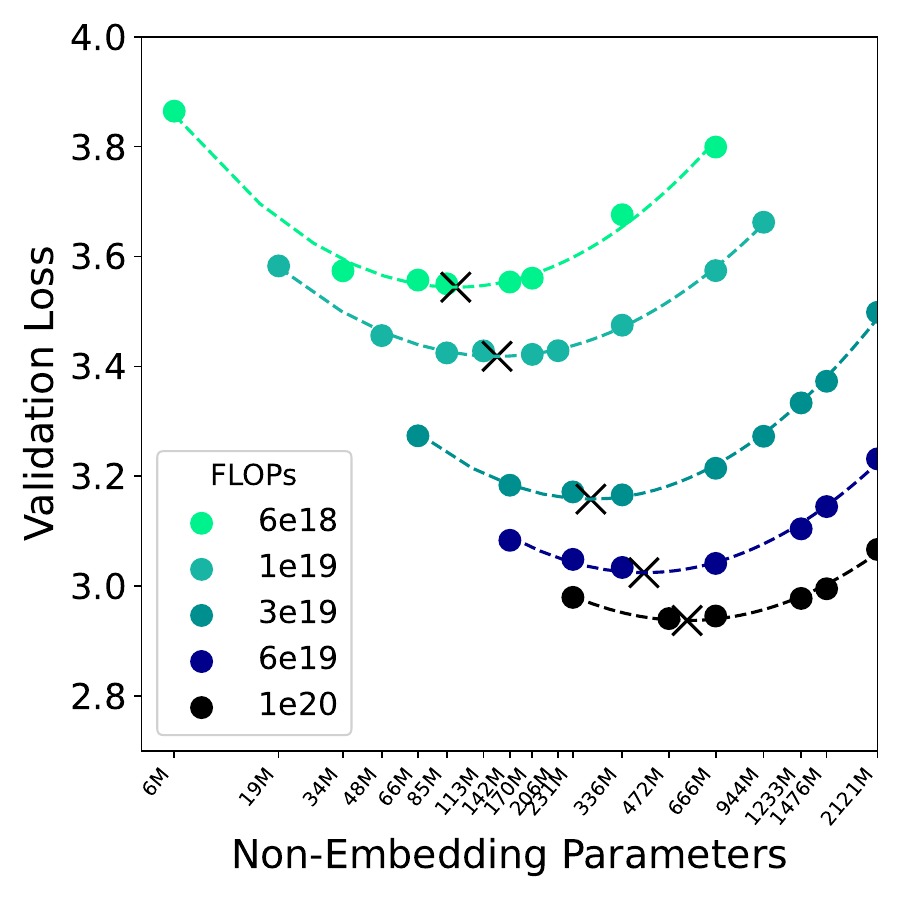}
        \caption{Duo}
        \label{fig:iso_flop_mdlm_appendix}
    \end{subfigure}
    
    \begin{subfigure}{0.4\linewidth}
        \centering
        \includegraphics[width=\linewidth]{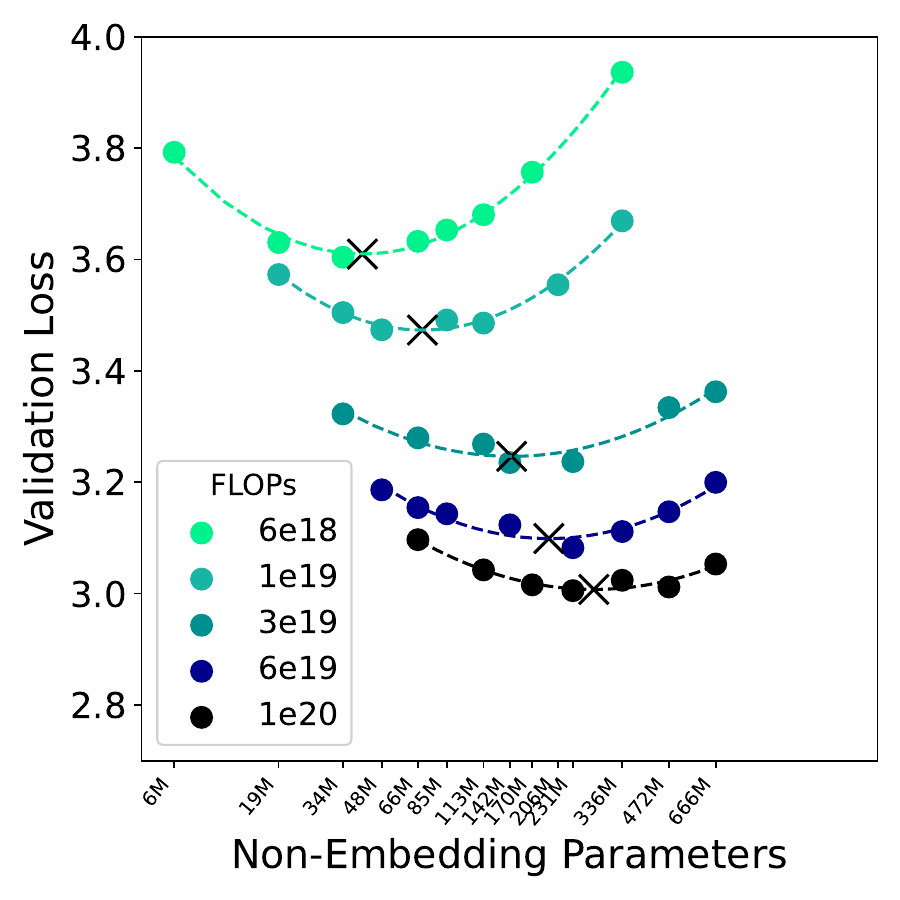}
        \caption{\textsc{RePlaid} (no s.c.).}
        \label{fig:iso_flop_replaid_appendix}
    \end{subfigure}
    \begin{subfigure}{0.4\linewidth}
        \centering
        \includegraphics[width=\linewidth]{figures_scaling/iso_flop_PLAID-V2-SC.pdf}
        \caption{\textsc{RePlaid} (s.c.).}
        \label{fig:iso_flop_replaid_sc_appendix}
    \end{subfigure}
    \caption{\small IsoFLOP curves plot optimal model sizes under fixed compute budgets. The optimal \textsc{RePlaid} loss exhibits power-law scaling, decreasing at a rate comparable to AR and MDLM. MDLM (low var.), Duo, \textsc{RePlaid} (s.c.), and \textsc{RePlaid} (no s.c.) exhibits $14\times$, $22\times$, $\replaidscx\times$, and $\replaidnscx\times$ worse scaling than AR respectively. In the over-trained region below the {\color[RGB]{80, 150, 63}\textbf{green line}}, \textsc{RePlaid} (s.c.) beats MDLM (low var.).}
    \label{fig:scaling_law_appendix}
\end{figure}

\vspace*{\fill}

\clearpage

\section{Exchanging Embedding Configurations of MDLM and RePlaid}

[Back to~\cref{subsec:scaling-law-results} in the main paper]

MDLM -- exchanged: $d_e=16$ with linear up-projection to 768 dimensions, length-normalized embeddings, higher initial learning rate for embeddings (\texttt{1e-2}), no weight decay for embeddings.

RePlaid (s.c.) -- exchanged: $d_e=768$ without linear up-projection, embeddings are not length-normalized, initial learning rate and weight decay same as the transformer backbone (\texttt{2e-4}; 0.1).

\begin{figure}[H]
    \centering
    \begin{subfigure}{0.4\linewidth}
        \centering
        \includegraphics[width=\linewidth]{figures_scaling/iso_flop_MDLM-LV.pdf}
        \caption{MDLM with (low var.)}
        \label{fig:iso_flop_mdlm_appendix_v2}
    \end{subfigure}
    \begin{subfigure}{0.4\linewidth}
        \centering
        \includegraphics[width=\linewidth]{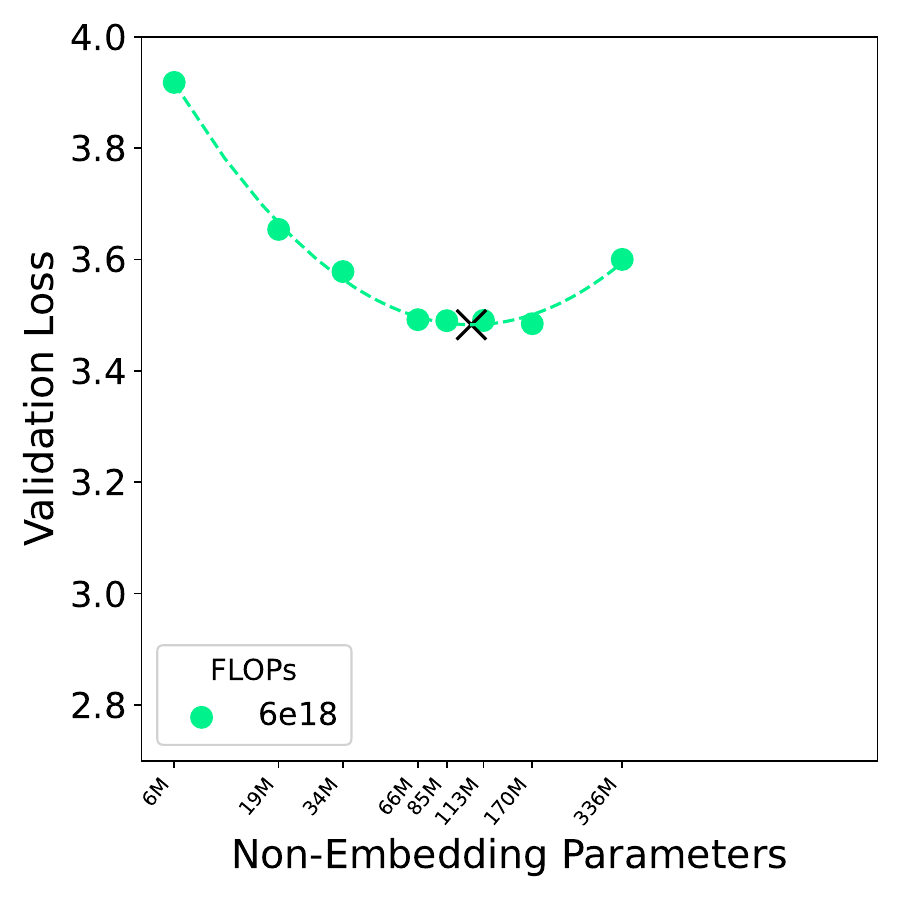}
        \caption{MDLM with (low var.) -- exchanged.}
        \label{fig:iso_flop_mdlm_exchange_appendix}
    \end{subfigure}
    
    \begin{subfigure}{0.4\linewidth}
        \centering
        \includegraphics[width=\linewidth]{figures_scaling/iso_flop_PLAID-V2.pdf}
        \caption{\textsc{RePlaid} (no s.c.).}
        \label{fig:iso_flop_replaid_appendix_v2}
    \end{subfigure}
    \begin{subfigure}{0.4\linewidth}
        \centering
        \includegraphics[width=\linewidth]{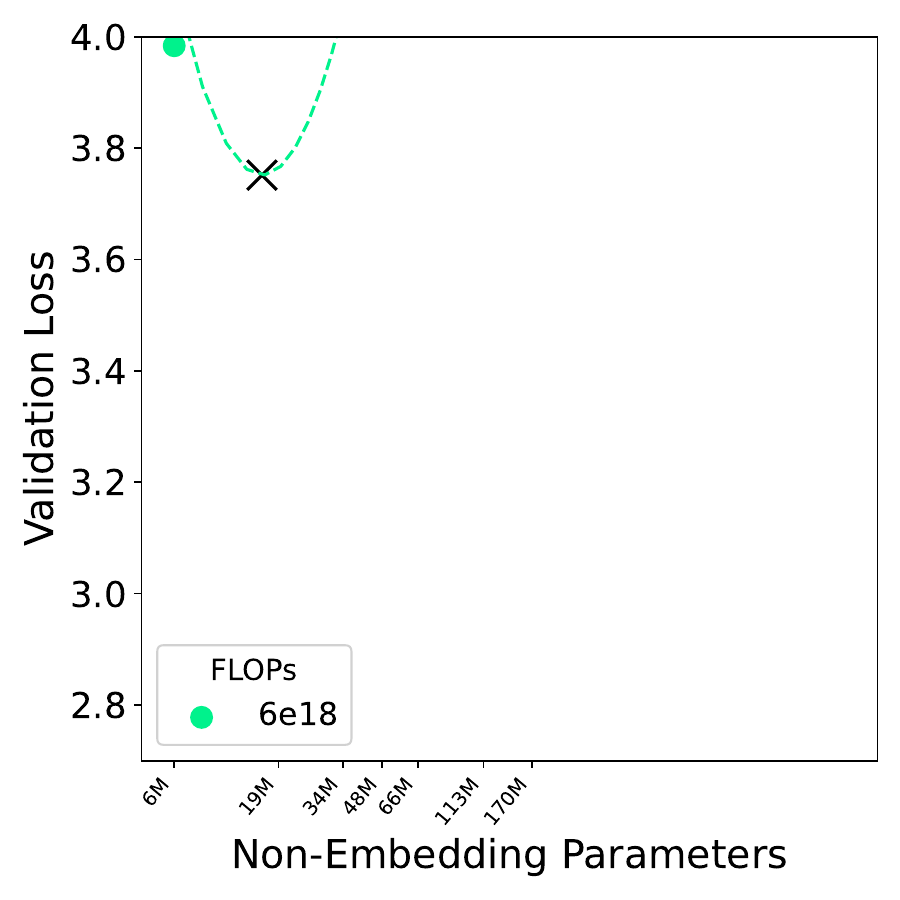}
        \caption{\textsc{RePlaid} (s.c.) -- exchanged.}
        \label{fig:iso_flop_replaid_sc_exchange_appendix}
    \end{subfigure}
    \caption{\small Exchanging the embedding configurations of MDLM and RePlaid (s.c.) degrades both methods. For comparison, we unify the vertical range of all methods and do not show points outside of this range (e.g., \textbf{(d)}).}
    \label{fig:exchange}
\end{figure}

\clearpage

\clearpage

\section{Training Dynamics}
\label{app:dynamics}

[Back to~\cref{subsec:likelihood-eval} in the main paper]

\begin{figure*}[h]
    \centering
    \includegraphics[width=0.45\linewidth]{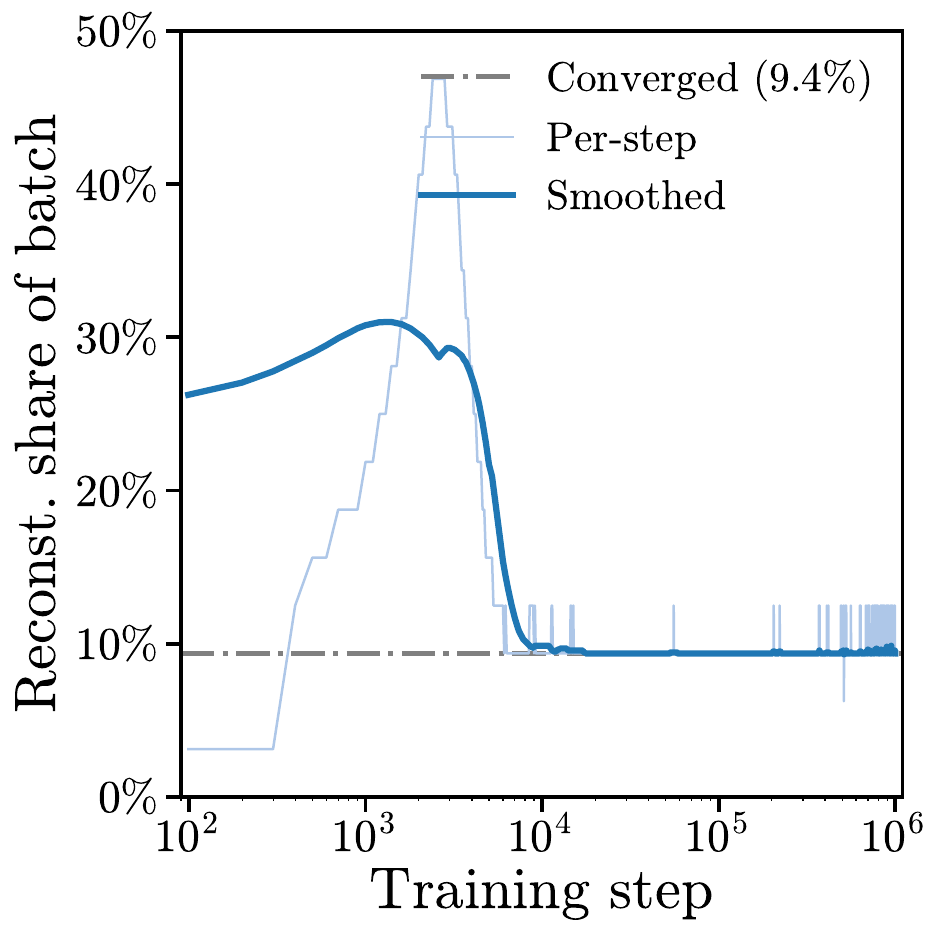}
    \caption{\small Fraction of global batch size (512) allocated to the reconstruction loss for the 0.1B RePlaid (s.c.) on OWT. Since we are using 16 GPU ranks, the per-rank batch size is 32. Since this fraction is tied across GPU ranks, this fraction can only take values in multiples of 3.125\%.}
    \label{fig:reconst-bs}
    \vspace{-0.5em}
\end{figure*}

\clearpage

\section{Sampling with Temperature}
\label{app:temp}

[Back to~\cref{subsec:sampling} in the main paper]

By default, all of our headline sampling comparisons are sampled at temperature $\tau = 1$, i.e., no temperature sampling. \citet{pynadath2025candi} discussed the necessity of performing \textit{temperature sweeps} and report \textit{GenPPL-entropy frontiers} at each sampling budget $T$, as GenPPL can be made artificially low (good) by collapsing entropy. We tackle this problem by reporting MAUVE~\citep{pillutla2021mauve}, a metric designed to capture both quality and diversity. Empirically, MAUVE assigns low scores to methods that achieve low GenPPL but low entropy (e.g., see FLM in~\cref{fig:sample-quality}). 

To complement the $\tau = 1$ comparison, we additionally sweep $\tau$ for each method to trace out its GenPPL--entropy Pareto frontier, following CANDI~\citep{pynadath2025candi}. We control $\tau$ by scaling the logits of $\x_\theta$ (as done in CANDI), rather than the score-temperature mechanism used in the original Plaid~\citep{dieleman2022continuous, gulrajani2023plaid}; we found score temperature to perform substantially worse at low NFE but slightly better at high NFE in the low-entropy regime. The final $\argmax$ readout at $t = 0$ is unaffected by $\tau$.

\begin{figure}[h]
    \centering
    \includegraphics[width=0.9\linewidth]{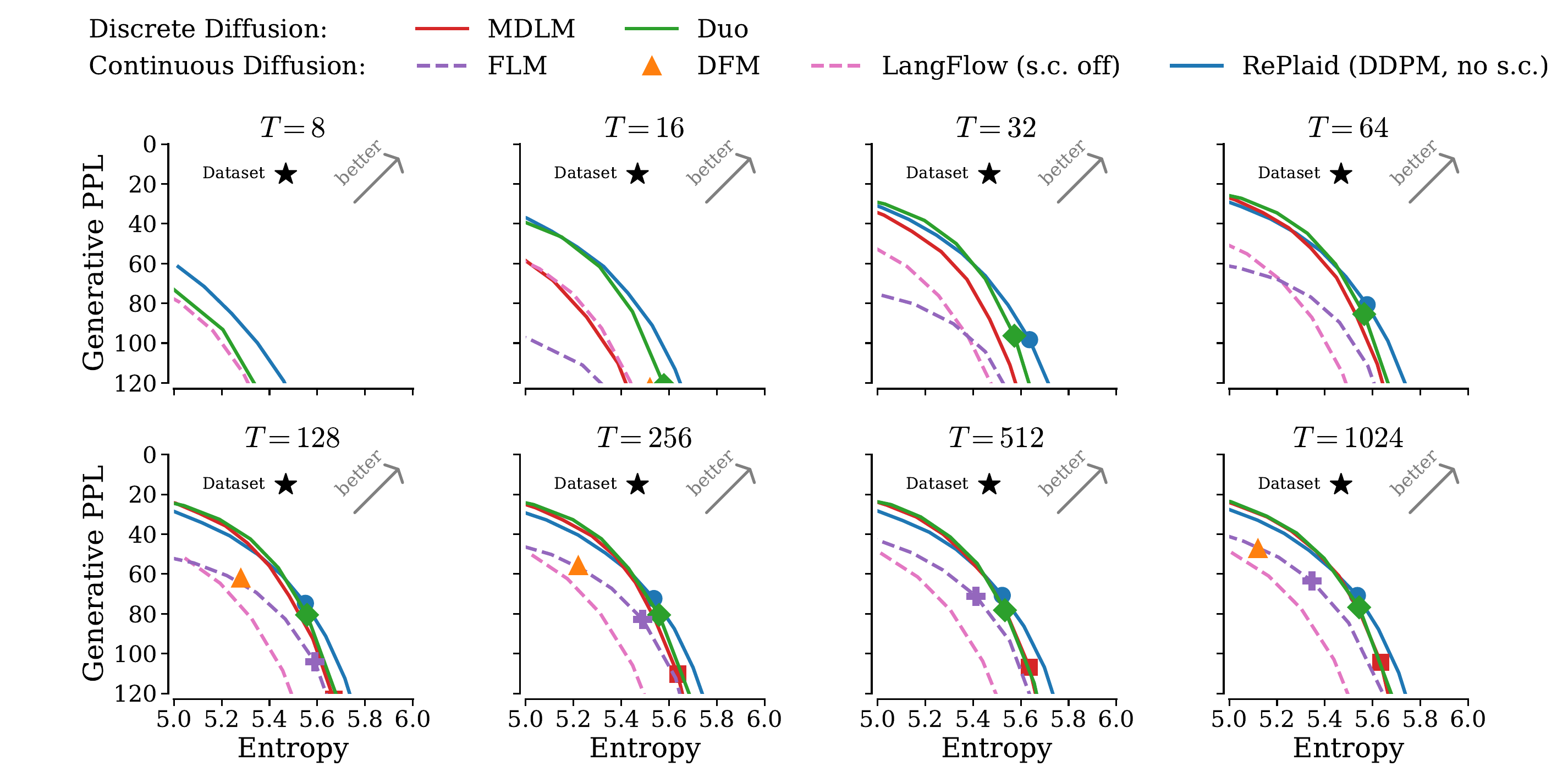}
    \caption{\small Quality-diversity trade-off of \textsc{RePlaid} against discrete and continuous DLM baselines for unconditional generation on OWT. Markers denote $\tau=1$. We use DFM numbers reported in~\citet{potaptchik2026discrete} ($T=512$ not available). \textsc{RePlaid} is competitive with discrete DLMs and surpasses prior continuous DLMs.} 
    \label{fig:placeholder}
\end{figure}

\begin{figure}[h]
    \centering
    \includegraphics[width=0.82\linewidth]{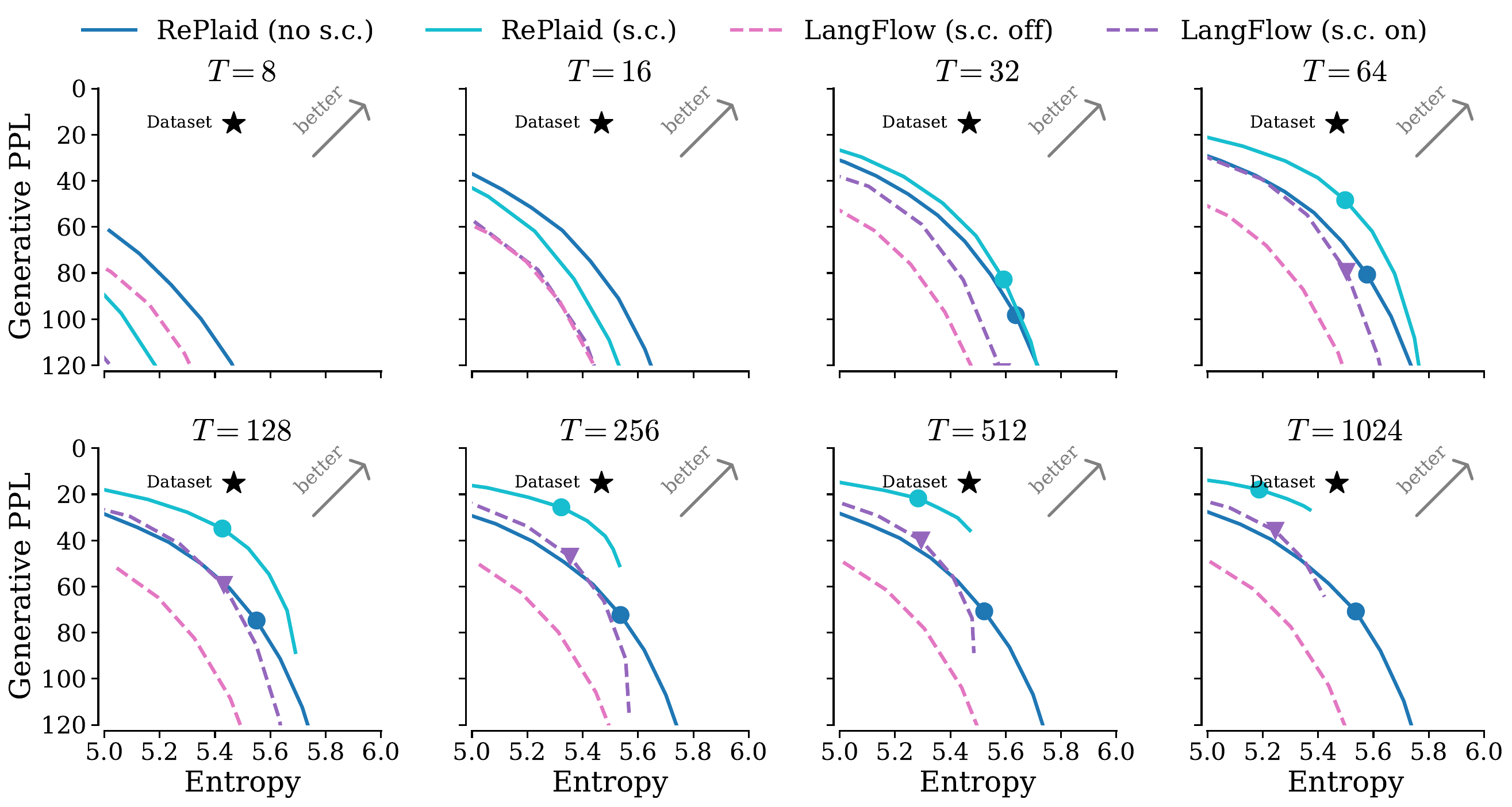}
    \caption{\small Quality-diversity trade-off of \textsc{RePlaid} against LangFlow for unconditional generation on OWT. Markers denote $\tau=1$. High $\tau$'s lead to degenerate samples for both LangFlow and \textsc{RePlaid} with self-conditioning on; parts of the curves corresponding to this degenerate behavior is omitted.}
    \label{fig:placeholder}
\end{figure}

\clearpage

\section{Computing Dominant POS Tags per GPT-2 Subword}
\label{supp:sec:spacy}

[Back to~\cref{subsec:geometric_reg} in the main paper]

To analyze embedding geometry conditioned on syntactic role, we precompute a dominant
Universal POS tag for each GPT-2 subword token over a reference corpus. The
challenge is that GPT-2 BPE splits words into subwords (e.g., \texttt{running}
$\rightarrow$ \texttt{\textvisiblespace run}, \texttt{ning}), whereas POS tags
are defined on whole words. We resolve this by aligning subwords to spaCy~\citep{honnibal2020spacy} word
spans via character offsets.

\subsection{Pipeline}

Given a corpus of token-id rows of length $L=1024$, we proceed as follows:

\begin{enumerate}
    \item \textbf{Decode.} Convert each row of token ids back to text using the
    GPT-2 tokenizer.
    \item \textbf{Tag.} Run spaCy's POS tagger on the decoded text to obtain
    word-level tags and their character offsets.
    \item \textbf{Re-encode.} Re-tokenize the decoded text with the fast GPT-2
    tokenizer to recover \texttt{(token\_id, char\_span)} pairs for each
    subword. We use the re-encoded ids rather than the original ids because
    chunk boundaries in the dataset may truncate multi-byte UTF-8 sequences;
    re-encoding guarantees that subword spans align with the actual text spaCy
    saw.
    \item \textbf{Align.} For each subword, inherit the POS tag of the spaCy
    word with which its character span maximally overlaps.
    \item \textbf{Tally.} Accumulate a count matrix
    $C \in \mathbb{Z}^{V \times P}$ where $V$ is the vocabulary size and
    $P{=}18$ tags (17 UPOS tags plus a \textsc{Space} bucket). The dominant
    tag for token $v$ is $\argmax_p C_{v,p}$.
\end{enumerate}

\subsection{Worked Example}

Consider the sentence \texttt{"The cat is purring loudly."} (26 characters).
spaCy produces the following word-level analysis (\texttt{idx} is the
character offset of the word's first character):

\begin{center}
\begin{tabular}{lcl}
\toprule
word & \texttt{idx} & POS \\
\midrule
\texttt{The}     & 0  & \textsc{Det}   \\
\texttt{cat}     & 4  & \textsc{Noun}  \\
\texttt{is}      & 8  & \textsc{Aux}   \\
\texttt{purring} & 11 & \textsc{Verb}  \\
\texttt{loudly}  & 19 & \textsc{Adv}   \\
\texttt{.}       & 25 & \textsc{Punct} \\
\bottomrule
\end{tabular}
\end{center}

The fast GPT-2 tokenizer returns subwords with character spans:

\begin{center}
\begin{tabular}{rll}
\toprule
id & subword & \texttt{char\_span} \\
\midrule
464   & \texttt{The}                       & $(0,3)$   \\
3797  & \texttt{\textvisiblespace cat}     & $(3,7)$   \\
318   & \texttt{\textvisiblespace is}      & $(7,10)$  \\
1308  & \texttt{\textvisiblespace pur}     & $(10,14)$ \\
1806  & \texttt{ring}                      & $(14,18)$ \\
23112 & \texttt{\textvisiblespace loudly}  & $(18,25)$ \\
13    & \texttt{.}                         & $(25,26)$ \\
\bottomrule
\end{tabular}
\end{center}

Note that the verb \texttt{purring} splits into two subwords
(\texttt{\textvisiblespace pur}, \texttt{ring}); the alignment procedure must
attribute both to the same spaCy word.

\paragraph{Alignment via a character-to-word map.} We allocate an integer
array $\mathtt{char\_to\_spacy}$ of length $|text|$ and fill index $j$ at
every character covered by spaCy word $j$, leaving whitespace gaps as $-1$:

\begin{center}
\resizebox{\linewidth}{!}{%
\begin{tabular}{r|cccccccccccccccccccccccccc}
char idx     & 0 & 1 & 2 & 3 & 4 & 5 & 6 & 7 & 8 & 9 & 10 & 11 & 12 & 13 & 14 & 15 & 16 & 17 & 18 & 19 & 20 & 21 & 22 & 23 & 24 & 25 \\
char         & T & h & e & \textvisiblespace & c & a & t & \textvisiblespace & i & s & \textvisiblespace & p & u & r & r & i & n & g & \textvisiblespace & l & o & u & d & l & y & . \\
\midrule
spacy idx    & 0 & 0 & 0 & $-1$ & 1 & 1 & 1 & $-1$ & 2 & 2 & $-1$ & 3 & 3 & 3 & 3 & 3 & 3 & 3 & $-1$ & 4 & 4 & 4 & 4 & 4 & 4 & 5 \\
\end{tabular}%
}
\end{center}

For each subword we slice $\mathtt{char\_to\_spacy}$ by its span, drop $-1$
entries, and take the majority spaCy index:

\begin{itemize}
    \item \texttt{\textvisiblespace pur} $(10,14)$:
        slice $= [-1, 3, 3, 3]$ $\Rightarrow$ word $3$ (\texttt{purring})
        $\Rightarrow$ \textsc{Verb}.
    \item \texttt{ring} $(14,18)$: slice $= [3, 3, 3, 3]$ $\Rightarrow$
        \textsc{Verb}.
    \item \texttt{\textvisiblespace cat} $(3,7)$: slice $= [-1, 1, 1, 1]$
        $\Rightarrow$ \textsc{Noun}.
\end{itemize}

Both subwords of \texttt{purring} are credited as \textsc{Verb} for this row.
Subwords whose span is entirely whitespace (all $-1$) are bucketed into a
special \textsc{Space} tag rather than discarded.

\subsection{Aggregation}

Across the corpus we accumulate
$C_{v,p} \mathrel{+}= 1$ for every (subword id $v$, inherited tag $p$) pair.
The dominant POS for token $v$ is
\[
\mathrm{dominant\_pos}(v) =
\begin{cases}
\argmax_p C_{v,p} & \text{if } \sum_p C_{v,p} > 0, \\
-1 & \text{otherwise (token never observed).}
\end{cases}
\]
We additionally retain the full count matrix $C$, enabling downstream analyses
such as POS entropy per token (a measure of syntactic ambiguity) in addition to
the hard argmax assignment.

\textbf{Implementation notes.} We use
\texttt{en\_core\_web\_sm} with the parser, NER, and lemmatizer disabled, and
raise \texttt{nlp.max\_length} to $10^6$ to accommodate long decoded chunks
(a 1024-token row decodes to roughly 4--6k characters).

\clearpage

\section{PCA Scree for $d_e=768$}

[Back to \cref{subsec:geometric_reg} in the main paper]

Here, RePlaid (s.c.) ($d_e=768$) uses the same settings as RePlaid (s.c.) ($d_e=16$) except for $d_e$.

\begin{figure*}[h]
    \centering
    \begin{subfigure}{0.45\linewidth}
        \centering
        \includegraphics[width=\linewidth]{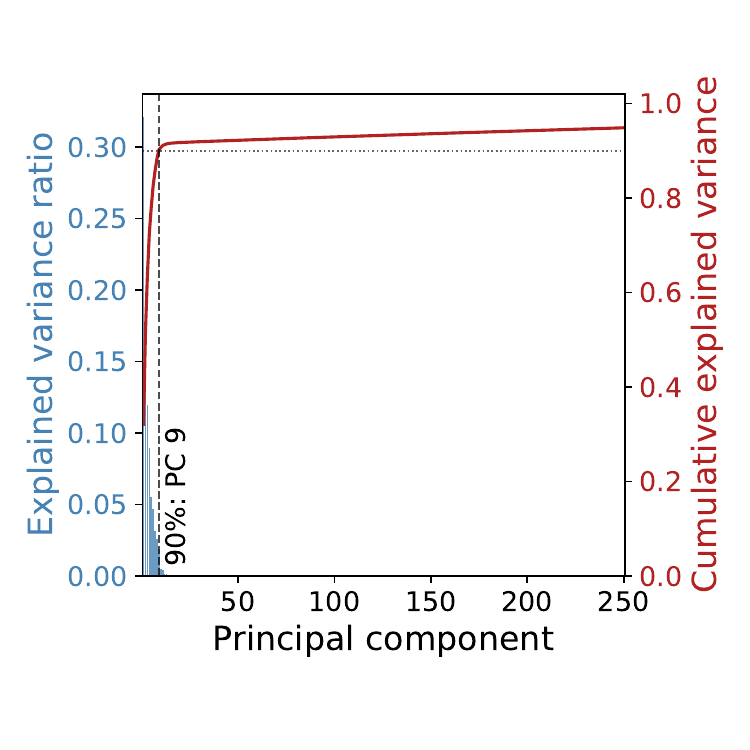}
        \caption{\centering\textsc{RePlaid} (s.c.) with $d_e=768$\\(\textbf{NELBO OWT PPL}: $26.1$ at 500K)}
        \label{fig:pca-scree-768-replaid}
    \end{subfigure}
    \begin{subfigure}{0.45\linewidth}
        \centering
        \includegraphics[width=\linewidth]{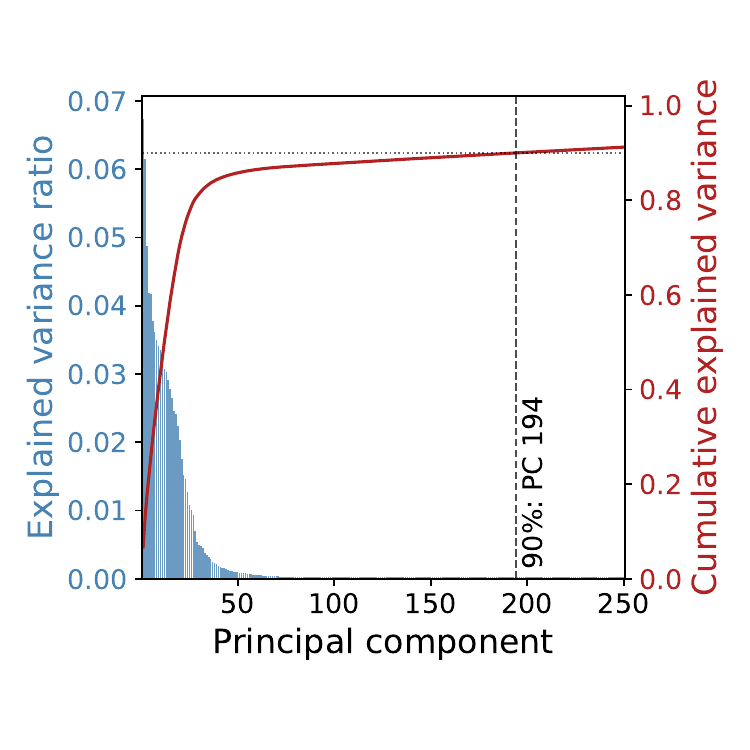}
        \caption{\centering LangFlow ($d_e=768$ by default)\\(\textbf{NELBO OWT PPL}: $32.2$ at 1M)}
        \label{fig:pca-scree-768-langflow}
    \end{subfigure}
    \caption{\small Comparing PCA scree plots for \textsc{RePlaid} (s.c.) and LangFlow, both using self-conditioning, $d_e=768$, and length-normalized embeddings. \textsc{RePlaid} (s.c.) yields a lower-rank embedding geometry while achieving a better PPL bound.}
    \label{fig:pca-scree-768}
    \vspace{-0.5em}
\end{figure*}

\begin{figure*}[h]
    \centering
    \includegraphics[width=0.45\linewidth]{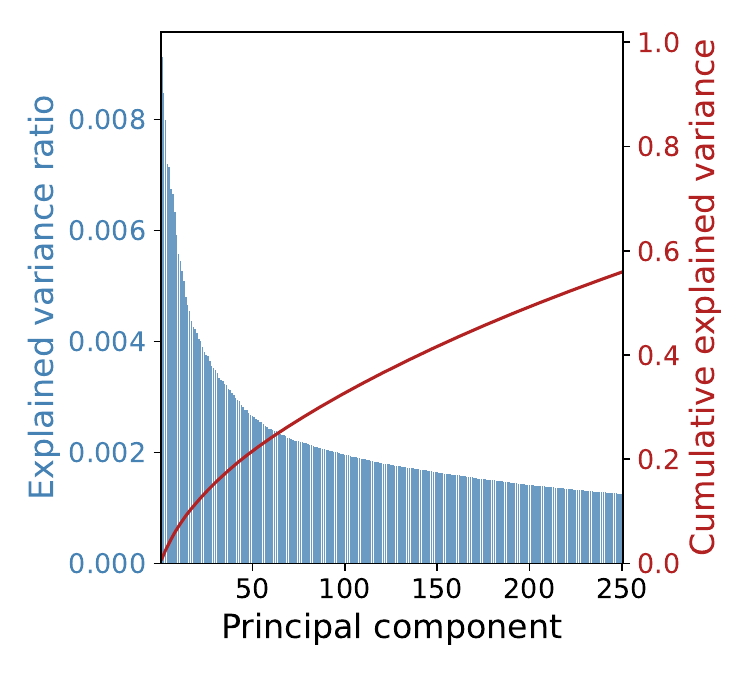}
    \caption{\small PCA scree plot for MDLM (low var.) ($d_e=768$ by default) trained on OWT for 1M steps.}
    \label{fig:pca-scree-768-mdlm}
    \vspace{-0.5em}
\end{figure*}

\clearpage

\section{Near-Linear Per-Timestep Cross-Entropy Loss for Diverse Embeddings}\label{sec:linear-ce-diverse}

[Back to~\cref{thm:linear_ce} of the main paper]

To faciliate a meaningful comparison with pretrained embeddings, we change the default tokenizer for OWT, GPT-2, and use \texttt{bert-base-cased} instead. Below we report the validation PPLs (250K steps) reached by RePlaid (s.c.) using different embeddings (learnable v.s. fixed, small v.s. large $d_e$): 

\begin{itemize}
    \item $d_e=16$ (learn embeddings): $25.9$;
    \item $d_e=16$ (freeze embeddings to be randomly initialized): $46.5$;
    \item $d_e=768$ (learn embeddings): $28.4$;
    \item $d_e=768$ (freeze embeddings to be randomly initialized): $69.8$;
    \item $d_e=768$ (use frozen BERT embeddings): $52.7$;
    \item $d_e=768$ (use frozen BERT embeddings -- normalized by Euclidean length): $50.2$;
    \item $d_e=V$ (use frozen one-hot embeddings -- inspired by~\citep{lee2026flow, potaptchik2026discrete, pynadath2025candi}): $64.9$.
\end{itemize}

We see that learning the noise schedule leads to a near-linear per-timestep cross-entropy loss regardless of the embedding geometry (\cref{fig:ce-vs-t-ablation}).

\begin{figure}[H]
  \centering
  \includegraphics[width=0.5\linewidth]{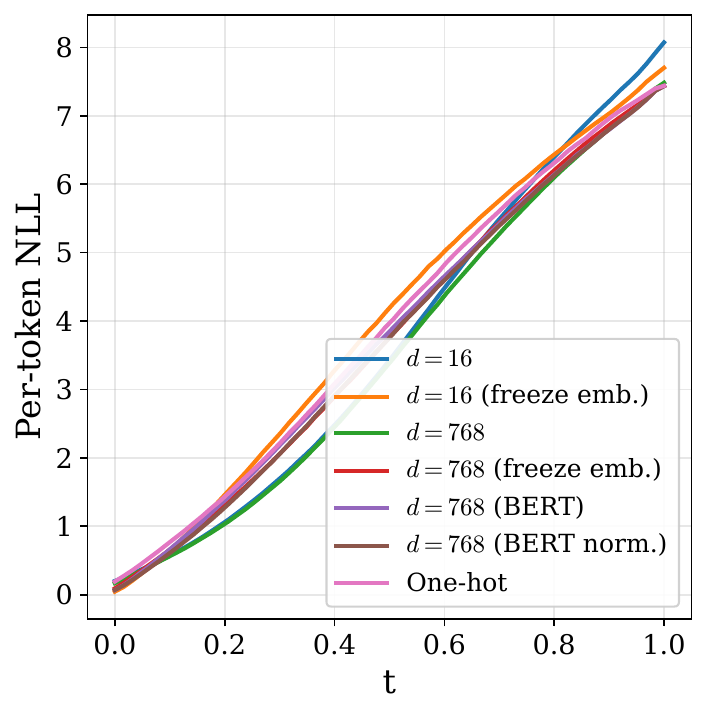}
  \caption{\small Learning the noise schedule leads to a near-linear per-timestep cross-entropy loss regardless of the embedding geometry.}
  \label{fig:ce-vs-t-ablation}
\end{figure}

\clearpage

\section{Text Samples Generated by RePlaid Trained on OWT}\label{app:samples}

[Back to~\cref{subsec:sampling} in the main paper]

We present un-cherry-picked, unconditional samples of length $L = 1024$ generated by RePlaid (s.c.) trained for 1M steps on OpenWebText.
Document boundaries (\textcolor{red}{\uline{<|endoftext|>}}) are highlighted; entities and concepts that the model tracks coherently across multiple sentences and paragraphs are highlighted in \textcolor{blue}{\textbf{blue bold}}. The samples demonstrate RePlaid's strong long-range topical consistency. Note that newline characters have been replaced by an empty line for readability.

\paragraph{Warning} Please be advised that the following un-cherry-picked text samples contain unmoderated content, including fictionalized descriptions of violence and sensitive political references.

\begin{samplebox}[title={\textsc{RePlaid} (s.c.), ancestral sampler, steps $T = 64$ (GenPPL: $48.5$, Entropy: $5.50$, MAUVE: $0.88$)}]
\vspace{1em}

\textcolor{red}{\uline{<|endoftext|>}} The plot focused on moving over pirate territory, but this was not for social support. The real objective was creating a PCCC. At this point, Kris mentions that players are not interested in pirates because they haven't actually seen this dangerous thing.

\vspace{1em}
Speaking of 5th character: Game Lead, I sadly did not get a chance to play like the princess. The art is available here.\textcolor{red}{\uline{<|endoftext|>}}Godcess Theft Auto 4 owners can gear up for the 7th version of the \textcolor{blue}{\textbf{PlayStation Matchbox}} tutorial series, which premiered again this year at the Official Site!

\vspace{1em}
For those of us veterans looking to get started, take a look at the unique \textcolor{blue}{\textbf{PlayStation Basic Packs}} that fall on you every time you get the holiday season underway. Find your Credit as well as bring your \textcolor{blue}{\textbf{PlayStation 4}} character to life and move into the attic.

\vspace{1em}
Spending the first 20 minutes on the two \textcolor{blue}{\textbf{Basic Packs}} makes get in adulthood a lot of good. And shudder at the thought of having one in to your sidebox this holiday season --- you can check out some of the amazing minimalistic \textcolor{blue}{\textbf{PlayStation Matchbox}} tutorials below:

\vspace{1em}
-Xbox Matchbox Repair for Super Boost Hunt \& Sparkle Strike

\vspace{1em}
-Theme Arcade Game: Everything You'll Need to Know to Get Started

\vspace{1em}
-References and Gags from the Holiday Past

\vspace{1em}
-When Mother Tanks Were Mumbles Into Enhancements and Super Stars

\vspace{1em}
-There's a Random \textcolor{blue}{\textbf{PlayStation}} Skinny Hunt\textcolor{red}{\uline{<|endoftext|>}}\textcolor{blue}{\textbf{Noogame-UPS}}: New Collaborations and Standouts to Share Preditional Center

\vspace{1em}
After announcing April 28, May 30 and June 20 finals at the low point of the academic year, this is the first time a high school graduate from the University of Michigan and faculty is welcome to participate in \textcolor{blue}{\textbf{Noogame}} on campus --- and person.

\vspace{1em}
``This collaboration recently reached a new rotation on the University of Michigan board,''* Scott praised Doug, Vice President of Engineering at \textcolor{blue}{\textbf{Noogame}}. ``So I'm excited for the unique opportunity to get such a young student to be part of a collaboration and then see the group focused on opportunities including advancing their scholarly careers.

\vspace{1em}
I'm envy to them their long homes, large family, tremendous work ethic, strong academic talents and all of the journeys they went through. I am looking forward to hearing from them and seeing their generation crystal clear. Together we all become lifelong friends and dedicated GOET Brothers --- collaborating along every stage of their GOET-related careers.''

\vspace{1em}
To support this collaboration, \textcolor{blue}{\textbf{Noogame-UPS}} Grants are available to be purchased online. Customer service and lodging are available year round.

\vspace{1em}
The \textcolor{blue}{\textbf{Noogame-UPS}} collaboration is the evolution of the Continuous Enterprise Initiative (SPA). \textcolor{blue}{\textbf{Noogame}} was formed in summer 2008 to recruit/create 24,000 faculty positions at the University of Michigan. GH is an organization established to encourage, practice, study and practice innovative designs and creative techniques. This three stage organization, based in Ann Arbor, Michigan, receives Federal Center for Design and Service Technology (CKeSAP) attention (Fall New Tier) and James' Alfred and Gordon Hastleditch Book Award.

\vspace{1em}
``While the \textcolor{blue}{\textbf{Noogame-UPS}} collaboration has garnered enormous support across MSU, other forms of collaborative action have taken years to gain momentum,'' said Mike Cosstead, GOET Associate Staff Pierester for Strategy \& GOET News. ``\textcolor{blue}{\textbf{Noogame}}'s successful collaborative design action attracts exceptional engineering talent and motivates FTendite's work, but right now, this is compromising student morale and creating a personal impact among members.''

\vspace{1em}
Members of the \textcolor{blue}{\textbf{Noogame-UPS}} Promotion are required to register by Feb. 30. For real-time work on the internship programs, and advanced analysis, participants will be expected to spend 15 minutes per meeting.

\vspace{1em}
What is \textcolor{blue}{\textbf{Noogame-UPS}}?

\vspace{1em}
This new GOET-sponsored initiative wants readers to know that the new \textcolor{blue}{\textbf{Noogame}} Zone 7 (translated as ``FTendite-go'') is August 7 in Ohio. The campuses of Zone 7 will cover Cincinnati and live in three cities; Boston, New Orleans and New York. Five campus communities in Ohio are also currently in construction --- Columbus, Cincinnati, Lansing, Baton Rouge, Grand Rapids, New Orleans and San Antonio.

\vspace{1em}
Two engineering projects, CUOC (Ohio Institute of Technology) and \textcolor{blue}{\textbf{Noogame Workshop}} are at the forefront of the planning of Zone 7. The CUOC Beta project is seeking to increase the usability and accessibility of complex physical projects and allow students to construct underutilized environments to maximize resiliency and safety. The project, which started with America's best-known Institute of Engineering, is working to replicate the aesthetics, design, feeling and experience of compact environments for high engineering research.

\vspace{1em}
Our chair FTendite Director Raul Steele and two Faculty Adolescors are currently writing a highly anticipated book for our high school programs ``Multiscreen Disciples: New America'' will argue for ways to push the boundaries of\textcolor{red}{\uline{<|endoftext|>}}
\end{samplebox}

\begin{samplebox}[title={\textsc{RePlaid} (s.c.), ancestral sampler, $T = 1024$ (GenPPL: $18.3$, Entropy: $5.21$, MAUVE: $0.85$)}]
\vspace{1em}

\textcolor{red}{\uline{<|endoftext|>}} not identified the location where the suspect went out with friends.

\vspace{1em}
Police did say the shooting happened on East Mickleman Avenue and Port Washington Boulevard.

\vspace{1em}
An image from the \textcolor{blue}{\textbf{Target store}} does not show the exact location where the victim was shot.

\vspace{1em}
The owner of the sale, Coach Properties, brought authorities to the \textcolor{blue}{\textbf{store}} because they are so close to shoppers.

\vspace{1em}
Police said they didn't have a security office to investigate at the time of the incident, but the nearest \textcolor{blue}{\textbf{store}} released a statement saying Saturday they will still open their doors.

\vspace{1em}
``We still express our condolences to the individual that lost life and are decamping for his recovery, with news breaking since the beginning of this week that his \textcolor{blue}{\textbf{store}} will indeed open their doors,'' Sgt. Juan Pecanelli said.

\vspace{1em}
Ariel Dudind, one of the company co-founders, said \textcolor{blue}{\textbf{Target stores}} have been growing fast since the shooting in the neighborhood.

\vspace{1em}
``Our \textcolor{blue}{\textbf{Target store}} has generated large margins that have caused significant inconvenience to our employees and customers,'' Dudind said.

\vspace{1em}
Witnesses leaving the scene told ProPublica the gunman may have been aiming at another person, and did turn out to be a 28-year-old man from Brooklyn, about two-blocks from the \textcolor{blue}{\textbf{store}}. Police say Aaron Gill, a 24-year old male, was also thought to be in the \textcolor{blue}{\textbf{store}}, and the gunman is still on the loose.

\vspace{1em}
Aaron Gill, an 18-year-old fourth-grade student at Wood Woods Middle School, arrived at the Port Washington \textcolor{blue}{\textbf{store}} at 8:30 a.m. Friday with friends, then left the room on the 3rd floor. Gill was gunned down on West Smith Place after he told police a friend.

\vspace{1em}
Saturday, a police helicopter arrived and a chase ensued. Police recovered what was described as a U.S. Unambortunity sweatshirt, with a jockine symbol on the shirt. The backpack was opened and the casing was searched and used in heists.

\vspace{1em}
Some of those not who called police:

\vspace{1em}
Officer Don Parker, a 25-year-old local resident who came over to the \textcolor{blue}{\textbf{store}} seeking deliveries, told police he was concerned.

\vspace{1em}
Officer Mark Clements, a 45-year-old neighbor who came to the area with friends and family, told police several men inside the building holding a weapon. Clements then saw a man still armed.

\vspace{1em}
``The associates knew what you were going to see, including myself,'' Pecanelli said. He said he saw three men with automatic rifles.

\vspace{1em}
A spokesman for Sgt. Juan Pecanelli declined to comment, saying officers participated in ``skeptist celebrations'' in the neighborhood.

\vspace{1em}
District Attorney David Souter said in a statement that Southridge-Euranto was ``deeply informed'' of the incident.

\vspace{1em}
``This incident is an incredible concern to law enforcement officers, and it's a chilling experience for everyone.'' Souter said.\textcolor{red}{\uline{<|endoftext|>}}Bob Corker Richard (Bob) Wayne CorkerThe U.S. Senate says Ohio can step up shutdown threat over blocked abortion referrals Trump campaign to push back with `just say yes' to 2020 bid MORE (R-Tenn.) on Monday announced changes to the first-come-compete \textcolor{blue}{\textbf{ObamaCare}} law that would provide members of Congress with up to \$1.1 billion to buy \textcolor{blue}{\textbf{health insurance}}.

\vspace{1em}
Americans covered in the new \textcolor{blue}{\textbf{plan}} would receive an average \$3.3 million in subsidies for each prescription covered per year. The subsidies make coverage far more affordable than \textcolor{blue}{\textbf{ObamaCare}}'s employer-sponsored \textcolor{blue}{\textbf{insurance}} expansion, which requires employers to charge only over the average of a family \$1,800 per year.

\vspace{1em}
Supporters of Corker's proposal swiftly angered Democrats by objecting to the health \textcolor{blue}{\textbf{plan}} as being purely cost-\textcolor{blue}{\textbf{insurance}}, where the government indirectly retains revenues from premiums.

\vspace{1em}
``This is a single window approach,'' said John Snook, an analyst with Prudential Advisor, a Virginia-based brokerage. ``That's exactly what the Republican Congress wants.''

\vspace{1em}
ADVERTISEMENT

\vspace{1em}
The news comes just days after Congress repeals President Trump's revised version of \textcolor{blue}{\textbf{ObamaCare}}.

\vspace{1em}
Views have traditionally tilt toward revamping \textcolor{blue}{\textbf{ObamaCare}}, but in light of Corker's announcement, the political line has shifted significantly.

\vspace{1em}
Both Democrats and Republicans are divided over where the \textcolor{blue}{\textbf{plan}} depurs on the precedent-setting new federal health cost-\textcolor{blue}{\textbf{insurance}}, analysts said.

\vspace{1em}
``This is still viewed as the tiniest way to do it,'' said Snook, an economist at MIT's Tableau Institute, which tracks \textcolor{blue}{\textbf{health insurance}} options.

\vspace{1em}
Experts have estimated the Trump-McGrigley arrangement, which allows people to cover only medical costs per year, is 10 billion to trillion dollars (\$38 billion) less than employer-sponsored \textcolor{blue}{\textbf{insurance}}\textcolor{red}{\uline{<|endoftext|>}}
\end{samplebox}

\clearpage

\end{document}